\def\eqref#1{equation~\ref{#1}}
\def\1{\bm{1}}
\DeclareMathAlphabet{\mathsfit}{\encodingdefault}{\sfdefault}{m}{sl}
\SetMathAlphabet{\mathsfit}{bold}{\encodingdefault}{\sfdefault}{bx}{n}
\newcommand{\added}[2][]{#2}
\newcommand{\deleted}[2][]{}
\newcommand{\comment}[2][]{}
\newcommand{\listofchanges}[1][]{}
\newcommand{\definechangesauthor}[2][]{}
\definecolor{codebg}{rgb}{0.95,0.95,0.95}
\definecolor{purple}{rgb}{0.65, 0.5, 1.0}
\definecolor{light}{rgb}{0.82, 0.75, 1.0}
\definecolor{mid}{rgb}{0.75,0.58,1.0}
\definecolor{lighter}{rgb}{0.9, 0.78, 1.0}
\theoremstyle{plain}
\newtheorem{theorem}{Theorem}[section]
\newtheorem{lemma}[theorem]{Lemma}
\newtheorem{corollary}[theorem]{Corollary}
\theoremstyle{definition}
\theoremstyle{remark}
\newtheorem{remark}[theorem]{Remark}
\title{A Unified Theory of Sinusoidal Activation Families for Implicit Neural Representations}
\author{\name Alireza Morsali \email alireza.morsali@mail.mcgill.ca \\
\addr McGill University
\AND
\name MohammadJavad Vaez
\email{mvaez@student.unimelb.edu.au,\quad mjvaez.cs@gmail.com}\\
\addr University of Melbourne\\
\addr ARC Centre of Excellence for the Mathematical Analysis of Cellular Systems (MACSYS)
\AND
\name Mohammadhossein Soltani \email mohammad.hossein19soltani@gmail.com \\
\addr Independent Researcher
\AND
\name Amirhossein Kazerouni \email amirhossein@cs.toronto.edu \\
\addr University of Toronto\\
Vector Institute\\
University Health Network
\AND
\name Babak Taati \email babak.taati@uhn.ca \\
\addr University of Toronto\\
Vector Institute\\
University Health Network
\AND
\name Morteza Mohammad-Noori \email mmnoori@ut.ac.ir \\
University of Tehran
}
\begin{document}

\maketitle

\begin{abstract}
\vspace{-1em}
Implicit Neural Representations (INRs) model continuous signals with compact neural networks and have become a standard tool in vision, graphics, and signal processing. A central challenge is accurately capturing fine detail without heavy hand-crafted encodings or brittle training heuristics. Across the literature, periodic activations have emerged as a compelling remedy: from SIREN, which uses a single sinusoid with a fixed global frequency, to more recent architectures employing multiple sinusoids and, in some cases, trainable frequencies and phases. 
We study this \emph{family} of sinusoidal activations and develop a principled theoretical and practical framework for trainable sinusoidal activations in INRs. Concretely, we instantiate this framework with \textbf{S}inusoidal \textbf{T}rainable \textbf{A}ctivation \textbf{F}unctions \textbf{(STAF)}, a Fourier-like activation whose amplitudes, frequencies, and phases are learned. Our analysis (i) establishes a Kronecker-equivalence construction that expresses trainable sinusoidal activations with standard sine networks and quantifies expressive growth, (ii) characterizes how the Neural Tangent Kernel (NTK) spectrum changes under trainable sinusoidal parameterization, and (iii) provides an initialization that yields standard normal post-activations without asymptotic central limit theorem (CLT) arguments. 
Empirically, on images, audio, shapes, inverse problems (super-resolution, denoising) and NeRF, \added{STAF is competitive and often stronger on distortion-oriented reconstruction metrics such as PSNR/SSIM across the evaluated INR tasks, with favorable parameter efficiency under layer-wise sharing.} While periodic activations can alleviate \emph{practical manifestations} of spectral bias, our results indicate they do not eliminate it; instead, \added{trainable sinusoids can improve the observed capacity–optimization trade-off in the evaluated settings.}
\noindent\textbf{Project page:} \url{https://alirezamorsali.github.io/staf/}
\end{abstract}

\addtocontents{toc}{\protect\setcounter{tocdepth}{-1}} 
\section{Introduction}
Implicit Neural Representations (INRs) approximate continuous signals by mapping coordinates to signal values via multilayer perceptrons (MLPs). INRs underpin modern view synthesis, 3D geometry, and compact signal representations~\citep{mildenhall2020nerf,rahaman2019spectral,Siren,tancik2020fourier,liu2024finer,kazerouni2024incode,saragadam2023wire}. A recurring observation is that standard ReLU networks exhibit a training-time preference toward low frequencies~\citep{rahaman2019spectral}, which complicates faithful recovery of high-detail content unless aided by positional encodings or architectural changes.

A parallel thread replaces or augments activations themselves. SIREN~\citep{Siren} showed that a global sinusoid dramatically improves fidelity, and follow-up work explored alternative periodic bases (e.g., Gabor-like~\citep{saragadam2023wire}) and multi-sine constructions; some of these make the sinusoidal parameters trainable. Collectively, these results suggest that \emph{sinusoidal activations} are a broad and useful family for INRs.

In this paper, we organize and extend this line of work by providing theory and practice for \emph{trainable} sinusoidal activations. We instantiate the framework with \textbf{S}inusoidal \textbf{T}rainable \textbf{A}ctivation \textbf{F}unction \textbf{(STAF)}, which represents each activation by a Fourier-like series with learnable amplitudes, frequencies, and phases. Our goals are twofold:
(i) \textbf{Theoretical grounding.} We formalize how trainable sinusoids increase effective frequency support and how this affects the Neural Tangent Kernel (NTK) spectrum and optimization dynamics. (ii) \textbf{Practical recipe.} We provide a layer-wise shared activation parameterization and a simple initialization producing unit-variance post-activations. \added{In practice, the theory suggests three concrete guidelines: use STAF when fine detail or mixed frequencies are important, prefer layer-wise sharing for parameter efficiency, and treat $\tau$ as a capacity knob that should remain small by default unless quality gains justify the extra compute.} \textbf{Contributions:}
\begin{itemize}[leftmargin=*]
\item \textbf{Unified view of sinusoidal activations.} We position SIREN and subsequent multi-sinusoid variants within a common class and study \emph{trainable} sinusoidal activations for INRs, clarifying when and why they help.
\item \textbf{Kronecker-equivalence \& expressive growth.} We prove that networks with trainable sinusoidal activations admit an equivalent construction using plain sine activations with structured (Kronecker) weights. This yields quantifiable growth of the set of potential frequencies by a factor of $\tau^K$ (Theorems~\ref{Kronecker_theorem}, \ref{asymptotic_behavior}).
\item \textbf{Capacity--convergence analysis via NTK.} We analyze how trainable sinusoids reshape the NTK eigen-spectrum and relate larger leading eigenvalues to faster learning of specific components, providing criteria that connect activation parameters to convergence behavior.
\item \textbf{Initialization without CLT assumptions.} We give an initialization that directly yields unit-variance post-activations for sinusoidal series (Theorem~\ref{initialization_theorem}), removing dependence on distributional approximations.
\item \textbf{Empirical study across tasks.} On images, audio, shapes, inverse problems, and NeRF, STAF is competitive and often better in PSNR/SSIM and convergence speed, with favorable parameter efficiency under layer-wise sharing. These advancements stem from faster convergence and enhanced accuracy, \added{“positioning STAF as a strong and efficient alternative within the family of sinusoidal INR activations}, including INCODE~\citep{kazerouni2024incode}, FINER~\citep{liu2024finer}, WIRE~\citep{saragadam2023wire}, SIREN~\citep{Siren}, Gaussian~\citep{ramasinghe2022beyond}, FFN~\citep{tancik2020fourier}.
\end{itemize}

\label{sec:intro}

\begin{figure*}[t]
    \centering
    \includegraphics[width=0.8\textwidth, keepaspectratio]{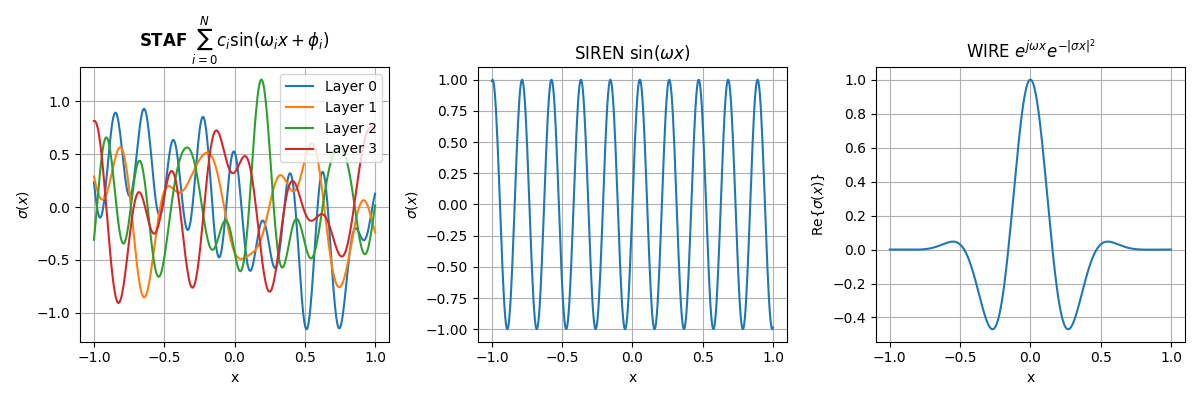} 
    \vspace{-1em}
    \caption{Activation functions used in INRs plotted over the range [-1, 1]. STAF utilizes a parameterized Fourier series activation, offering flexible frequency-domain adaptation. SIREN employs a sinusoidal function, providing a periodic activation landscape. WIRE employs a complex Gabor wavelet activation, balancing spatial and frequency localization.}
    \label{fig:activation}
\end{figure*}

\section{Related Works}
\label{sec:relatedworks}

INRs have been applied to a wide range of signals (including images, audio, signed distance fields (SDFs), and compressed representations) by fitting coordinate MLPs to continuous targets. Early work with sine activations demonstrated that periodic nonlinearities can substantially boost fidelity in coordinate-based modeling~\citep{Siren}. Broader architectural strategies further improved capacity and efficiency, including dual-MLP modulation~\citep{mehta2021modulated}, dividing inputs into grids or ensembles~\citep{aftab2022multi,kadarvish2021ensemble}, and adaptive allocation of model resources~\citep{acorn,saragadam2022miner}.

For 3D scenes, NeRF couples INRs with differentiable volume rendering~\citep{mildenhall2020nerf}, inspiring numerous advances that trade off fidelity, speed, and memory. These include higher-quality rendering and antialiasing~\citep{barron2023zip,wu2023neural}, factorized or accelerated pipelines~\citep{chen2024far,reiser2021kilonerf}, Jacobian-aware or regularized training~\citep{xu2023jacobinerf}, fast super-resolution and compact encodings~\citep{lin2024fastsr,kazerouni2024incode}, and extensions to more general or dynamic settings~\citep{uy2024nerf,li2025nerf}.

Activation design has shaped neural network trainability and expressivity. Saturating sigmoids suffer from vanishing gradients, motivating unbounded or piecewise-linear alternatives such as ReLU and its variants~\citep{nair2010rectified,maas2013rectifier,elfwing2018sigmoid,hendrycks2016gaussian}. Trainable or adaptive activations (e.g., Swish, TanhSoft, SinLU) further tailor nonlinearity to data~\citep{ramachandran2017searching,biswas2021tanhsoft,paul2022sinlu}. In the INR setting, training dynamics interact with frequency content: ReLU-based networks tend to fit lower frequencies first~\citep{rahaman2019spectral}. This observation motivated periodic activations, which embed high-frequency structure directly into the activation space. Although early attempts at periodic networks highlighted practical training difficulties~\citep{lapedes1987nonlinear,parascandolo2016taming}, subsequent INR work demonstrated stable recipes and strong reconstructions with sinusoidal activations and modulated architectures~\citep{Siren,mehta2021modulated}. More recently, variable-periodic activations explicitly \emph{tune} spectral bias in INRs~\citep{liu2024finer}, and training protocols for sinusoidal networks have been refined for robustness and stability~\citep{novello2025tuner}.

Beyond changing the activation itself, explicit frequency mappings inject sinusoidal structure at the input. Classical Fourier Neural Networks proposed trigonometric expansions inside the network~\citep{gallant1988there}, while modern Fourier feature mappings provide a simple, widely adopted mechanism for enriching frequency support in INRs~\citep{tancik2020fourier}. These approaches are complementary to periodic activations and are often combined in practice. \added{More recently, Fourier Learning Machines (FLMs)~\citep{rubel2025fourier} study explicit Fourier-output architectures, in which the network is designed so that its output directly represents a multidimensional nonharmonic Fourier series with learnable frequencies, amplitudes, and phase shifts, including a complete separable-basis formulation in multiple dimensions. In contrast, our focus is not on parameterizing the final predictor as an explicit Fourier series, but on trainable sinusoidal hidden activations for deep coordinate MLPs used as INRs. Thus, Fourier features act at the input level, FLMs at the output-architecture level, whereas STAF addresses activation design, expressivity, initialization, and optimization within standard INR backbones.}

Recent work also explores alternative functional parameterizations for coordinate models, including Kolmogorov--Arnold Networks (KANs) and polynomial KAN variants~\citep{liu2024kan,ss2024chebyshev}. \added{Such methods target expressive yet structured representations, but they differ from sinusoidal INR models in both mechanism and inductive bias. KANs replace linear weights by learnable univariate edge functions, typically spline-parameterized, and thus define an alternative computational graph to standard MLPs rather than a new activation family. STAF, in contrast, preserves the standard INR MLP architecture and changes only the hidden activation family to a trainable sinusoidal series. Consequently, KANs are best viewed as an alternative to MLP parameterization itself, whereas STAF is an activation design within the MLP/INR paradigm, specifically aimed at learned periodic nonlinearities.}

\section{A unified view of sinusoidal activations}
We formulate \emph{STAF} as a family of sinusoidal activation functions that \emph{encompasses} prior INR activations, from SIREN’s single fixed-frequency sine to multi-sine and trainable-frequency variants ~\citep{liu2024finer,novello2025tuner}. This framing enables us to study common mechanisms (expressivity, initialization, and optimization dynamics) without enumerating each prior design choice. For clarity and to avoid repetition, we, therefore, use \emph{SIREN as the canonical base model} in comparisons and discussions, noting that it arises as a special case within our formulation.
\subsection{INR Problem Formulation}

INRs employ MLPs as a method for representing continuous data.
\added{Let $\boldsymbol{r} \in \mathbb{R}^D$} denote the input coordinate (e.g., a pixel coordinate, spatial point, or spatiotemporal location).
At the core of INR is the function $ f_{\boldsymbol{\theta}}: \mathbb{R}^{F_0} \rightarrow \mathbb{R}^{F_L} $, where $ F_0 $ and $ F_L $ represent the dimensions of the input and output spaces, respectively, and $ \boldsymbol{\theta} $ denotes the parameters of the MLP. The objective is to approximate a target function \added{$ g(\boldsymbol{r}) $ such that $ g(\boldsymbol{r}) \approx f_{\boldsymbol{\theta}}(\boldsymbol{r}) $. For example, in image processing, $ g(\boldsymbol{r}) $} could be a function mapping pixel coordinates to their respective values.

As mentioned in \citep{yuce2022structured}, the majority of INR architectures can be decomposed into a mapping function $\gamma:\mathbb{R}^D \rightarrow \mathbb{R}^T$ followed by an MLP, with weights $\boldsymbol{W^{(l)}} \in \mathbb{R}^{F_l\times F_{l-1}}$ and activation function  $\rho^{(l)}: \mathbb{R}\rightarrow \mathbb{R}$, applied element-wise at each layer $l = 1,\ldots,L-1$. In other words, if we represent $\boldsymbol{z^{(l)}}$ as the post-activation output of each layer, most INR architectures compute
\begin{align} \label{Network}
\boldsymbol{z^{(0)}} &= \gamma(\boldsymbol{r}), \nonumber\\
\boldsymbol{z^{(l)}} &= \rho^{(l)}(\boldsymbol{W^{(l)}}\boldsymbol{z^{(l-1)}}+\boldsymbol{B^{(l)}}), \quad l= 1,..., L- 1, \\
f_{\boldsymbol{\theta}}(\boldsymbol{r}) &= \boldsymbol{W^{(L)}}\boldsymbol{z^{(L-1)}}+\boldsymbol{B^{(L)}}. \nonumber
\end{align} 
Additionally, corresponding to the $i$'th neuron of the $l$'th layer, we employ the symbols $a^{(l)}_i$ and $z^{(l)}_i$ for the pre-activation and post-activation functions, respectively. The choice of the activation function $ \rho $ is pivotal in INR, as it influences the network's ability to represent signals. Traditional functions, such as ReLU, may not effectively capture high-frequency components. The novel parametric periodic activation function, i.e., STAF, enhances the network's capability to accurately model and reconstruct complex, high-frequency signals.

\subsection{STAF: SINUSOIDAL TRAINABLE ACTIVATION FUNCTION}
We adopt a Fourier–like activation drawn from the broader family of sinusoidal activations used in INRs (see~\cref{fig:activation}):
\begin{equation} \label{STAF}
\rho^*(x) = \sum_{i=1}^\tau C_i \sin(\Omega_i x + \Phi_i),
\end{equation}
where \(C_i\), \(\Omega_i\), and \(\Phi_i\) represent the \textit{amplitude}, \textit{frequency}, and \textit{phase} parameters, respectively. These parameters are learned dynamically during training, enabling the network to adapt its activation function to the specific characteristics of the signal being modeled. The use of a Fourier series is motivated by its ability to represent signals efficiently, capturing essential components with a small number of coefficients. This adaptability allows STAF to provide a compact and flexible representation for complex patterns in various tasks. 

\begin{figure*}[t]
  \centering
  \includegraphics[width=0.7\textwidth]{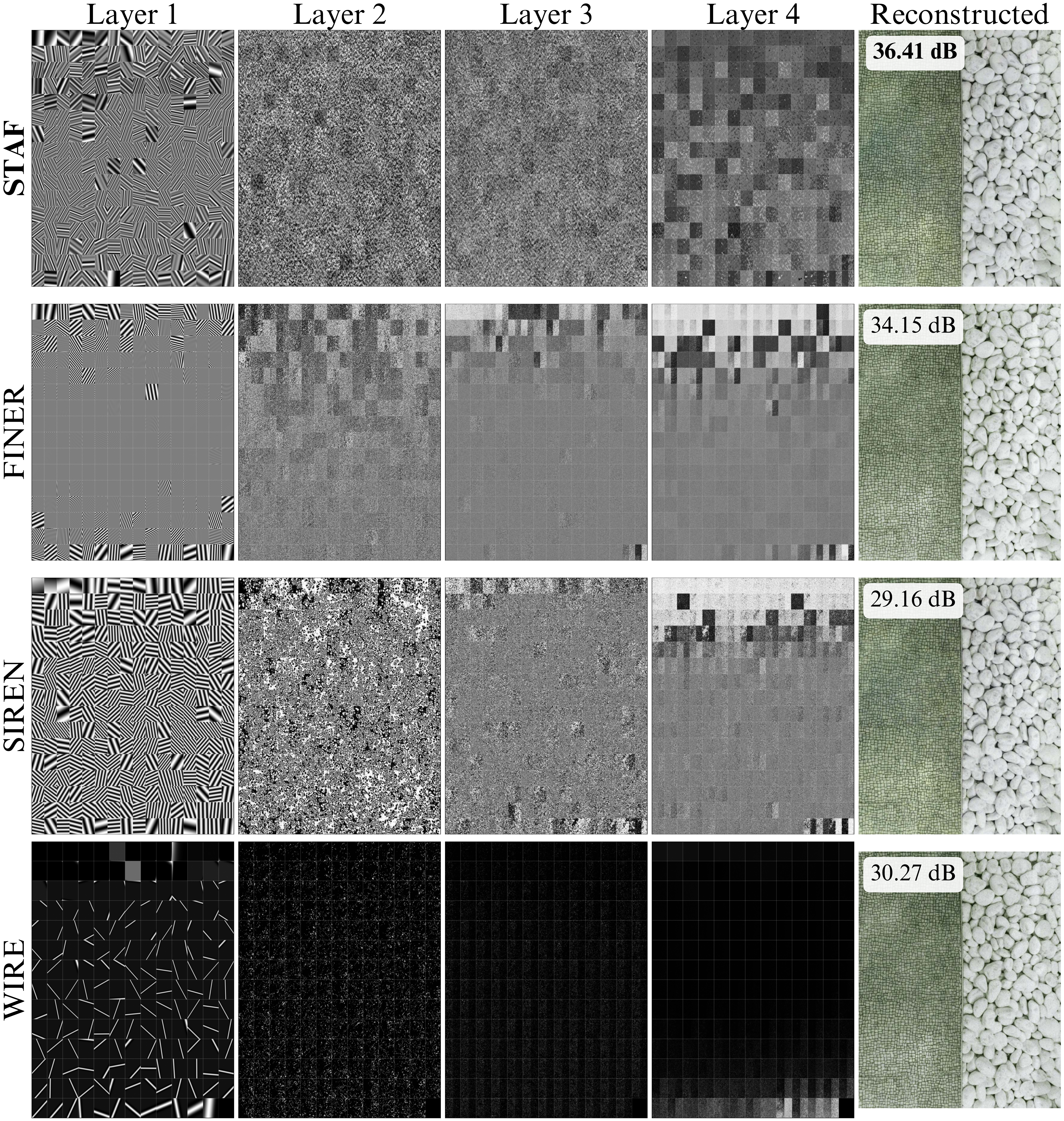}
  \caption{\textbf{Activation maps of STAF, FINER, SIREN, and WIRE learned during image reconstruction.} \textbf{STAF} preserves rich, spatially diverse activations across depth (from fine gratings to mid-level textures) and avoids late-layer blocky/saturated patterns, which correlates with the highest reconstruction quality (PSNR shown). In contrast, \textbf{FINER} becomes progressively smoother and more block-structured, while \textbf{SIREN} and \textbf{WIRE} exhibit periodic or sparse responses and deeper-layer collapse, leading to lower PSNR.}
  \label{fig:map}
\end{figure*}

\subsection{STAF Training Process}
During training, STAF optimizes not only the traditional MLP parameters (weights and biases), but also the coefficients of the activation function. This dual optimization approach ensures that the network learns both an optimal set of transformations (through weights and biases) and an ideal way of activating neurons (through the parametric activation function) for each specific task. The training employs a reconstruction loss function designed to minimize the difference between the target function $g(\boldsymbol{x})$ and the network's approximation $f_{\boldsymbol{\theta}}(\boldsymbol{x})$, while also encouraging efficient representation inspired by Fourier series.
\subsection{Implementation Strategies} 
\label{sec:implement_str}
The implementation of STAF's parametric activation functions can be approached in three ways:

\ding{202} \textbf{Individual Neuron Activation:} This method assigns a unique activation function to each neuron. It offers high expressiveness but leads to a significant increase in the number of trainable parameters, making it impractical for large networks due to potential overfitting and computational inefficiencies.

\ding{203} \textbf{Uniform Network-wide Activation:} Here, a single shared activation function is used across the entire network. This approach simplifies the model by reducing the number of additional parameters but limits the network's expressiveness and adaptability. It may struggle to capture diverse patterns and details in complex signals.

\ding{204} \textbf{Layer-wise Shared Activation:} This balanced strategy employs a distinct shared activation function for each layer, which is used for all experiments in this paper. For example, in a 3-layer MLP with $\tau=25$ terms, only 225 additional parameters are required. This approach optimally balances expressiveness and efficiency, allowing each layer to develop specialized activation dynamics for the features it processes. It aligns with the hierarchical nature of MLPs, where different layers capture different signal abstractions, providing an efficient learning mechanism tailored to each layer's role.


\subsection{Initialization} 
\label{sec:model-initialization}
In this section, we present an initialization strategy tailored for networks utilizing STAF as the activation function. While STAF shares similarities with SIREN~\citep{Siren}, which employs \(\sin\) as its activation function, our initialization scheme is specifically designed to leverage the unique parameterization of STAF. To provide context, we first revisit the key aspects of SIREN's initialization scheme as discussed in~\citep{Siren}, and then highlight how our approach builds upon and extends these principles to enhance network performance and stability.

In SIREN, the input $X$ of a single neuron follows a uniform distribution $U(-1,1)$, and the activation function employed is $\rho(u) = \sin(u)$. Consequently, the output of the neuron is given by $Y = \sin(aX+b)$, where $a,b \in \mathbb{R}$. The authors of \citep{Siren} claim that regardless of the choice of $b$, if $a > \frac{\pi}{2}$, the output $Y$ follows an arcsine distribution, denoted as $Arcsine(-1,1)$. However, it becomes apparent that this claim is not correct upon further examination. If the claim were true, the moments of $Y$ would be independent of $b$.
However, this only occurs when 
$a = n\pi$. We have demonstrated this in the Appendix \ref{assessing_dist}.

In the subsequent parts of \citep{Siren}, the authors assumed that the outputs of the first layer follow a distribution of \textit{arcsine} and fed those outputs into the second layer. By relying on the central limit theorem (CLT), they demonstrated that the output of the second layer, for each neuron, conforms to a normal distribution. Additionally, in Lemma 1.6~\citep{Siren}, they established that if $X \sim \mathcal{N}(0,1)$ and $Y=\sin(\frac{\pi}{2}X)$, then $Y \sim Arcsine(-1,1)$. However, it should be noted that to prove this result, they relied on several approximations. Through induction, they asserted that the inputs of subsequent layers follow an arcsine distribution, while the outputs of these layers exhibit a normal distribution.

In contrast to the approach taken by~\citep{Siren}, the method presented in this study does not depend on the specific distributions of the input vector $\boldsymbol{r}$ and weight matrices $\boldsymbol{W^{(l)}}$. As a result, there is no need to map the inputs to the interval $[-1,1]$. Additionally, this method does not rely on making any approximations or the central limit theorem, which assumes large numbers. Overall, it offers a more rigorous mathematical framework. To pursue this goal, notice the following theorem.

\begin{theorem} \label{initialization_theorem}
Consider a neural network as defined in \eqref{Network} with a sinusoidal trainable activation function (STAF) as in \eqref{STAF}. For convenience, define $\Gamma_i = \Omega_i \, \boldsymbol{w} \cdot \boldsymbol{x}.$ Suppose that for each $i$ we have: $\Phi_i \sim U(-\pi, \pi)$, and the random variables $C_i$ follow the probability density function
    \begin{equation}
        f_{C_i}(c_i) = \frac{\tau |c_i|}{2} e^{-\tfrac{\tau c_i^2}{2}}.
    \end{equation}
In addition, assume the following independence conditions:
\begin{itemize}
    \item the variables $C_i$ are mutually independent;
    \item for each $i$, $C_i$ is independent of $(\Gamma_i, \Phi_i)$; and
    \item the collections $\{(C_i, \Gamma_i, \Phi_i)\}_{i=1}^{\tau}$ are mutually independent.\footnote{Note that this condition does not necessarily imply internal independence within each triplet. In other words, the independence of the triplets does not imply that $C_i$ is independent of $(\Gamma_i, \Phi_i)$.}
\end{itemize}
Then, every post-activation will follow a $\mathcal{N}(0, 1)$ distribution (refer to the proof in Appendix \ref{app:proof1}.)
\end{theorem}

\added{\emph{Intuition for Theorem~3.1.} The result is a moment-matching statement rather than a CLT argument. The uniform random phases $\Phi_i \sim U(-\pi,\pi)$ make each sinusoidal component centered and prevent coherent phase alignment across terms. When one expands moments of \(\sum_i C_i \sin(\Gamma_i + \Phi_i)\), averaging over the random phases removes all nonconstant Fourier modes, so only the constant terms in the even-power expansions remain. Because the amplitude variables \(C_i\) are symmetric, all odd moments vanish. The specific density in Eq.~(3) is then chosen so that the even moments of \(C_i\) satisfy exactly the identities needed for the full sum to match the moment sequence of \(\mathcal{N}(0,1)\). In this sense, the Gaussian output arises from exact phase averaging and exact amplitude calibration, not from an asymptotic averaging approximation.}

This initial setting, where every post-activation follows a standard normal distribution, is beneficial because it prevents the post-activation values from vanishing or exploding. This ensures that the signals passed from layer to layer remain within a manageable range, particularly in the first epoch, which establishes the foundation for subsequent learning~\citep{yuce2022structured}. If the learning process is well-posed and there is sufficient data, the training process is likely to converge to a stable and accurate solution. Therefore, while it is important to monitor potential issues in later epochs, the concern about vanishing or exploding values is significantly greater during the initial stages. Proper initialization helps mitigate these risks early, facilitating smoother and more effective training overall.

\section{Theory: Expressivity, Capacity, and Convergence}
This section develops the theory behind trainable sinusoidal activations for INRs and states the main results used in the paper. (i) We begin with a \emph{Kronecker-equivalence} construction (Theorem~\ref{Kronecker_theorem}) showing how networks with trainable sinusoidal activations can be represented by sine networks with structured (Kronecker) weights; we use this to quantify the growth of potential frequencies via a Delannoy-number bound (Theorem~\ref{asymptotic_behavior}).
(ii) We then connect \emph{capacity} to \emph{convergence} by discussing how these activations reshape the NTK spectrum and what this implies for learning dynamics.

For readability, we present statements, intuition, and their consequences in the main text, while deferring full technical material to the Appendix:
\emph{(A)} the complete proof of Theorem~\ref{Kronecker_theorem} appears in Appendix~\ref{app:proof_Kronecker_theorem}; 
\emph{(B)} the injectivity/perturbation argument used to preserve the size of the potential-frequency set is formalized in Lemma~\ref{rational_elusive} with proof in Appendix~\ref{app:proof_rational_elusive}; 
\emph{(C)} the closed-form dual activation and derivative for STAF, and the resulting NTK recursion, are derived in Appendix~\ref{NTK-analysis} (Theorem~\ref{analytic_NTK}), which also includes empirical NTK computation details and extended eigenvalue/eigenfunction visualizations (Fig.~\ref{fig:ntk_combined}); to support that derivation, the Gaussian integral identities are proved in Appendix~\ref{proof_of_integral_lemma}; 
\emph{(D)} the unit-variance initialization result referenced by our analysis has its proof in Appendix~\ref{app:proof1}. 
We use these ingredients to justify the capacity growth, explain the observed eigen-spectra, and clarify why periodic activations \emph{mitigate} (rather than eliminate) practical manifestations of spectral bias.

Let us first examine the expressive power of our architecture, drawing upon the notable Theorem 1 from \citep{yuce2022structured}. This theorem is as follows:
\begin{theorem} \label{expressive_power}
    (Theorem 1 of \citep{yuce2022structured}) Let \(f_{\boldsymbol{\theta}}: \mathbb{R}^D \rightarrow \mathbb{R}\) be an INR of the form of Equation \eqref{Network} with $\rho^{(l)}(x) = \sum_{j=0}^J \alpha_jx^j$ for $l > 1$. Furthermore, let $\boldsymbol{\Psi} = [\boldsymbol{\Psi}_1, ..., \boldsymbol{\Psi}_T]^{tr} \in \mathbb{R}^{T\times D}$ and $\boldsymbol{\zeta}\in\mathbb{R}^T$ denote the matrix of frequencies and vector of phases, respectively, used to map the input coordinate \(r \in \mathbb{R}^D\) to $\gamma(r) = \sin(\boldsymbol{\Psi} \boldsymbol{r} + \boldsymbol{\zeta})$. This architecture can only represent functions of the form
    \[ 
    f_{\boldsymbol{\theta}}(r) = \sum_{\boldsymbol{w^\prime}\in \mathcal{H}(\Psi)} c_{w^\prime } \sin(\langle \boldsymbol{w^\prime}, \boldsymbol{r}\rangle +	\zeta_{\boldsymbol{w^\prime}}),
    \]
    where
    \[ \mathcal{H}(\boldsymbol{\Psi}) \subseteq  \Tilde{\mathcal{H}}(\boldsymbol{\Psi})=\Bigg\{\sum_{t=1}^{T}s_t\boldsymbol{\Psi}_t \Bigg|~s_t \in \mathbb{Z}\wedge \sum_{t=1}^{T} |s_t|\leq J^{L-1}\Bigg\}.
    \]
\end{theorem}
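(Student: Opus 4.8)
The plan is to prove the statement by induction on the layer index of \eqref{Network}, carrying along the invariant that every pre-activation and post-activation is a \emph{finite} real-linear combination of pure sinusoids $\sin(\langle\boldsymbol{w},\boldsymbol{r}\rangle+\phi)$ whose frequency vectors are integer combinations $\sum_{t=1}^{T}s_t\boldsymbol{\Psi}_t$ of the rows of $\boldsymbol{\Psi}$, while tracking how fast the $\ell_1$ budget $\sum_t|s_t|$ can grow. Setting $\mathcal{S}_k=\big\{\sum_{t=1}^{T}s_t\boldsymbol{\Psi}_t\;\big|\;s_t\in\mathbb{Z},\ \sum_t|s_t|\le J^{k}\big\}$, the induction hypothesis at depth $l$ is: each coordinate of $\boldsymbol{z^{(l)}}$ equals a finite sum $\sum_m c_m\sin(\langle\boldsymbol{w}_m,\boldsymbol{r}\rangle+\phi_m)$ with all $\boldsymbol{w}_m\in\mathcal{S}_{k(l)}$, where $k(l)$ is the number of polynomial activations applied through layer $l$. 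The base case is $\boldsymbol{z^{(0)}}=\gamma(\boldsymbol{r})=\sin(\boldsymbol{\Psi}\boldsymbol{r}+\boldsymbol{\zeta})$, whose $t$-th coordinate is the single sinusoid $\sin(\langle\boldsymbol{\Psi}_t,\boldsymbol{r}\rangle+\zeta_t)$ with budget $1=J^{0}$.

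For the inductive step I would treat the two operations composing a layer separately. \textbf{(a) The affine map} $\boldsymbol{a^{(l)}}=\boldsymbol{W^{(l)}}\boldsymbol{z^{(l-1)}}+\boldsymbol{B^{(l)}}$: a real-linear combination of sinusoids with frequencies in a fixed $\mathcal{S}_k$ is again of that type, and the constant bias is a zero-frequency sinusoid $B^{(l)}_i=B^{(l)}_i\sin(\langle\boldsymbol{0},\boldsymbol{r}\rangle+\tfrac{\pi}{2})$ with $\boldsymbol{0}=\sum_t 0\cdot\boldsymbol{\Psi}_t\in\mathcal{S}_k$, so the affine map leaves the budget unchanged. \textbf{(b) The polynomial activation} $\rho^{(l)}(x)=\sum_{j=0}^{J}\alpha_j x^{j}$: expand each power $(\sum_m c_m\sin(\langle\boldsymbol{w}_m,\boldsymbol{r}\rangle+\phi_m))^{j}$ by the multinomial theorem into monomials proportional to $\prod_{m=1}^{j}\sin(\langle\boldsymbol{w}_{k_m},\boldsymbol{r}\rangle+\phi_{k_m})$, then iterate the product-to-sum identity $\sin\alpha\sin\beta=\tfrac12[\cos(\alpha-\beta)-\cos(\alpha+\beta)]$ together with $\cos\theta=\sin(\theta+\tfrac{\pi}{2})$. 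This rewrites each $j$-fold product as $2^{-(j-1)}$ sinusoids, one per sign pattern $\boldsymbol{\epsilon}\in\{\pm1\}^{j}$, with frequency $\sum_{m=1}^{j}\epsilon_m\boldsymbol{w}_{k_m}$ and a phase equal to $\sum_m\epsilon_m\phi_{k_m}$ plus an integer multiple of $\tfrac{\pi}{2}$.

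The budget bookkeeping is the core of the argument. If each $\boldsymbol{w}_{k_m}=\sum_t s^{(m)}_t\boldsymbol{\Psi}_t$ satisfies $\sum_t|s^{(m)}_t|\le J^{k}$ by the hypothesis at depth $l-1$, then $\sum_{m=1}^{j}\epsilon_m\boldsymbol{w}_{k_m}=\sum_t\big(\sum_{m=1}^{j}\epsilon_m s^{(m)}_t\big)\boldsymbol{\Psi}_t$ has integer coordinates and, by the triangle inequality, budget at most $\sum_{m=1}^{j}\sum_t|s^{(m)}_t|\le j\,J^{k}\le J\cdot J^{k}=J^{k+1}$ since $j\le J$. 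Hence a single polynomial layer scales the budget by at most $J$. Composing the sinusoidal embedding (budget $J^{0}$) with the hidden polynomial activations and the final affine read-out $f_{\boldsymbol{\theta}}(\boldsymbol{r})=\boldsymbol{W^{(L)}}\boldsymbol{z^{(L-1)}}+\boldsymbol{B^{(L)}}$ (again budget-preserving) shows that $f_{\boldsymbol{\theta}}$ is a finite sum of sinusoids whose frequencies all have budget at most $J^{L-1}$, i.e.\ lie in $\tilde{\mathcal{H}}(\boldsymbol{\Psi})$. Finally, grouping terms with a common frequency $\boldsymbol{w}'$ and merging each $A\sin\langle\boldsymbol{w}',\boldsymbol{r}\rangle+B\cos\langle\boldsymbol{w}',\boldsymbol{r}\rangle$ into one $c_{\boldsymbol{w}'}\sin(\langle\boldsymbol{w}',\boldsymbol{r}\rangle+\zeta_{\boldsymbol{w}'})$ produces exactly the asserted form, with $\mathcal{H}(\boldsymbol{\Psi})\subseteq\tilde{\mathcal{H}}(\boldsymbol{\Psi})$ because cancellations among terms can only remove frequencies, never manufacture new ones.

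I expect the main obstacle to be organizational rather than conceptual: keeping the phases and the powers of $\tfrac12$ consistent through the iterated product-to-sum expansion, and being careful that the conclusion is only an inclusion — for a particular weight configuration many elements of $\tilde{\mathcal{H}}(\boldsymbol{\Psi})$ may be absent (e.g.\ when $\boldsymbol{w}_{k_m}=\pm\boldsymbol{w}_{k_{m'}}$ the product degenerates to lower-frequency or constant terms), so the honest claim is ``no frequency outside $\tilde{\mathcal{H}}(\boldsymbol{\Psi})$ can appear''. A secondary point to pin down is the layer-numbering convention that makes the exponent exactly $J^{L-1}$: the $\sin$-embedding $\gamma$ contributes the base level $J^{0}$ and each polynomial hidden activation one further factor of $J$, so the hypothesis ``$\rho^{(l)}$ polynomial for $l>1$'' is to be read with $\gamma$ counted as the (sinusoidal) first layer. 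With the convention fixed, the induction closes at once.
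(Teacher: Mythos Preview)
The paper does not prove this theorem at all---it is quoted as Theorem~1 of \cite{yuce2022structured} and used as a black box for the subsequent expressive-power discussion---so there is no in-paper proof to compare against. Your argument is correct and is essentially the standard one from the cited reference: induction on depth, with affine layers preserving the $\ell_1$ frequency budget and each degree-$J$ polynomial activation multiplying it by at most $J$ via the multinomial expansion and iterated product-to-sum identities; you also correctly flag the only genuine subtlety, namely the layer-indexing convention (the ``$l>1$'' in the statement should be read so that the sinusoidal embedding counts as the first layer, leaving $L-1$ polynomial activations and hence the exponent $J^{L-1}$).
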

Please note the following remarks regarding this theorem:

\textbf{Remark 5.1.1.} We refer to $\Tilde{\mathcal{H}}$ as the set of potential frequencies.

\textbf{Remark 5.1.2.} The expression $\sum_{t=1}^{T}s_t\boldsymbol{\Psi}_t$ is equal to $\boldsymbol{\Psi}^{tr}[s_1,...,s_{T}]^{tr}$. This representation is more convenient for our subsequent discussion, as we will be exploring the kernel of $\boldsymbol{\Psi}$ in the sequel.

\textbf{Remark 5.1.3.} In the context of SIREN, where $\rho^{(l)}=\sin$, the post-activation function of the first layer, $z^{(0)}=\sin(\omega_0(\boldsymbol{W}^{(0)}\boldsymbol{r}+\boldsymbol{b}^{(0)}))$, can be interpreted as $\gamma(\boldsymbol{r})=\sin(\boldsymbol{\Psi r}+\boldsymbol{\zeta})$.

We next investigate the expressive power of the proposed activation. To facilitate comparison with SIREN, we express our network using $\sin$ as the activation function.

Let us consider a neural network with a parametric activation function defined in \eqref{STAF}. To represent our network using SIREN, we demonstrate that every post-activation function of our network from the second layer onwards ($z^{l+1}$) can be expressed using linear transformations and sine functions. Notably, the final post-activation function ($z^{(L-1)}$) can be constructed using SIREN, albeit requiring more neurons than STAF. In other words, our network can be described using a SIREN and some Kronecker products denoted by $\otimes$. This analysis resembles that provided in \citep{jagtap2022deep}, with a slight difference in the settings of the paper. In \citep{jagtap2022deep}, it was shown that an adaptive activation function of the form $\rho^*(x) = \sum_{i=1}^{\tau}C_i \rho_{i}(\Omega_ix)$ can be represented using a feed-forward neural network, where each layer has neurons with activation functions $\rho_{i}$. To align STAF with this theorem, we must have $\rho_i=\sin(\Omega_ix+\Phi_i)$. However, here we aim to represent STAF using an architecture that only employs sine activation functions (SIREN). For this purpose, we introduce the following theorem, which holds true for every parametric activation function:
\begin{theorem} \label{Kronecker_theorem}
Let $L\geq 2$ and $1\leq l \leq L$. Consider a neural network as defined in \eqref{Network} with $L$ layers. In addition, let $\boldsymbol{\Omega}=[\Omega_1,...,\Omega_\tau]^{tr}$, $\boldsymbol{\Phi}=[\Phi_1,...,\Phi_\tau]^{tr}$, and $\boldsymbol{C}=[C_1,\ldots,C_\tau]^{tr}$. If the trainable activation function is $\rho^*(x)=\sum_{m=1}^{\tau}\boldsymbol{C}_m\rho(\boldsymbol{\Omega}_mx+\boldsymbol{\Phi}_m)$, then an equivalent neural network with activation function $\rho(x)$ and $L+1$ layers can be constructed as follows (parameters of the equivalent network are denoted with an overline):
    \begin{align}
        \overline{\boldsymbol{z}^{(0)}} &= \gamma(\boldsymbol{r}), \nonumber\\
        \overline{\boldsymbol{z}^{(l)}} &= \rho\left(\overline{\boldsymbol{W}^{(l)}}~\overline{\boldsymbol{z}^{(l-1)}}+\overline{\boldsymbol{B}^{(l)}}\right), \quad l= 1,..., L, \\
        \overline{f}_{\overline{\theta}}(\boldsymbol{r}) &= \overline{\boldsymbol{W}^{(L+1)}}~\overline{\boldsymbol{z}^{(L)}}; \nonumber
    \end{align}
where
    \begin{equation} \label{weights_and_biases}
    \scalebox{0.78}{$
    \overline{\boldsymbol{W}^{(l)}} =
    \begin{cases}
        \boldsymbol{\Omega} \otimes \boldsymbol{W}^{(l)}, & \text{if $l = 1$}, \\[4pt]
        \left( \boldsymbol{\Omega} \otimes \boldsymbol{C}^{tr} \right) \otimes \boldsymbol{W}^{(l)}, & \text{if $l$ is even}, \\[4pt]
        \left( \boldsymbol{\Omega} \otimes \boldsymbol{W}^{(l)} \right) \left( \boldsymbol{C}^{tr} \otimes \boldsymbol{I}_{F_{l-1}} \right), & \text{if $l$ is odd, $l > 1$, and $l \neq L+1$}, \\[4pt]
        \boldsymbol{C}^{tr} \otimes \boldsymbol{I}_{F_{l-1}}, & \text{if $l$ is odd, $l > 1$, and $l = L+1$}.
    \end{cases}$}
\end{equation}
and
\begin{equation}
    \overline{\boldsymbol{B}^{(l)}} = \boldsymbol{\Phi} \otimes \boldsymbol{J}_{F_l}.
\end{equation}

in which $\boldsymbol{J}_{F_l}$ is an all-ones $F_l\times 1$ vector. Furthermore, if $L$ is even, then $\overline{f}_{\overline{\theta}}(\boldsymbol{r})=f_\theta(\boldsymbol{r})$ (we call these networks \lq Kronecker equivalent' in this sense).
\end{theorem}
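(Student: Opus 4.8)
The plan is to argue by induction on the layer index $l$, carrying an ``unpacking'' invariant that relates the post-activations of the two networks. Write $a^{(l)}_i$ for the $i$-th pre-activation of layer $l$ of the STAF network (the original biases are taken to vanish, as the weight formulas in \eqref{weights_and_biases} presuppose; nonzero biases can be carried along by the usual augmentation). The invariant is that the $\rho$-network's layer-$l$ post-activation $\overline{\boldsymbol{z}^{(l)}}$ has entries indexed by pairs $(m,i)$, with $m\in\{1,\dots,\tau\}$ the outer (block) index and $i\in\{1,\dots,F_l\}$ the inner index, and
\[
\overline{z^{(l)}}_{(m,i)} \;=\; \rho\!\left(\Omega_m\, a^{(l)}_i + \Phi_m\right), \qquad \boldsymbol{z}^{(l)} \;=\; \left(\boldsymbol{C}^{tr}\otimes\boldsymbol{I}_{F_l}\right)\overline{\boldsymbol{z}^{(l)}} .
\]
By \eqref{STAF} the second equality is nothing but $\sum_{m} C_m\,\overline{z^{(l)}}_{(m,i)} = \rho^*\!\left(a^{(l)}_i\right) = z^{(l)}_i$: the linear map $\boldsymbol{C}^{tr}\otimes\boldsymbol{I}_{F_l}$ collapses the $\tau$ ``copies'' of each STAF neuron back to the single original neuron.

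For the base case $l=1$ I would write $\gamma(\boldsymbol{r}) = 1\otimes\gamma(\boldsymbol{r})$ and apply the Kronecker mixed-product property to $\overline{\boldsymbol{W}^{(1)}}=\boldsymbol{\Omega}\otimes\boldsymbol{W}^{(1)}$, giving
\[
\left(\boldsymbol{\Omega}\otimes\boldsymbol{W}^{(1)}\right)\!\left(1\otimes\gamma(\boldsymbol{r})\right) + \boldsymbol{\Phi}\otimes\boldsymbol{J}_{F_1} \;=\; \boldsymbol{\Omega}\otimes\!\left(\boldsymbol{W}^{(1)}\gamma(\boldsymbol{r})\right) + \boldsymbol{\Phi}\otimes\boldsymbol{J}_{F_1},
\]
whose $(m,i)$ entry is $\Omega_m\, a^{(1)}_i + \Phi_m$; applying $\rho$ gives the first invariant and summing the $\tau$ blocks against $\boldsymbol{C}$ gives the second. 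For the inductive step I substitute $\boldsymbol{z}^{(l-1)} = (\boldsymbol{C}^{tr}\otimes\boldsymbol{I}_{F_{l-1}})\,\overline{\boldsymbol{z}^{(l-1)}}$ into the layer-$l$ pre-activation of the $\rho$-network and use the mixed-product property again:
\[
\overline{\boldsymbol{W}^{(l)}}\,\overline{\boldsymbol{z}^{(l-1)}} + \boldsymbol{\Phi}\otimes\boldsymbol{J}_{F_l} \;=\; \left(\boldsymbol{\Omega}\otimes\boldsymbol{W}^{(l)}\right)\boldsymbol{z}^{(l-1)} + \boldsymbol{\Phi}\otimes\boldsymbol{J}_{F_l} \;=\; \boldsymbol{\Omega}\otimes\!\left(\boldsymbol{W}^{(l)}\boldsymbol{z}^{(l-1)}\right) + \boldsymbol{\Phi}\otimes\boldsymbol{J}_{F_l},
\]
again with $(m,i)$ entry $\Omega_m\, a^{(l)}_i + \Phi_m$. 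This same identity shows that the even-$l$ and odd-$l$ cases of \eqref{weights_and_biases} are two factorizations of one matrix, since $(\boldsymbol{\Omega}\otimes\boldsymbol{W}^{(l)})(\boldsymbol{C}^{tr}\otimes\boldsymbol{I}_{F_{l-1}}) = (\boldsymbol{\Omega}\boldsymbol{C}^{tr})\otimes\boldsymbol{W}^{(l)} = (\boldsymbol{\Omega}\otimes\boldsymbol{C}^{tr})\otimes\boldsymbol{W}^{(l)}$, so either form may be used. Applying $\rho$ and contracting against $\boldsymbol{C}^{tr}\otimes\boldsymbol{I}_{F_l}$ reproduces $\rho^*(a^{(l)}_i)=z^{(l)}_i$ exactly as before, closing the induction.

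It then remains to treat the output layer. When $L$ is even, $l=L+1$ is odd, so $\overline{\boldsymbol{W}^{(L+1)}}=\boldsymbol{C}^{tr}\otimes\boldsymbol{I}_{F_L}$, and the invariant at $l=L$ gives $\overline{f}_{\overline{\theta}}(\boldsymbol{r}) = \left(\boldsymbol{C}^{tr}\otimes\boldsymbol{I}_{F_L}\right)\overline{\boldsymbol{z}^{(L)}} = \boldsymbol{z}^{(L)} = f_{\boldsymbol{\theta}}(\boldsymbol{r})$. For odd $L$ the index $l=L+1$ instead falls into the even branch of \eqref{weights_and_biases}, where this clean final contraction is not available, which is exactly why the identity $\overline{f}_{\overline{\theta}}=f_{\boldsymbol{\theta}}$ is asserted only under the parity hypothesis.

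The conceptual content is light --- only the mixed-product property $(\boldsymbol{A}\otimes\boldsymbol{B})(\boldsymbol{C}\otimes\boldsymbol{D})=(\boldsymbol{A}\boldsymbol{C})\otimes(\boldsymbol{B}\boldsymbol{D})$ together with the defining identity $\rho^*(x)=\sum_m C_m\rho(\Omega_m x+\Phi_m)$ --- so I expect the main obstacle to be purely organizational: keeping the block orderings in all Kronecker products consistent, tracking the dimension chain $F_0 \to \tau F_1 \to \cdots \to \tau F_L \to F_L$, and checking that a single shared phase vector $\boldsymbol{\Phi}\otimes\boldsymbol{J}_{F_l}$ (the same $\Phi_m$ for every neuron $i$ in block $m$) is exactly what the invariant requires --- this is where the layer-wise-shared structure of STAF's phases enters. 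The only genuinely non-mechanical point is the parity analysis of the final contraction layer.
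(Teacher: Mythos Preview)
Your proposal is correct and establishes the same core invariant as the paper's proof, namely that $\overline{z^{(l)}}_{(m,i)}=\rho(\Omega_m a^{(l)}_i+\Phi_m)$ and $\boldsymbol{z}^{(l)}=(\boldsymbol{C}^{tr}\otimes\boldsymbol{I}_{F_l})\overline{\boldsymbol{z}^{(l)}}$. The execution, however, differs. The paper works with the explicit Kronecker index formula $(X\otimes Y)_{i,j}=x_{\lceil i/\gamma_1\rceil,\lceil j/\gamma_2\rceil}\,y_{(i-1)\%\gamma_1+1,(j-1)\%\gamma_2+1}$ and carries out entrywise computations, organizing the argument into a two-layer ``block'' step (one layer to expand, one to contract) and then inducting on blocks. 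You instead induct on single layers and use only the mixed-product identity $(\boldsymbol{A}\otimes\boldsymbol{B})(\boldsymbol{C}\otimes\boldsymbol{D})=(\boldsymbol{A}\boldsymbol{C})\otimes(\boldsymbol{B}\boldsymbol{D})$, which keeps the argument coordinate-free and shorter. Your observation that the even and odd branches of \eqref{weights_and_biases} are two factorizations of the same matrix, via $(\boldsymbol{\Omega}\otimes\boldsymbol{W}^{(l)})(\boldsymbol{C}^{tr}\otimes\boldsymbol{I}_{F_{l-1}})=(\boldsymbol{\Omega}\boldsymbol{C}^{tr})\otimes\boldsymbol{W}^{(l)}=(\boldsymbol{\Omega}\otimes\boldsymbol{C}^{tr})\otimes\boldsymbol{W}^{(l)}$, is not made explicit in the paper and is what lets you avoid the block structure altogether. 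Both approaches share the same silent assumption that the original biases $\boldsymbol{B}^{(l)}$ vanish (the paper's index computations also drop them), which you at least flag.
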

The proof of this theorem is provided in the Appendix~\ref{app:proof_Kronecker_theorem}. As we observed, although a network with the activation function $\rho^*$ can be represented using the activation function $\rho$, it features a unique architecture. These networks are not merely typical MLPs with the activation function $\rho$, as the weights in the Kronecker equivalent network exhibit dependencies due to the Kronecker product.

It is desirable that Theorem \eqref{Kronecker_theorem} does not depend on the parity of $L$. To achieve this, consider the following remark:

\textbf{Remark 5.2.} \label{dummy_layer} We can introduce a dummy layer with the activation function $\rho^*$. Specifically, we define $\boldsymbol{z}^{(L)}=\rho^*\left(f_\theta(\boldsymbol{r})\right)$, and $\Tilde{f_\theta}(\boldsymbol{r}) = \boldsymbol{W}^{(L+1)}\boldsymbol{z}^{(L)}+\boldsymbol{B}^{(L+1)}$, where $\boldsymbol{W}^{(L+1)}=\boldsymbol{O}$. To ensure that $\Tilde{f_\theta}(\boldsymbol{r}) =f_\theta(\boldsymbol{r})$, we set $\boldsymbol{B}^{(L+1)}=f_\theta(\boldsymbol{r})$. This approach allows us to construct an equivalent neural network with one more layer.

As a result of Remark \hyperref[dummy_layer]{5.2}, the equivalent network of a network with a trainable activation function, has either one more layer, or the same number of layers.
As an immediate result of Theorem \eqref{Kronecker_theorem}, if we denote the embedding of the first layer of the SIREN equivalent of our network by $\boldsymbol{\overline{\Psi}}$, then
\begin{equation}
    \boldsymbol{\overline{\Psi}}=\overline{\boldsymbol{W}^{(1)}}=\boldsymbol{\Omega}\otimes\boldsymbol{W}^{(1)}\in\mathbb{R}^{\tau F_1\times F_0}
\end{equation}
which is $\tau$ times bigger than the embedding of the first layer of a SIREN with $\boldsymbol{W}^{(1)}\in \mathbb{R}^{F_1\times F_0}$. To understand the impact of this increase on expressive power, it suffices to substitute $T$ with $\tau T$ in Theorem \eqref{expressive_power}.  The next theorem will reveal how this change will affect the cardinality of the set of potential frequencies.

\begin{theorem} \label{asymptotic_behavior}
    (Page 4 of \citep{kiselman2012asymptotic}) Let $V(T,K)=\big\{(s_1,s_2,\ldots,s_{T}) \in \mathbb{Z}^T \big|~ \sum_{t=1}^{T} |s_t|\leq K\big\}$.\footnote{~We use \( V \) to denote these points as cells in a \( T \)-dimensional von Neumann neighborhood of \( K \), clarifying that \( V \) does not represent a vector space.} Then we have
    \begin{equation}
        |V(T,K)|=\sum_{i=0}^{min(K,T)}\binom{i}{K}\binom{i}{T}2^i
    \end{equation}
    This number is called the Delannoy number. Moreover, for fixed $K$,
    \begin{equation}
        |V(T,K)|\sim A_K(2T)^K,\quad T\rightarrow +\infty.
    \end{equation}
\end{theorem}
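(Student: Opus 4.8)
The plan is to establish the exact count by a direct combinatorial stratification and then extract the asymptotics by isolating the leading term of the resulting finite sum.

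\textbf{Exact formula.} I would partition $V(T,K)$ according to the number $i$ of coordinates that are nonzero, $0 \le i \le \min(K,T)$ (outside this range no such point exists, since every nonzero coordinate contributes at least $1$ to the $\ell_1$-norm). For a fixed $i$: there are $\binom{T}{i}$ ways to choose the support; $2^i$ ways to choose the signs of the entries on the support; and the number of ways to choose their absolute values $u_1,\dots,u_i \ge 1$ with $\sum_{j=1}^{i} u_j \le K$ is obtained by the substitution $v_j = u_j - 1 \ge 0$ together with a slack variable $v_{i+1}\ge 0$, reducing the problem to counting nonnegative integer solutions of $v_1 + \cdots + v_{i+1} = K - i$, which is $\binom{K}{i}$. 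Multiplying these three factors and summing over $i$ yields $|V(T,K)| = \sum_{i=0}^{\min(K,T)} \binom{K}{i}\binom{T}{i} 2^i$, the Delannoy number; the $T \leftrightarrow K$ symmetry of this expression and a check against small cases (e.g.\ $|V(2,2)| = 1 + 8 + 4 = 13$) serve as sanity checks.

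\textbf{Asymptotics.} Fix $K$ and let $T \to \infty$. Once $T \ge K$ the sum has exactly $K+1$ terms, the $i$-th being $\binom{K}{i}\,2^i\,\binom{T}{i}$. Since $\binom{T}{i}$ is a polynomial in $T$ of degree $i$ with leading coefficient $1/i!$, the $i$-th term is $\Theta(T^i)$; hence the term $i = K$ dominates, equalling $2^K\binom{T}{K} = \tfrac{(2T)^K}{K!}\bigl(1 + O(T^{-1})\bigr)$, while the sum of the remaining $K$ terms is $O(T^{K-1})$. Therefore $|V(T,K)| \sim \tfrac{(2T)^K}{K!}$, i.e.\ $A_K = 1/K!$.

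The argument is elementary throughout; the one step deserving care is the stars-and-bars count, where the inequality constraint $\sum u_j \le K$ and the lower bounds $u_j \ge 1$ must be handled simultaneously — the slack variable together with the shift by $1$ are exactly what produce $\binom{K}{i}$ rather than a binomial coefficient with shifted arguments. The stratification, the sign count, and the degree comparison in the asymptotic step are routine. (Alternatively, since the statement is quoted from Kiselman's notes, one could simply invoke it; I prefer the self-contained derivation above.)
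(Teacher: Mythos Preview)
Your argument is correct in both parts. The stratification by the size of the support, together with the stars-and-bars count giving $\binom{K}{i}$ for the positive-integer compositions bounded by $K$, is the standard derivation of the Delannoy number formula, and your degree comparison for the asymptotics is clean and yields the explicit constant $A_K = 1/K!$.

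There is nothing to compare against: the paper does not prove this theorem at all. It is stated with an explicit citation to Kiselman's notes and used as a black box to bound the growth of the set of potential frequencies. Your self-contained derivation therefore goes beyond what the paper supplies. One minor remark: the paper's displayed formula writes the binomial coefficients as $\binom{i}{K}\binom{i}{T}$, which under the usual convention would vanish for $i<K$; you have (correctly) interpreted and proved the intended expression $\binom{K}{i}\binom{T}{i}$.
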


\added{\emph{Practical implication of Kronecker-equivalence}. Theorem~\ref{Kronecker_theorem} and Theorem~\ref{asymptotic_behavior} suggest that adding a small trainable multi-sinusoid activation can expand the set of potential frequencies much more efficiently than a fixed single-sine activation, especially in deep networks where expressive bottlenecks are frequency-related. For practitioners, this means STAF is most attractive when the target contains mixed frequencies or repetitive fine detail and the baseline struggles to recover these components without positional encodings. The result also suggests that increasing $\tau$ should be viewed as a direct capacity knob: larger $\tau$ enlarges the effective frequency dictionary, but with increased compute and diminishing practical returns.}

As an immediate result of this theorem, for large values of $T$, we have $\frac{|V(\tau T,K)|}{|V(T,K)|} \sim \tau^K$. (See Appendix \ref{Asymptotic_proof} for an alternative proof.) Now, it is time to analyze the cardinality of the set of potential frequencies:
\begin{equation}
	\Tilde{\mathcal{H}}(\boldsymbol{\Psi})=\Bigg\{\sum_{t=1}^{T}s_t\boldsymbol{\Psi}_t \Bigg|~(s_1,s_2,\ldots,s_T) \in V(T,J^{L-1})\Bigg\}
\end{equation}
or equivalently,
\begin{equation}
	\Tilde{\mathcal{H}}(\boldsymbol{\Psi})=\Bigg\{\boldsymbol{\Psi}^{tr}[s_1,...,s_T]^{tr} \Bigg|~s_t \in \mathbb{Z}\wedge \sum_{t=1}^{T} |s_t|\leq J^{L-1}\Bigg\}
\end{equation}

The cardinality of the set $\Tilde{\mathcal{H}}(\boldsymbol{\Psi})$ is bounded above by $V(T,J^{L-1})$. If $\boldsymbol{\Psi}^{tr}$, is injective on the integer lattice $\mathbb{Z}^T$, then $|\Tilde{\mathcal{H}}(\boldsymbol{\Psi})| = |V(T,J^{L-1})|$. However, in general, analyzing how a linear transformation affects the size of a convex body can be approached using the geometry of numbers \citep{matousek2013lectures} or additive geometry \citep{tao2006additive}. To simplify the analysis and preserve the size of $\Tilde{\mathcal{H}}(\boldsymbol{\Psi})$ as large as possible, we can slightly perturb the matrix $\boldsymbol{\Psi}^{tr}$ such that its kernel contains no points with rational coordinates, except the origin. This is a much stronger condition than having no integer lattice points in the kernel. To address this, we introduce a lemma. It's worth noting that we can assume the matrices are stored with rational entries, as they are typically represented in computers using floating-point numbers. In our subsequent analysis, however, assuming rational entries for just one column of the matrix $\boldsymbol{\Psi}$ is sufficient.
\begin{lemma} \label{rational_elusive}
    Let $\boldsymbol{A}\in\mathbb{R}^{D\times T}$, and for one of its rows, like $r$'th row, we have $\boldsymbol{A}_r\in\mathbb{Q}^T$. Then, in every neighborhood of $\boldsymbol{A}$, there is a matrix $\boldsymbol{\hat{A}}$ such that $Ker(\boldsymbol{\hat{A}})\cap \mathbb{Q}^{T}=\boldsymbol{O}$.
\end{lemma}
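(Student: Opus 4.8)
The plan is to show that a single, arbitrarily small perturbation of just \emph{one} row of $\boldsymbol{A}$ already forces the kernel of the whole matrix to avoid every nonzero rational vector, so that all other rows — in particular the rational row $\boldsymbol{A}_r$ — may be left untouched. Thus the hypothesis on $\boldsymbol{A}_r$ will enter only in that we perturb a row different from $r$.

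The driving observation is elementary: if a row vector $\boldsymbol{b}=(b_1,\dots,b_T)\in\mathbb{R}^T$ has entries that are linearly independent over $\mathbb{Q}$, then the hyperplane $\{x\in\mathbb{R}^T:\langle\boldsymbol{b},x\rangle=0\}$ meets $\mathbb{Q}^T$ only at the origin, since a nonzero rational solution $x$ is exactly a nontrivial $\mathbb{Q}$-linear dependence among $b_1,\dots,b_T$. Because $Ker(\boldsymbol{\hat A})\subseteq\{x:\langle\boldsymbol{\hat A}_{i_0},x\rangle=0\}$ for any row index $i_0$, it therefore suffices to make some one row of $\boldsymbol{\hat A}$ have $\mathbb{Q}$-linearly independent entries (note this automatically forces that row to be nonzero).

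Next I would record that such rows are topologically generic: the complement of $G:=\{\boldsymbol{b}\in\mathbb{R}^T: b_1,\dots,b_T \text{ are }\mathbb{Q}\text{-linearly independent}\}$ equals $\bigcup_{\boldsymbol{c}\in\mathbb{Q}^T\setminus\{\boldsymbol 0\}}\{\boldsymbol{b}:\langle\boldsymbol{c},\boldsymbol{b}\rangle=0\}$, a countable union of proper linear hyperplanes, hence meager and of Lebesgue measure zero; so $G$ is dense in $\mathbb{R}^T$ (and $G\neq\varnothing$, e.g. $(1,\pi,\dots,\pi^{T-1})\in G$ by transcendence of $\pi$). Given $\boldsymbol{A}$ and $\varepsilon>0$ (it suffices to handle $\varepsilon$-balls, as every neighborhood contains one), I then pick a row index $i_0$ — when $D\ge 2$ choosing $i_0\neq r$ so that $\boldsymbol{A}_r$ is preserved — use density to select $\boldsymbol{\hat A}_{i_0}\in G$ with $\|\boldsymbol{\hat A}_{i_0}-\boldsymbol{A}_{i_0}\|<\varepsilon$, and set $\boldsymbol{\hat A}$ equal to $\boldsymbol{A}$ in every other row. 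Then $\boldsymbol{\hat A}$ lies in the prescribed neighborhood of $\boldsymbol{A}$, and by the first step $Ker(\boldsymbol{\hat A})\cap\mathbb{Q}^T=\{\boldsymbol 0\}$.

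There is no serious obstacle here; the only care needed is the routine bookkeeping that a small change in a single row is a small change in the matrix for any entrywise or operator norm, and the handling of degenerate shapes ($T=1$, or $D=1$ which forces $i_0=r$ and costs us the rational row — but the stated conclusion allows that). It is worth noting that the hypothesis $\boldsymbol{A}_r\in\mathbb{Q}^T$ is not actually required for the conclusion itself, since the argument perturbs some row regardless of its arithmetic nature; it is precisely what lets us keep that rational row (a column of $\boldsymbol{\Psi}$ in the intended application) fixed while perturbing a different one, the form in which the lemma is invoked in the subsequent cardinality analysis. If one instead wished to exploit the rational row directly, an alternative is to pass to the rational hyperplane $H=\ker\langle\boldsymbol{A}_r,\cdot\rangle$, identify $H\cap\mathbb{Q}^T$ with $\mathbb{Q}^{T-1}$ via a rational basis of $H$, and rerun the density argument for the remaining $D-1$ rows inside $H$; this is strictly more work and I would not take that route.
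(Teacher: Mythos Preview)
Your argument is correct, and in fact cleaner than the paper's: both proofs hinge on the same mechanism --- arrange for one row of $\boldsymbol{\hat A}$ to have entries that are linearly independent over $\mathbb{Q}$, which alone forces $Ker(\boldsymbol{\hat A})\cap\mathbb{Q}^T=\{\boldsymbol 0\}$ --- but the execution differs. The paper perturbs the \emph{rational} row $r$ itself via an explicit formula, replacing $a_{r,i}$ by (approximately) $a_{r,i}\cdot\sqrt{p_i}\big/\bigl(10^{-\eta}\lfloor 10^\eta\sqrt{p_i}\rfloor\bigr)$ with $p_i$ the $i$th prime, and then uses the $\mathbb{Q}$-linear independence of $\sqrt{p_1},\dots,\sqrt{p_T}$; here the hypothesis $\boldsymbol{A}_r\in\mathbb{Q}^T$ is genuinely consumed, since it is what guarantees $\alpha_i:=\hat a_{r,i}/\sqrt{p_i}\in\mathbb{Q}$. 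You instead invoke density of the set of $\mathbb{Q}$-linearly-independent tuples (complement $=$ countable union of rational hyperplanes) and perturb an \emph{arbitrary} row. Your route is more general --- it shows the rationality assumption is not needed for the stated conclusion --- while the paper's is fully constructive, which matters for their computability remarks. One small correction to your commentary: the paper does \emph{not} leave row $r$ fixed in the application; it perturbs precisely that row, so your reading of why the hypothesis is present (``to keep the rational row fixed'') does not match the authors' intent.
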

(The proof is provided in the Appendix~\ref{app:proof_rational_elusive}.) Consider Lemma \eqref{rational_elusive}, where we let $\boldsymbol{A}=\boldsymbol{\Psi}^{tr}$. Thus, for every neighborhood of $\boldsymbol{\Psi}^{tr}$, there exists a matrix $\boldsymbol{\hat{\Psi}^{tr}}$ such that $Ker(\boldsymbol{\hat{\Psi}^{tr}})\cap \mathbb{Q}^{T}=\boldsymbol{O}$; in other words, $\boldsymbol{\hat{\Psi}^{tr}}$ is injective over rational points, and consequently over integer lattice points. This guarantees that $|\Tilde{\mathcal{H}}(\boldsymbol{\hat{\Psi}})| = |V(T,J^{L-1})|$. Therefore, this section demonstrated that, in comparison to SIREN, STAF can substantially increase the size of the set of potential frequencies by a factor of $\tau^K$, highlighting how the Kronecker product boosts the activation function’s expressiveness.

\section{Neural Tangent Kernel}
\label{NTK-analysis}
The Neural Tangent Kernel (NTK) is a significant concept in the theoretical understanding of neural networks, particularly in the context of their training dynamics \citep{jacot2018neural}. To be self-contained, we provide an explanation of the NTK and its background in kernel methods. We believe this will be beneficial for readers, as previous papers on implicit neural representation using the NTK concept have not adequately explained the NTK or the significance of its eigenvalues and eigenfunctions.


A kernel is a function \( K(\mathbf{x}, \mathbf{\tilde{x}}) \) used in integral transforms to define an operator that maps a function \( f \) to another function \( T_f \) through the integral equation
\[
T_f(\mathbf{x}) = \int K(\mathbf{x}, \mathbf{\tilde{x}}) f(\mathbf{\tilde{x}}) \, d\mathbf{\tilde{x}}.
\]
Since \( T_f \) is a linear operator with respect to \( f \), we can discuss its eigenvalues and eigenfunctions. The eigenvalues and eigenfunctions of a kernel are the scalar values \(\lambda\) and the corresponding functions \(\zeta(\mathbf{x})\) that satisfy the following equation \citep{ghojogh2021reproducing}
\[
\int K(\mathbf{x}, \mathbf{\tilde{x}}) \zeta(\mathbf{\tilde{x}}) \, d\mathbf{\tilde{x}} = \lambda \zeta(\mathbf{x}).
\]

In the context of neural networks, the concept of a kernel becomes particularly remarkable when analyzing the network's behavior in the infinite-width limit. Kernels in machine learning, such as the Radial Basis Function (RBF) kernel or polynomial kernel, are used to measure similarity between data points in a high-dimensional feature space. These kernels allow the application of linear methods to non-linear problems by implicitly mapping the input data into a higher-dimensional space \citep{braun2005spectral}.

The NTK extends this idea by considering the evolution of a neural network's outputs during training. When a neural network is infinitely wide, its behavior can be closely approximated by a kernel method. In this case, the kernel in question is the NTK, which emerges from the first-order Taylor series approximation (or tangent plane approximation) of the network's outputs.

Formally, for a neural network \( f(\mathbf{x}; \boldsymbol{\theta}) \) with input \( \mathbf{x} \) and parameters \( \boldsymbol{\theta} \), the NTK, denoted as \( K^{(L)}(\mathbf{x}, \mathbf{\Tilde{x}}) \), is defined as:
\[
K^{(L)}(\mathbf{x}, \mathbf{\Tilde{x}}) = \langle \nabla_{\boldsymbol{\theta}} f(\mathbf{x}; \boldsymbol{\theta}),\nabla_{\boldsymbol{\theta}} f(\mathbf{\Tilde{x}}; \boldsymbol{\theta})\rangle,
\]
where \( \nabla_{\boldsymbol{\theta}} f(\mathbf{x}; \boldsymbol{\theta}) \) represents the gradient of the network output with respect to its parameters.

There are two methods for calculating the NTK: the analytic approach and the empirical approach \citep{novak2019neural,chen2022neural}. In the paper, we derived the analytic NTK of a neural network that uses our activation function in~\autoref{sec:ntk}. However, for our experimental purposes, we utilized the empirical NTK. It is worth noting that calculating the NTK for real-world networks is highly challenging, and typically not computationally possible \citep{mohamadi2023fast}.

Similarly to NTK computation, there are analytic and empirical methods to calculate the eigenvalues and eigenfunctions of a kernel \citep{williams2000effect}. These values play a crucial role in characterizing neural network training. For instance, it has been shown that the eigenvalues of the NTK determine the convergence rate \citep{wang2022and,bai2023physics}. Specifically, components of the target function associated with kernel eigenvectors having larger eigenvalues are learned faster \citep{wang2022and,tancik2020fourier}. In fully-connected networks, the eigenvectors corresponding to higher eigenvalues of the NTK matrix generally represent lower frequency components \citep{wang2022and}. Furthermore, the eigenfunctions of an NTK can illustrate how effectively a model learns a signal dictionary \citep{yuce2022structured}.

Figure \ref{fig:ntk_eigenfunctions} illustrates the eigenfunctions of various NTKs using different activation functions. As shown, the STAF activation function results in finer eigenfunctions, which intuitively enhance the ability to learn and reconstruct higher frequency components. Additionally, Figure \ref{fig:ntk_eigenvalues} presents the eigenvalues of different NTKs with various activation functions. The results indicate that STAF produces higher eigenvalues, leading to a faster convergence rate during training. Moreover, STAF also generates a greater number of eigenvalues, compared to ReLU and SIREN. Having more eigenvalues is beneficial because it suggests a richer and more expressive kernel, capable of capturing a wider range of features and details in the data. \added{We emphasize that~\Cref{fig:ntk_combined} presents the empirical NTK eigenfunctions for the image-fitting setting. To support the broader relevance of this phenomenon, we include an analogous empirical NTK analysis on audio in Appendix~\ref{sec:appendix_audio_ntk}, which shows the same overall qualitative pattern.}
\begin{figure}[t]
    \centering
    \subfigure[]
        {\includegraphics[width=0.41\textwidth]{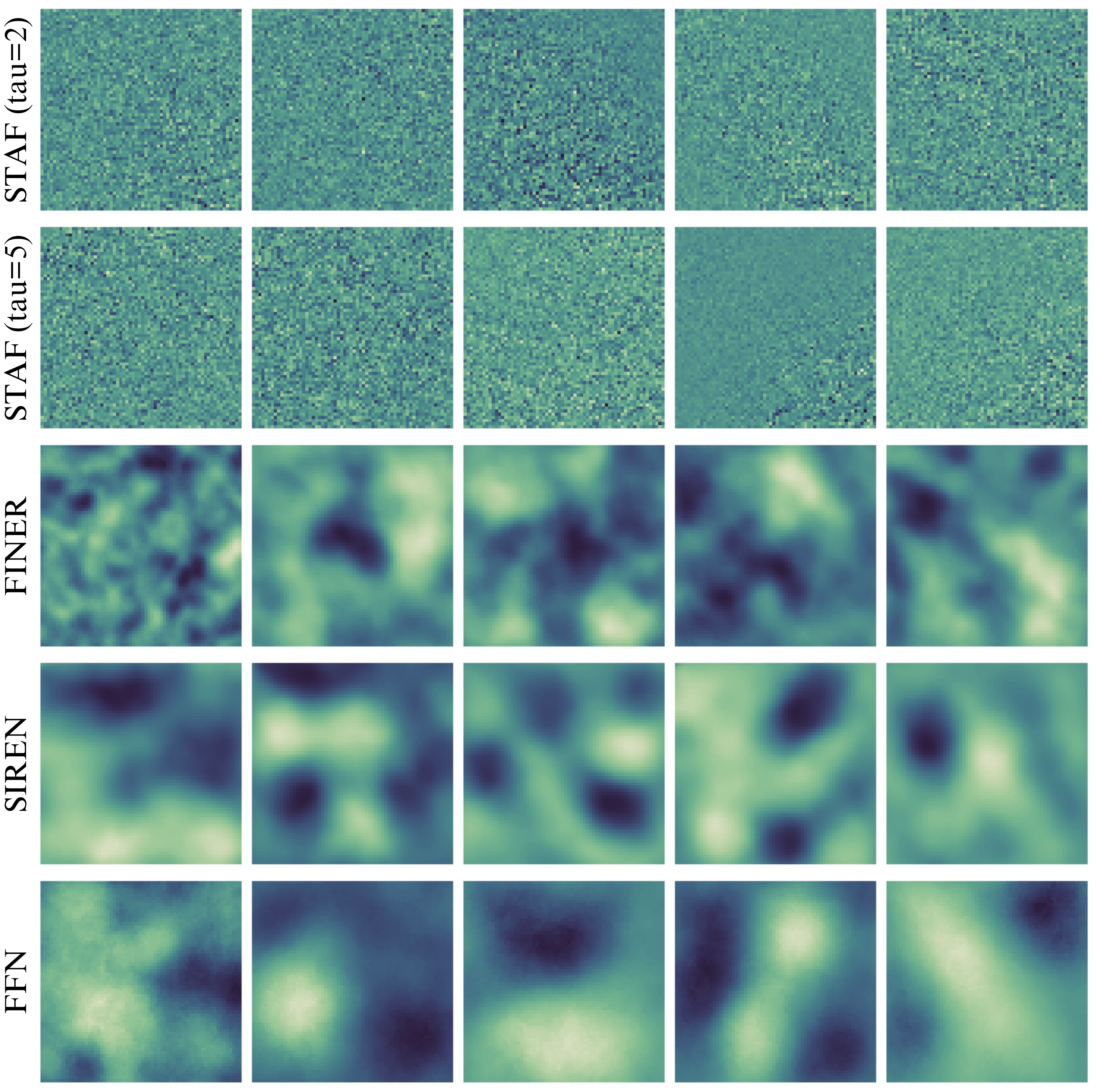}\label{fig:ntk_eigenfunctions}}
    \hfill
    \subfigure[]
        {\includegraphics[width=0.54\textwidth]{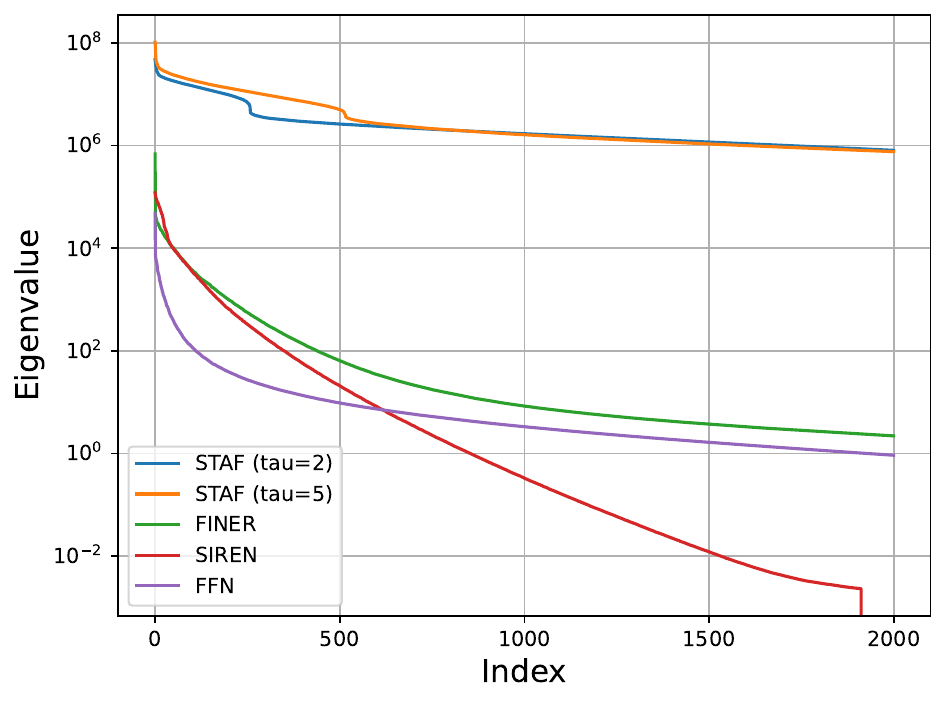}\label{fig:ntk_eigenvalues}}
\caption{\textbf{Empirical NTK analysis.}
(a) The first five eigenfunctions of the empirical NTK for \textbf{STAF} ($\tau\!=\!2,5$), \textbf{FINER}, \textbf{SIREN}, and \textbf{FFN}. 
(b) The corresponding NTK eigenvalue spectra.
\textbf{STAF} exhibits highly consistent eigenfunctions and spectra across $\tau$, indicating that competitive performance can be retained with smaller $\tau$ (i.e., fewer parameters), consistent with Table~\ref{tab:tau_ablation}. 
Although \textbf{SIREN} is closely related to \textbf{STAF} with $\tau\!=\!1$ in form, the different initialization and the use of trainable frequencies/phases lead to markedly different NTK characteristics and a substantial performance gap.}
    \label{fig:ntk_combined}
\end{figure}

\added{\emph{Practical implication of the NTK analysis.} The NTK discussion should be interpreted as guidance about optimization behavior, not just about asymptotic theory. In our setting, trainable sinusoidal parameters change the kernel spectrum and can increase the prominence of components that are otherwise learned slowly. Practically, this suggests that STAF is useful when early recovery of fine structure matters, but it does not imply elimination of spectral bias. The empirical NTK spectra also indicate that competitive behavior can often be retained with smaller $\tau$, which supports the use of small $\tau$ together with layer-wise sharing as a strong default configuration.}

\section{Experimental Results} \label{Experimental_Results}
\begin{figure*}[t]
    \centering    
    \includegraphics[width=0.95\textwidth]{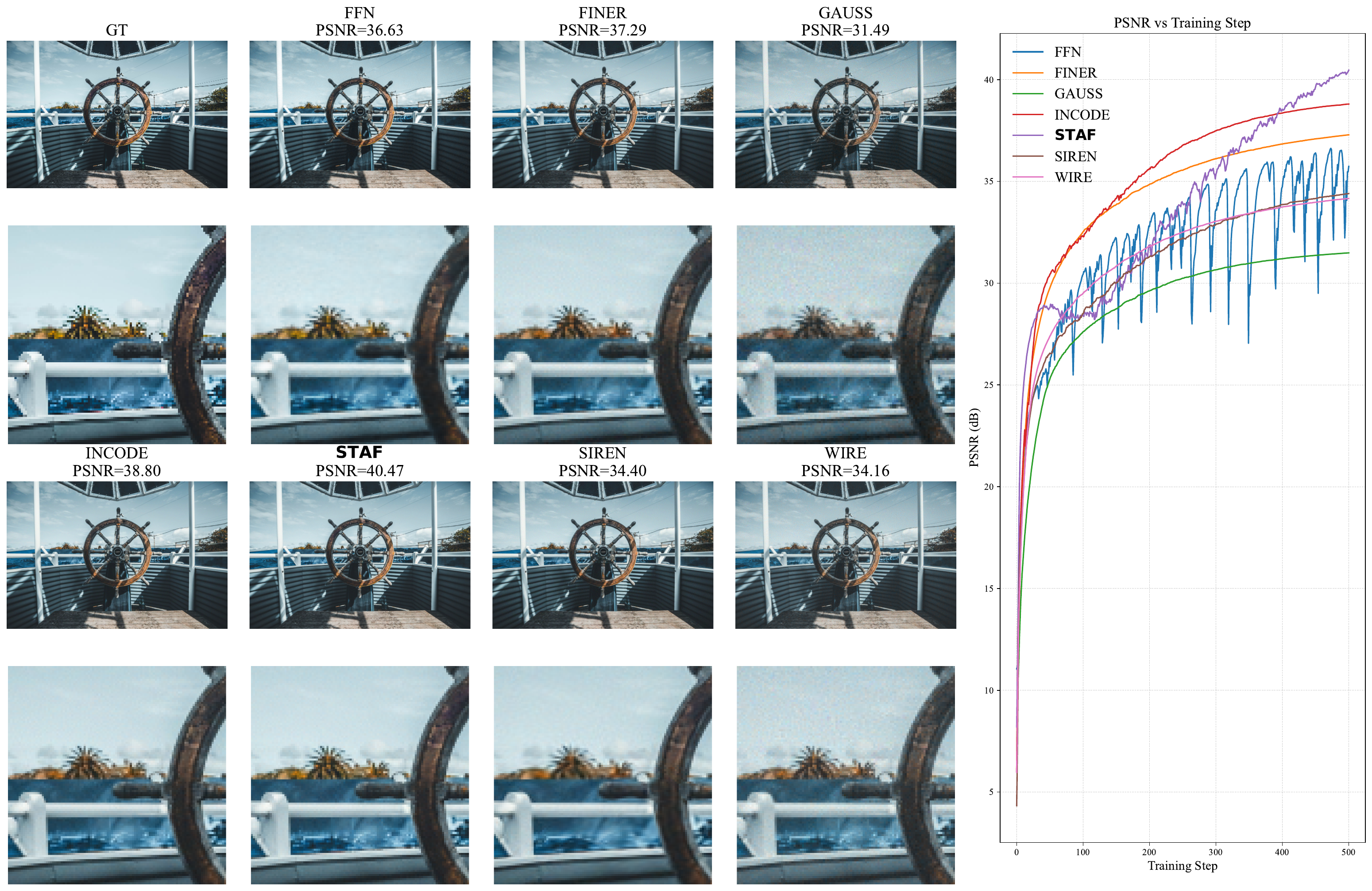}\\
    \caption{\textbf{Image representation quality and convergence.}
    \textit{Left:} qualitative reconstructions using \textbf{STAF} and competing activation functions.
    \textit{Right:} PSNR trajectories over 500 optimization iterations.
    All methods fit the same DIV2K image~\citep{div2k}, downsampled by $4\times$ to $510\times339$; \added{\textbf{STAF} exhibits more sustained improvement during training and achieves the highest final PSNR, while several baselines plateau earlier but at lower final quality.}}
    \label{fig:celtic}
\end{figure*}
We evaluated SOTA models for image, audio, and shape representations, inverse problems such as super-resolution and image denoising, and NeRFs. Specifically, we used an MLP architecture with 3 hidden layers and 256 hidden nodes. The models tested included INCODE, FINER, WIRE, Gauss, FFN, SIREN, ReLU with positional encoding (PEMLP), and MFN~\citep{kazerouni2024incode, liu2024finer, saragadam2023wire, ramasinghe2022beyond, Siren, tancik2020fourier, fathony2020multiplicative}. All experiments used NVIDIA RTX 3090 or A40 GPUs. Our implementation builds on SIREN, WIRE, and INCODE codebases. Learning rates followed optimal settings from original works, and all models were trained with Adam for consistency. STAF was initialized as described in~\cref{sec:model-initialization}, while other models followed their original initialization strategies. We used \(\tau = 5\) for all tasks, except image denoising (\(\tau = 2\)). We also included more ablation studies and implementation details in \textit{Appendix}.
\subsection{Signal Representations}
\noindent\textbf{Image:} We evaluated image representation across multiple datasets and resolutions: 
\texttt{DIV2K}~\citep{div2k} ($\sim510 \times 340$, \cref{fig:celtic,fig:image_rep,tab:div2k_results}), 
\texttt{KODAK}~\citep{kodak_dataset,KodakSuiteOriginal} (24 images, $256 \times 256$, \cref{sec:kodak_and_celeba}), 
\texttt{CelebA}~\citep{liu2015faceattributes} (19{,}867 images, $128 \times 128$, \cref{sec:kodak_and_celeba}), 
and the high-resolution \texttt{Tokyo} panorama~\citep{Dobson2018ShibuyaPanorama} ($6144 \times 2324$, \cref{sec:tokyo}). 

\cref {fig:celtic} (left) shows that STAF produces sharper edges and more faithful texture recovery than the baselines, surpassing the second-best method, INCODE, by +1.67 dB. The right panel quantifies reconstruction quality through PSNR over 500 training steps, showing STAF's effectiveness in addressing the capacity-convergence gap in INR models.  \cref{fig:map} shows activation maps learned during the image reconstruction task. STAF produces more detailed and higher-quality reconstructions compared to SIREN and WIRE, highlighting its ability to capture complex features more effectively.

\noindent\textbf{Shape:} The quantitative and qualitative results of the shape representation are shown in~\cref{tab:sdf} and~\cref{fig:sdf}. Using the Stanford 3D Scanning Repository~\citep{Stanford3DScanRepo} and following the INODE strategy~\citep{kazerouni2024incode}, we generated an occupancy volume by sampling points on a grid \(512 \times 512 \times 512\), assigning 1 to voxels inside the object and 0 outside. The results demonstrate STAF's capability to effectively capture both fine and coarse 3D shape details, achieving higher Intersection over Union (IoU) and lower Chamfer distance (CD) than other methods.

\begin{table}[t]
    \centering
    \begin{minipage}[t]{0.49\textwidth}
        \centering
    \caption{\textbf{3D shape reconstruction (SDF).} IoU~($\uparrow$) and Chamfer Distance (CD)~($\downarrow$) on four shapes, with \textbf{Avg.} over all shapes. \colorbox{purple}{\kern-\fboxsep best\kern-\fboxsep} and \colorbox{light}{\kern-\fboxsep second-best\kern-\fboxsep}. \textbf{STAF} achieves the highest average IoU and the lowest average CD, indicating consistently more accurate surfaces across diverse geometries.}        
        \label{tab:sdf}
        \renewcommand{\arraystretch}{1.2}
        \resizebox{\textwidth}{!}{%
            \begin{tabular}{l|cc|cc|cc|cc|cc}
                \toprule
                \textbf{Method} 
                & \multicolumn{2}{c|}{\textbf{Armadillo}} 
                & \multicolumn{2}{c|}{\textbf{Dragon}} 
                & \multicolumn{2}{c|}{\textbf{Lucy}} 
                & \multicolumn{2}{c|}{\textbf{Thai}} 
                & \multicolumn{2}{c}{\textbf{Avg.}} \\
                \cmidrule(lr){2-3} \cmidrule(lr){4-5} \cmidrule(lr){6-7} \cmidrule(lr){8-9} \cmidrule(lr){10-11}
                & IoU & CD & IoU & CD & IoU & CD & IoU & CD & IoU & CD \\
                \midrule
                PEMLP & 0.9958 & 3.70e-7 & 0.9966 & 2.73e-7 & 0.9920 & 1.80e-6 & 0.9911 & 2.05e-6 & 0.9939 & 1.12e-6 \\
                SIREN       & 0.9962 & 3.62e-7 & \cellcolor{light}0.9971 & \cellcolor{light}2.60e-7 & 0.9892 & 2.19e-6 & \cellcolor{light}0.9929 & \cellcolor{light}9.59e-7 & 0.9939 & 9.43e-7 \\
                WIRE        & 0.9721 & 6.54e-6 & 0.9749 & 4.46e-6 & 0.9554 & 2.06e-5 & 0.9507 & 1.55e-5 & 0.9633 & 1.18e-5 \\
                FINER       & \cellcolor{light}0.9965 & 3.57e-7 & 0.9958 & 3.06e-7 & \cellcolor{light}0.9962 & \cellcolor{light}1.49e-6 & 0.9923 & 1.15e-6 & \cellcolor{light}0.9952 & \cellcolor{light}8.24e-7 \\
                INCODE      & 0.9964 & \cellcolor{light}3.54e-7 & 0.9969 & 2.65e-7& 0.9946 & 1.60e-6 & 0.9924 & 1.42e-6 & 0.9951 & 9.10e-7 \\
                \midrule
                \textbf{STAF} & \cellcolor{purple}0.9972 & \cellcolor{purple}{3.51e-7} & \cellcolor{purple}{0.9973} & \cellcolor{purple}{2.55e-7} & \cellcolor{purple}{0.9971} & \cellcolor{purple}{1.39e-6} & \cellcolor{purple}{0.9935} & \cellcolor{purple}{9.20e-7} & \cellcolor{purple}{0.9963} & \cellcolor{purple}7.29e-7 \\
                \bottomrule
            \end{tabular}%
        }
    \end{minipage}
    \hfill
    \begin{minipage}[t]{0.49\textwidth} 
        \centering
\caption{\textbf{NeRF view synthesis.} PSNR~($\uparrow$) and LPIPS~($\downarrow$) across five objects. \colorbox{purple}{\kern-\fboxsep best\kern-\fboxsep} and \colorbox{light}{\kern-\fboxsep second-best\kern-\fboxsep}. \textbf{STAF} attains the best PSNR on most objects while substantially reducing LPIPS, showing improved fidelity \emph{and} perceptual quality compared to prior activations.}        
        \label{tab:nerf_results}
        \renewcommand{\arraystretch}{1.2}
        \resizebox{\textwidth}{!}{%
            \begin{tabular}{l|cc|cc|cc|cc|cc}
                \toprule
                \textbf{Method} 
                & \multicolumn{2}{c|}{\textbf{LEGO}} 
                & \multicolumn{2}{c|}{\textbf{Drums}} 
                & \multicolumn{2}{c|}{\textbf{Chair}} 
                & \multicolumn{2}{c|}{\textbf{Hotdog}} 
                & \multicolumn{2}{c}{\textbf{Ship}} \\
                \cmidrule(lr){2-3} \cmidrule(lr){4-5} \cmidrule(lr){6-7} \cmidrule(lr){8-9} \cmidrule(lr){10-11}
                & PSNR & LPIPS & PSNR & LPIPS & PSNR & LPIPS & PSNR & LPIPS & PSNR & LPIPS \\
                \midrule
                PEMLP & 25.96 & \cellcolor{light}0.127 & 22.26 & \cellcolor{purple}{0.147} & 28.49 & \cellcolor{purple}{0.084} & 31.96 & \cellcolor{purple}{0.053} & 25.65 & \cellcolor{light}0.217 \\
                Gauss       & 25.15 & 0.143 & 22.05 & 0.167 & 28.87 & \cellcolor{light}0.087 & 32.39 & \cellcolor{light}0.056 & 25.07 & 0.222 \\
                SIREN       & 26.27 & 0.159 & 22.94 & 0.168 & 29.71 & \cellcolor{light}0.087 & 32.85 & 0.058 & 26.00 & 0.220 \\
                WIRE        & 25.31 & 0.150 & 21.89 & 0.165 & 28.63 & 0.088 & 32.14 & 0.061 & 25.77 & 0.225 \\
                FINER       & \cellcolor{light}26.62 & 0.152 & \cellcolor{light}23.21 & 0.175 & \cellcolor{light}29.93 & \cellcolor{light}0.087 & \cellcolor{purple}{33.64} & 0.058 & \cellcolor{light}26.20 & 0.229 \\
                \midrule
                \textbf{STAF} & \cellcolor{purple}{26.74} & \cellcolor{purple}{0.107} & \cellcolor{purple}{23.24} & \cellcolor{light}0.156 & \cellcolor{purple}{30.17} & \cellcolor{purple}{0.084} & \cellcolor{light}33.29 & 0.058 & \cellcolor{purple}{26.28} & \cellcolor{purple}{0.206} \\
                \bottomrule
            \end{tabular}%
        }
    \end{minipage}
\end{table}

\noindent\textbf{Audio:} For the audio task, we used a 7-second clip from Bach’s Cello Suite No. 1: Prelude~\citep{Siren}, sampled at 44,100 Hz. \cref{fig:audio} illustrates the waveforms and reconstruction errors, where STAF demonstrates the highest PSNR, the lowest reconstruction error, and superior fidelity.

\subsection{Inverse Problems}
The results in~\cref{fig:4x_super_res,fig:img_denoising} show that STAF \added{is particularly effective} in both super-resolution and denoising. As interpolants, INRs carry inherent biases that can be exploited for inverse problems such as super-resolution. In $4\times$ super-resolution, STAF achieves the best performance (30.54 dB PSNR, 0.89 SSIM), surpassing INCODE (29.88 dB) and FFN (29.41 dB). While SIREN and FINER recover textures reasonably well, they fail to capture high-frequency details, and Gauss suffers from heavy blurring. For denoising, we simulated severe low-light conditions by adding Poisson-distributed photon noise (mean count = 10), producing highly corrupted images. STAF again achieves the highest PSNR (24.19 dB), effectively suppressing noise and retaining details, whereas FINER and FFN show color artifacts. These results demonstrate STAF’s effectiveness in high-resolution recovery and inverse problem settings. \added{It should be noted that while we use $\tau = 5$ for most tasks as a practical quality–efficiency trade-off, we found that denoising benefits from a smaller value, $\tau = 2$. Our intuition is that denoising differs from reconstruction tasks because part of the high-frequency content is nuisance noise rather than signal. While larger $\tau$ increases the expressiveness of the sinusoidal basis and is helpful for recovering fine structure, it can also make the model more likely to fit noise. In this setting, a smaller $\tau$ acts as a stronger implicit regularizer, encouraging smoother reconstructions and improving denoising quality.}

\begin{figure}[t]
    \centering
    \includegraphics[width=\textwidth]{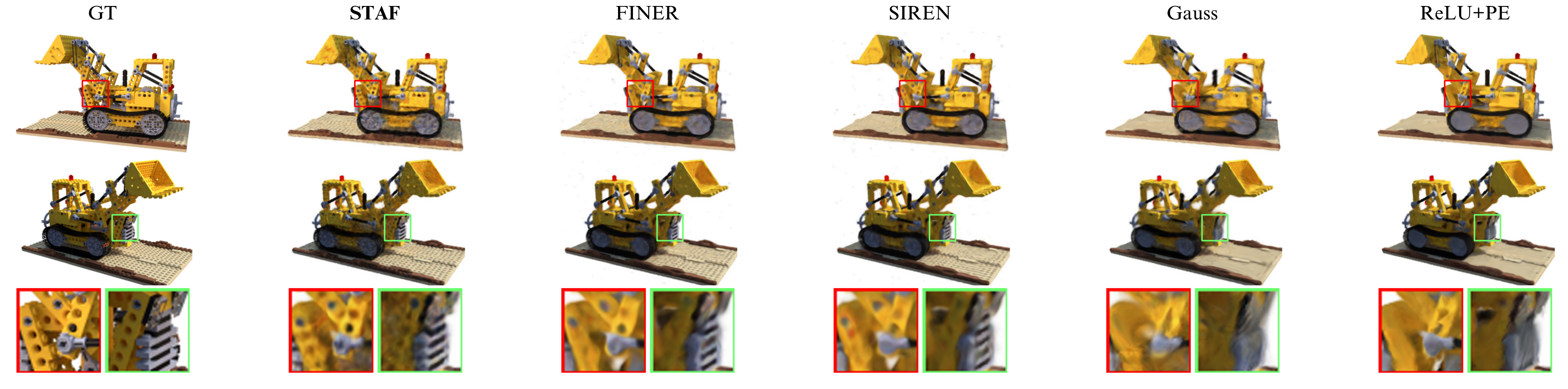} 
\caption{\textbf{NeRF novel view synthesis comparison.} Predicted novel views using \textbf{STAF} and competing activations, alongside the ground truth. Insets (colored boxes) zoom into fine geometric and texture details; \textbf{STAF} better preserves sharp edges and reduces blurring/ghosting artifacts in the reconstructed views.}    
\vspace{-1em}
    \label{fig:nerf_visual}
\end{figure}

\subsection{Neural Radiance Fields (NeRFs)}
NeRFs~\citep{mildenhall2020nerf} utilize INRs and volumetric rendering, where MLPs with ReLU activations and positional encoding are trained to model scenes for novel view synthesis. These models learn a continuous function over 3D space $(x, y, z)$ and view directions $(\theta, \phi)$ to predict color and density at each location. This setup enables the reconstruction of new views by simulating how rays travel through the scene from different camera viewpoints. We evaluate the use of STAF within the NeRF framework, deliberately omitting positional encodings. Our results, shown in~\cref{tab:nerf_results} and~\cref{fig:nerf_visual}, indicate that \added{STAF is competitive and often best in PSNR across the evaluated scenes}. Training details are provided in~\cref{sec:nerf}.

\added{Across the evaluated tasks, STAF helps most when the target signal contains mixed frequencies or repetitive fine detail, particularly in settings without positional encodings. Its advantages are smaller when the baseline already captures the dominant structure well or when the task is less constrained by frequency capacity. We also do not claim that STAF dominates every baseline on every metric; rather, it provides a favorable trade-off between fidelity, optimization behavior, and parameter efficiency in a broad but bounded range of INR settings.}
\section{Conclusion}
We presented a unified view of trainable sinusoidal activations for INRs and instantiated it with STAF, a Fourier-series activation whose parameters are learned. Our theory explains increased frequency capacity and improved optimization via NTK analysis and a Kronecker-equivalence construction; our initialization ensures well-scaled post-activations. \added{Empirically, STAF is a strong and broadly useful member of the sinusoidal activation family for INRs, with the largest gains appearing on tasks with mixed frequencies or repetitive fine detail, especially without positional encodings.} Periodic activations do not remove spectral bias outright, but they improve the practical capacity--convergence trade-off, offering a simple, general recipe for higher-fidelity INRs.

\bibliography{tmlr}

\begin{thebibliography}{68}
\providecommand{\natexlab}[1]{#1}
\providecommand{\url}[1]{\texttt{#1}}
\expandafter\ifx\csname urlstyle\endcsname\relax
  \providecommand{\doi}[1]{doi: #1}\else
  \providecommand{\doi}{doi: \begingroup \urlstyle{rm}\Url}\fi

\bibitem[Aftab et~al.(2022)Aftab, Morsali, and Ghaemmaghami]{aftab2022multi}
Arya Aftab, Alireza Morsali, and Shahrokh Ghaemmaghami.
\newblock Multi-head relu implicit neural representation networks.
\newblock In \emph{ICASSP 2022-2022 IEEE International Conference on Acoustics, Speech and Signal Processing (ICASSP)}, pp.\  2510--2514. IEEE, 2022.

\bibitem[Albrecht et~al.(2023)Albrecht, Entzian, and Iske]{Albrecht2023ProductKA}
Kristof Albrecht, Juliane Entzian, and Armin Iske.
\newblock Product kernels are efficient and flexible tools for high-dimensional scattered interpolation.
\newblock \emph{ArXiv}, abs/2312.09949, 2023.
\newblock URL \url{https://api.semanticscholar.org/CorpusID:266335528}.

\bibitem[Ashendorf et~al.(2014)Ashendorf, Wong, Eils, and Gunawardena]{Ashendorf2014-Supplement}
Tobias Ashendorf, Felix Wong, Roland Eils, and Jeremy Gunawardena.
\newblock A framework for modelling gene regulation which accommodates non-equilibrium mechanisms: Additional file 1.
\newblock Supplementary material to the article published in BMC Biology, Dec 2014.
\newblock Available at \url{https://vcp.med.harvard.edu/papers/jg-genex-supp.pdf}.

\bibitem[Bai et~al.(2023)Bai, Liu, Gupta, Alzubaidi, Feng, and Gu]{bai2023physics}
Jinshuai Bai, Gui-Rong Liu, Ashish Gupta, Laith Alzubaidi, Xi-Qiao Feng, and YuanTong Gu.
\newblock Physics-informed radial basis network ({PIRBN}): A local approximating neural network for solving nonlinear partial differential equations.
\newblock \emph{Computer Methods in Applied Mechanics and Engineering}, 415:\penalty0 116290, 2023.

\bibitem[Barron et~al.(2023)Barron, Mildenhall, Verbin, Srinivasan, and Hedman]{barron2023zip}
Jonathan~T Barron, Ben Mildenhall, Dor Verbin, Pratul~P Srinivasan, and Peter Hedman.
\newblock Zip-nerf: Anti-aliased grid-based neural radiance fields.
\newblock In \emph{Proceedings of the IEEE/CVF International Conference on Computer Vision}, pp.\  19697--19705, 2023.

\bibitem[Biswas et~al.(2021)Biswas, Kumar, Banerjee, and Pandey]{biswas2021tanhsoft}
Koushik Biswas, Sandeep Kumar, Shilpak Banerjee, and Ashish~Kumar Pandey.
\newblock Tanhsoft—dynamic trainable activation functions for faster learning and better performance.
\newblock \emph{IEEE Access}, 9:\penalty0 120613--120623, 2021.

\bibitem[Braun(2005)]{braun2005spectral}
Mikio~Ludwig Braun.
\newblock \emph{Spectral properties of the kernel matrix and their relation to kernel methods in machine learning}.
\newblock PhD thesis, Universit{\"a}ts-und Landesbibliothek Bonn, 2005.

\bibitem[Chen et~al.(2024)Chen, Wu, Harandi, and Cai]{chen2024far}
Yihang Chen, Qianyi Wu, Mehrtash Harandi, and Jianfei Cai.
\newblock How far can we compress instant-ngp-based {NeRF}?
\newblock In \emph{Proceedings of the IEEE/CVF Conference on Computer Vision and Pattern Recognition}, pp.\  20321--20330, 2024.

\bibitem[Chen et~al.(2022)Chen, Shi, Rudner, Feng, Zhang, and Zhang]{chen2022neural}
Zonghao Chen, Xupeng Shi, Tim~GJ Rudner, Qixuan Feng, Weizhong Zhang, and Tong Zhang.
\newblock A neural tangent kernel perspective on function-space regularization in neural networks.
\newblock In \emph{OPT 2022: Optimization for Machine Learning (NeurIPS 2022 Workshop)}, 2022.

\bibitem[Churchill(1946)]{Churchill1946-in}
Edmund Churchill.
\newblock Information given by odd moments.
\newblock \emph{Ann. Math. Stat.}, 17\penalty0 (2):\penalty0 244--246, 1946.

\bibitem[Dobson(2018)]{Dobson2018ShibuyaPanorama}
Trevor Dobson.
\newblock 1.2 gigapixel panorama of shibuya in tokyo, japan.
\newblock \url{https://www.flickr.com/photos/trevor_dobson_inefekt69/29314390837}, 2018.
\newblock Accessed: 2025-05-14.

\bibitem[Elfwing et~al.(2018)Elfwing, Uchibe, and Doya]{elfwing2018sigmoid}
Stefan Elfwing, Eiji Uchibe, and Kenji Doya.
\newblock Sigmoid-weighted linear units for neural network function approximation in reinforcement learning.
\newblock \emph{Neural Networks}, 107:\penalty0 3--11, 2018.

\bibitem[Fathony et~al.(2020)Fathony, Sahu, Willmott, and Kolter]{fathony2020multiplicative}
Rizal Fathony, Anit~Kumar Sahu, Devin Willmott, and J~Zico Kolter.
\newblock Multiplicative filter networks.
\newblock In \emph{International Conference on Learning Representations}, 2020.

\bibitem[Franzen(1999)]{KodakSuiteOriginal}
Rich Franzen.
\newblock Kodak lossless true color image suite.
\newblock \url{http://r0k.us/graphics/kodak/}, 1999.
\newblock Accessed: 2025-05-14.

\bibitem[Gallant \& White(1988)Gallant and White]{gallant1988there}
A~Ronald Gallant and Halbert White.
\newblock There exists a neural network that does not make avoidable mistakes.
\newblock In \emph{ICNN}, pp.\  657--664, 1988.

\bibitem[Ghojogh et~al.(2021)Ghojogh, Ghodsi, Karray, and Crowley]{ghojogh2021reproducing}
Benyamin Ghojogh, Ali Ghodsi, Fakhri Karray, and Mark Crowley.
\newblock Reproducing kernel hilbert space, mercer's theorem, eigenfunctions, nystr$\backslash$" om method, and use of kernels in machine learning: Tutorial and survey.
\newblock \emph{arXiv preprint arXiv:2106.08443}, 2021.

\bibitem[Golikov et~al.(2022)Golikov, Pokonechnyy, and Korviakov]{golikov2022neural}
Eugene Golikov, Eduard Pokonechnyy, and Vladimir Korviakov.
\newblock Neural tangent kernel: A survey.
\newblock \emph{arXiv preprint arXiv:2208.13614}, 2022.

\bibitem[Hendrycks \& Gimpel(2016)Hendrycks and Gimpel]{hendrycks2016gaussian}
Dan Hendrycks and Kevin Gimpel.
\newblock Gaussian error linear units (gelus).
\newblock \emph{arXiv preprint arXiv:1606.08415}, 2016.

\bibitem[Jacot et~al.(2018)Jacot, Gabriel, and Hongler]{jacot2018neural}
Arthur Jacot, Franck Gabriel, and Cl{\'e}ment Hongler.
\newblock Neural tangent kernel: Convergence and generalization in neural networks.
\newblock \emph{Advances in neural information processing systems}, 31, 2018.

\bibitem[Jagtap et~al.(2022)Jagtap, Shin, Kawaguchi, and Karniadakis]{jagtap2022deep}
Ameya~D Jagtap, Yeonjong Shin, Kenji Kawaguchi, and George~Em Karniadakis.
\newblock Deep kronecker neural networks: A general framework for neural networks with adaptive activation functions.
\newblock \emph{Neurocomputing}, 468:\penalty0 165--180, 2022.

\bibitem[Kadarvish et~al.(2021)Kadarvish, Mojtahedi, Zarch, Kazerouni, Morsali, Abtahi, and Marvasti]{kadarvish2021ensemble}
Milad~Soltany Kadarvish, Hesam Mojtahedi, Hossein~Entezari Zarch, Amirhossein Kazerouni, Alireza Morsali, Azra Abtahi, and Farokh Marvasti.
\newblock Ensemble neural representation networks.
\newblock \emph{arXiv preprint arXiv:2110.04124}, 2021.

\bibitem[Kazerouni et~al.(2024)Kazerouni, Azad, Hosseini, Merhof, and Bagci]{kazerouni2024incode}
Amirhossein Kazerouni, Reza Azad, Alireza Hosseini, Dorit Merhof, and Ulas Bagci.
\newblock {INCODE}: Implicit neural conditioning with prior knowledge embeddings.
\newblock In \emph{Proceedings of the IEEE/CVF Winter Conference on Applications of Computer Vision}, pp.\  1298--1307, 2024.

\bibitem[Kazerouni et~al.(2025)Kazerouni, Mehraban, Brudno, and Taati]{kazerouni2025lift}
Amirhossein Kazerouni, Soroush Mehraban, Michael Brudno, and Babak Taati.
\newblock Lift: Latent implicit functions for task-and data-agnostic encoding.
\newblock In \emph{Proceedings of the IEEE/CVF International Conference on Computer Vision}, pp.\  4828--4837, 2025.

\bibitem[Kiselman(2012)]{kiselman2012asymptotic}
Christer Kiselman.
\newblock Asymptotic properties of the delannoy numbers and similar arrays.
\newblock \emph{Preprint}, pp.\  5--6, 2012.

\bibitem[Laboratory(1999)]{Stanford3DScanRepo}
Stanford Computer~Graphics Laboratory.
\newblock The stanford 3d scanning repository.
\newblock \url{https://graphics.stanford.edu/data/3Dscanrep/}, 1999.
\newblock Accessed: 2025-05-14.

\bibitem[Lapedes \& Farber(1987)Lapedes and Farber]{lapedes1987nonlinear}
Alan Lapedes and Robert Farber.
\newblock Nonlinear signal processing using neural networks: Prediction and system modelling.
\newblock Technical report, 1987.

\bibitem[Li et~al.(2025)Li, Fidler, Kanazawa, and Williams]{li2025nerf}
Ruilong Li, Sanja Fidler, Angjoo Kanazawa, and Francis Williams.
\newblock Nerf-xl: Scaling nerfs with multiple gpus.
\newblock In \emph{European Conference on Computer Vision}, pp.\  92--107. Springer, 2025.

\bibitem[Lin et~al.(2024)Lin, Fu, Merth, Yang, and Ranjan]{lin2024fastsr}
Chien-Yu Lin, Qichen Fu, Thomas Merth, Karren Yang, and Anurag Ranjan.
\newblock Fastsr-nerf: Improving nerf efficiency on consumer devices with a simple super-resolution pipeline.
\newblock In \emph{Proceedings of the IEEE/CVF Winter Conference on Applications of Computer Vision}, pp.\  6036--6045, 2024.

\bibitem[Liu et~al.(2024{\natexlab{a}})Liu, Zhu, Zhang, Fu, Deng, Ma, Guo, and Cao]{liu2024finer}
Zhen Liu, Hao Zhu, Qi~Zhang, Jingde Fu, Weibing Deng, Zhan Ma, Yanwen Guo, and Xun Cao.
\newblock {FINER}: Flexible spectral-bias tuning in implicit neural representation by variable-periodic activation functions.
\newblock In \emph{Proceedings of the IEEE/CVF Conference on Computer Vision and Pattern Recognition}, pp.\  2713--2722, 2024{\natexlab{a}}.

\bibitem[Liu et~al.(2024{\natexlab{b}})Liu, Wang, Vaidya, Ruehle, Halverson, Solja{\v{c}}i{\'c}, Hou, and Tegmark]{liu2024kan}
Ziming Liu, Yixuan Wang, Sachin Vaidya, Fabian Ruehle, James Halverson, Marin Solja{\v{c}}i{\'c}, Thomas~Y Hou, and Max Tegmark.
\newblock {KAN}: Kolmogorov-arnold networks.
\newblock \emph{arXiv preprint arXiv:2404.19756}, 2024{\natexlab{b}}.

\bibitem[Liu et~al.(2015)Liu, Luo, Wang, and Tang]{liu2015faceattributes}
Ziwei Liu, Ping Luo, Xiaogang Wang, and Xiaoou Tang.
\newblock Deep learning face attributes in the wild.
\newblock In \emph{Proceedings of International Conference on Computer Vision (ICCV)}, December 2015.

\bibitem[Maas et~al.(2013)Maas, Hannun, Ng, et~al.]{maas2013rectifier}
Andrew~L Maas, Awni~Y Hannun, Andrew~Y Ng, et~al.
\newblock Rectifier nonlinearities improve neural network acoustic models.
\newblock In \emph{Proc. icml}, volume~30, pp.\ ~3. Citeseer, 2013.

\bibitem[Martel et~al.(2021)Martel, Lindell, Lin, Chan, Monteiro, and Wetzstein]{acorn}
Julien~NP Martel, David~B Lindell, Connor~Z Lin, Eric~R Chan, Marco Monteiro, and Gordon Wetzstein.
\newblock Acorn: Adaptive coordinate networks for neural scene representation.
\newblock \emph{arXiv preprint arXiv:2105.02788}, 2021.

\bibitem[Matousek(2013)]{matousek2013lectures}
Jiri Matousek.
\newblock \emph{Lectures on discrete geometry}, volume 212.
\newblock Springer Science \& Business Media, 2013.

\bibitem[Mehta et~al.(2021)Mehta, Gharbi, Barnes, Shechtman, Ramamoorthi, and Chandraker]{mehta2021modulated}
Ishit Mehta, Micha{\"e}l Gharbi, Connelly Barnes, Eli Shechtman, Ravi Ramamoorthi, and Manmohan Chandraker.
\newblock Modulated periodic activations for generalizable local functional representations.
\newblock In \emph{Proceedings of the IEEE/CVF International Conference on Computer Vision}, pp.\  14214--14223, 2021.

\bibitem[Mehta(2022)]{kodak_dataset}
Sheryl Mehta.
\newblock Kodak dataset.
\newblock \url{https://www.kaggle.com/datasets/sherylmehta/kodak-dataset}, 2022.
\newblock Accessed: 2025-05-13.

\bibitem[Mildenhall et~al.(2020)Mildenhall, Srinivasan, Tancik, Barron, Ramamoorthi, and Ng]{mildenhall2020nerf}
Ben Mildenhall, Pratul~P. Srinivasan, Matthew Tancik, Jonathan~T. Barron, Ravi Ramamoorthi, and Ren Ng.
\newblock {NeRF}: Representing scenes as neural radiance fields for view synthesis.
\newblock In \emph{ECCV}, 2020.

\bibitem[Mohamadi et~al.(2023)Mohamadi, Bae, and Sutherland]{mohamadi2023fast}
Mohamad~Amin Mohamadi, Wonho Bae, and Danica~J Sutherland.
\newblock A fast, well-founded approximation to the empirical neural tangent kernel.
\newblock In \emph{International Conference on Machine Learning}, pp.\  25061--25081. PMLR, 2023.

\bibitem[{MrYxJ}(2023)]{calculateflops}
{MrYxJ}.
\newblock calculate-flops.pytorch.
\newblock \url{https://github.com/MrYxJ/calculate-flops.pytorch}, 2023.
\newblock Accessed: 2025-05-15.

\bibitem[M{\"u}ller et~al.(2022)M{\"u}ller, Evans, Schied, and Keller]{muller2022instant}
Thomas M{\"u}ller, Alex Evans, Christoph Schied, and Alexander Keller.
\newblock Instant neural graphics primitives with a multiresolution hash encoding.
\newblock \emph{ACM transactions on graphics (TOG)}, 41\penalty0 (4):\penalty0 1--15, 2022.

\bibitem[Nair \& Hinton(2010)Nair and Hinton]{nair2010rectified}
Vinod Nair and Geoffrey~E Hinton.
\newblock Rectified linear units improve restricted boltzmann machines.
\newblock In \emph{Icml}, 2010.

\bibitem[Novak et~al.(2019)Novak, Xiao, Hron, Lee, Alemi, Sohl-Dickstein, and Schoenholz]{novak2019neural}
Roman Novak, Lechao Xiao, Jiri Hron, Jaehoon Lee, Alexander~A Alemi, Jascha Sohl-Dickstein, and Samuel~S Schoenholz.
\newblock Neural tangents: Fast and easy infinite neural networks in python.
\newblock \emph{arXiv preprint arXiv:1912.02803}, 2019.

\bibitem[Novello et~al.(2025)Novello, Aldana, Araujo, and Velho]{novello2025tuner}
Tiago Novello, Diana Aldana, Andre Araujo, and Luiz Velho.
\newblock Tuning the frequencies: Robust training for sinusoidal neural networks.
\newblock In \emph{Proceedings of the Computer Vision and Pattern Recognition Conference (CVPR)}, pp.\  3071--3080, June 2025.

\bibitem[Parascandolo et~al.(2016)Parascandolo, Huttunen, and Virtanen]{parascandolo2016taming}
Giambattista Parascandolo, Heikki Huttunen, and Tuomas Virtanen.
\newblock Taming the waves: sine as activation function in deep neural networks.
\newblock 2016.

\bibitem[Paul et~al.(2022)Paul, Bandyopadhyay, Yoon, Geem, and Sarkar]{paul2022sinlu}
Ashis Paul, Rajarshi Bandyopadhyay, Jin~Hee Yoon, Zong~Woo Geem, and Ram Sarkar.
\newblock Sinlu: Sinu-sigmoidal linear unit.
\newblock \emph{Mathematics}, 10\penalty0 (3):\penalty0 337, 2022.

\bibitem[Radhakrishnan(2024)]{modernml2024}
Adityanarayanan Radhakrishnan.
\newblock Modern machine learning: Simple methods that work, 2024.
\newblock Lectures 5 and 6, available at \url{https://web.mit.edu/modernml/course/}.

\bibitem[Rahaman et~al.(2019)Rahaman, Baratin, Arpit, Draxler, Lin, Hamprecht, Bengio, and Courville]{rahaman2019spectral}
Nasim Rahaman, Aristide Baratin, Devansh Arpit, Felix Draxler, Min Lin, Fred Hamprecht, Yoshua Bengio, and Aaron Courville.
\newblock On the spectral bias of neural networks.
\newblock In \emph{International Conference on Machine Learning}, pp.\  5301--5310. PMLR, 2019.

\bibitem[Ramachandran et~al.(2017)Ramachandran, Zoph, and Le]{ramachandran2017searching}
Prajit Ramachandran, Barret Zoph, and Quoc~V Le.
\newblock Searching for activation functions.
\newblock \emph{arXiv preprint arXiv:1710.05941}, 2017.

\bibitem[Ramasinghe \& Lucey(2022)Ramasinghe and Lucey]{ramasinghe2022beyond}
Sameera Ramasinghe and Simon Lucey.
\newblock Beyond periodicity: Towards a unifying framework for activations in coordinate-mlps.
\newblock In \emph{European Conference on Computer Vision}, pp.\  142--158. Springer, 2022.

\bibitem[Reiser et~al.(2021)Reiser, Peng, Liao, and Geiger]{reiser2021kilonerf}
Christian Reiser, Songyou Peng, Yiyi Liao, and Andreas Geiger.
\newblock Kilonerf: Speeding up neural radiance fields with thousands of tiny mlps.
\newblock In \emph{Proceedings of the IEEE/CVF International Conference on Computer Vision}, pp.\  14335--14345, 2021.

\bibitem[Rubel et~al.(2025)Rubel, Meyers, and Nicolosi]{rubel2025fourier}
Mominul Rubel, Adam Meyers, and Gabriel Nicolosi.
\newblock Fourier learning machines: Nonharmonic fourier-based neural networks for scientific machine learning.
\newblock \emph{Transactions on Machine Learning Research}, 2025.
\newblock ISSN 2835-8856.
\newblock URL \url{https://openreview.net/forum?id=LPKt5vd7yz}.

\bibitem[Saragadam et~al.(2022)Saragadam, Tan, Balakrishnan, Baraniuk, and Veeraraghavan]{saragadam2022miner}
Vishwanath Saragadam, Jasper Tan, Guha Balakrishnan, Richard~G Baraniuk, and Ashok Veeraraghavan.
\newblock {MINER}: Multiscale implicit neural representation.
\newblock In \emph{European Conference on Computer Vision}, pp.\  318--333. Springer, 2022.

\bibitem[Saragadam et~al.(2023)Saragadam, LeJeune, Tan, Balakrishnan, Veeraraghavan, and Baraniuk]{saragadam2023wire}
Vishwanath Saragadam, Daniel LeJeune, Jasper Tan, Guha Balakrishnan, Ashok Veeraraghavan, and Richard~G Baraniuk.
\newblock {WIRE}: Wavelet implicit neural representations.
\newblock In \emph{Proceedings of the IEEE/CVF Conference on Computer Vision and Pattern Recognition}, pp.\  18507--18516, 2023.

\bibitem[Shiryaev(2016)]{Shiryaev2016-xv}
Albert~N Shiryaev.
\newblock \emph{Probability-1}.
\newblock Graduate Texts in Mathematics. Springer, New York, NY, 3 edition, July 2016.

\bibitem[Sitzmann et~al.(2020)Sitzmann, Martel, Bergman, Lindell, and Wetzstein]{Siren}
Vincent Sitzmann, Julien Martel, Alexander Bergman, David Lindell, and Gordon Wetzstein.
\newblock Implicit neural representations with periodic activation functions.
\newblock \emph{Advances in Neural Information Processing Systems}, 33:\penalty0 7462--7473, 2020.

\bibitem[SS et~al.(2024)]{ss2024chebyshev}
Sidharth SS et~al.
\newblock Chebyshev polynomial-based kolmogorov-arnold networks: An efficient architecture for nonlinear function approximation.
\newblock \emph{arXiv e-prints}, pp.\  arXiv--2405, 2024.

\bibitem[Stewart(2022)]{stewart2022galois}
Ian Stewart.
\newblock \emph{Galois Theory}.
\newblock Chapman and Hall/CRC, 2022.
\newblock Exercise 6.10.

\bibitem[Tancik et~al.(2020)Tancik, Srinivasan, Mildenhall, Fridovich-Keil, Raghavan, Singhal, Ramamoorthi, Barron, and Ng]{tancik2020fourier}
Matthew Tancik, Pratul Srinivasan, Ben Mildenhall, Sara Fridovich-Keil, Nithin Raghavan, Utkarsh Singhal, Ravi Ramamoorthi, Jonathan Barron, and Ren Ng.
\newblock Fourier features let networks learn high frequency functions in low dimensional domains.
\newblock \emph{Advances in Neural Information Processing Systems}, 33:\penalty0 7537--7547, 2020.

\bibitem[Tang(2022)]{tang2022torch}
Jiaxiang Tang.
\newblock Torch-ngp: A pytorch implementation of instant-ngp, 2022.

\bibitem[Tao \& Vu(2006)Tao and Vu]{tao2006additive}
Terence Tao and Van~H Vu.
\newblock \emph{Additive combinatorics}, volume 105.
\newblock Cambridge University Press, 2006.

\bibitem[Timofte et~al.(2018)Timofte, Gu, Wu, Van~Gool, Zhang, Yang, Haris, Shakhnarovich, Ukita, Hu, Bei, Hui, Jiang, Gu, Liu, Wang, Perazzi, McWilliams, Sorkine-Hornung, Sorkine-Hornung, Schroers, Yu, Fan, Yang, Xu, Wang, Wang, Huang, Wang, Yu, Hui, Dong, Lin, Change~Loy, Park, Kim, Chun, Zhang, Liu, Zuo, Guo, Liu, Xu, Liu, Xiong, Dong, Bai, Damian, Ravi, Menon, Rudin, Seo, Jeon, Koo, Jeon, Kim, Choi, Ki, Seo, Sim, Kim, Kim, Chen, Zeng, Guo, Qu, Li, Ahn, Kang, Sohn, Yuan, Zhang, Pang, Xu, Zhao, Deng, Ul~Hussain, Aadil, Rahim, Cai, Huang, Xu, Michelini, Zhu, Liu, Kim, Lee, Huang, Qiu, Jing, Zeng, Wang, Sharma, Mukhopadhyay, Upadhyay, Koundinya, Shukla, Chaudhury, Zhang, Hu, and Fu]{div2k}
Radu Timofte, Shuhang Gu, Jiqing Wu, Luc Van~Gool, Lei Zhang, Ming-Hsuan Yang, Muhammad Haris, Greg Shakhnarovich, Norimichi Ukita, Shijia Hu, Yijie Bei, Zheng Hui, Xiao Jiang, Yanan Gu, Jie Liu, Yifan Wang, Federico Perazzi, Brian McWilliams, Alexander Sorkine-Hornung, Olga Sorkine-Hornung, Christopher Schroers, Jiahui Yu, Yuchen Fan, Jianchao Yang, Ning Xu, Zhaowen Wang, Xinchao Wang, Thomas~S. Huang, Xintao Wang, Ke~Yu, Tak-Wai Hui, Chao Dong, Liang Lin, Chen Change~Loy, Dongwon Park, Kwanyoung Kim, Se~Young Chun, Kai Zhang, Pengjv Liu, Wangmeng Zuo, Shi Guo, Jiye Liu, Jinchang Xu, Yijiao Liu, Fengye Xiong, Yuan Dong, Hongliang Bai, Alexandru Damian, Nikhil Ravi, Sachit Menon, Cynthia Rudin, Junghoon Seo, Taegyun Jeon, Jamyoung Koo, Seunghyun Jeon, Soo~Ye Kim, Jae-Seok Choi, Sehwan Ki, Soomin Seo, Hyeonjun Sim, Saehun Kim, Munchurl Kim, Rong Chen, Kun Zeng, Jinkang Guo, Yanyun Qu, Cuihua Li, Namhyuk Ahn, Byungkon Kang, Kyung-Ah Sohn, Yuan Yuan, Jiawei Zhang, Jiahao Pang, Xiangyu Xu, Yan Zhao, Wei Deng,
  Sibt Ul~Hussain, Muneeb Aadil, Rafia Rahim, Xiaowang Cai, Fang Huang, Yueshu Xu, Pablo~Navarrete Michelini, Dan Zhu, Hanwen Liu, Jun-Hyuk Kim, Jong-Seok Lee, Yiwen Huang, Ming Qiu, Liting Jing, Jiehang Zeng, Ying Wang, Manoj Sharma, Rudrabha Mukhopadhyay, Avinash Upadhyay, Sriharsha Koundinya, Ankit Shukla, Santanu Chaudhury, Zhe Zhang, Yu~Hen Hu, and Lingzhi Fu.
\newblock {NTIRE} 2018 challenge on single image super-resolution: Methods and results.
\newblock In \emph{2018 IEEE/CVF Conference on Computer Vision and Pattern Recognition Workshops (CVPRW)}, pp.\  965--96511, 2018.
\newblock \doi{10.1109/CVPRW.2018.00130}.

\bibitem[Uy et~al.(2024)Uy, Nakayama, Yang, Thomas, Guibas, and Li]{uy2024nerf}
Mikaela~Angelina Uy, Kiyohiro Nakayama, Guandao Yang, Rahul Thomas, Leonidas~J Guibas, and Ke~Li.
\newblock Nerf revisited: Fixing quadrature instability in volume rendering.
\newblock \emph{Advances in Neural Information Processing Systems}, 36, 2024.

\bibitem[Wang et~al.(2023)Wang, Lu, Song, and Huang]{wang2023learning}
Honghui Wang, Lu~Lu, Shiji Song, and Gao Huang.
\newblock Learning specialized activation functions for physics-informed neural networks.
\newblock \emph{arXiv preprint arXiv:2308.04073}, 2023.

\bibitem[Wang et~al.(2022)Wang, Yu, and Perdikaris]{wang2022and}
Sifan Wang, Xinling Yu, and Paris Perdikaris.
\newblock When and why {PINNs} fail to train: A neural tangent kernel perspective.
\newblock \emph{Journal of Computational Physics}, 449:\penalty0 110768, 2022.

\bibitem[Williams \& Seeger(2000)Williams and Seeger]{williams2000effect}
Christopher Williams and Matthias Seeger.
\newblock The effect of the input density distribution on kernel-based classifiers.
\newblock In \emph{ICML'00 Proceedings of the Seventeenth International Conference on Machine Learning}, pp.\  1159--1166. Morgan Kaufmann Publishers Inc., 2000.

\bibitem[Wu et~al.(2023)Wu, Jin, and Yi]{wu2023neural}
Zhijie Wu, Yuhe Jin, and Kwang~Moo Yi.
\newblock Neural fourier filter bank.
\newblock In \emph{Proceedings of the IEEE/CVF Conference on Computer Vision and Pattern Recognition}, pp.\  14153--14163, 2023.

\bibitem[Xu et~al.(2023)Xu, Yang, Mo, Pan, Yi, and Guibas]{xu2023jacobinerf}
Xiaomeng Xu, Yanchao Yang, Kaichun Mo, Boxiao Pan, Li~Yi, and Leonidas Guibas.
\newblock {JacobiNeRF}: Nerf shaping with mutual information gradients.
\newblock In \emph{Proceedings of the IEEE/CVF Conference on Computer Vision and Pattern Recognition}, pp.\  16498--16507, 2023.

\bibitem[Y{\"u}ce et~al.(2022)Y{\"u}ce, Ortiz-Jim{\'e}nez, Besbinar, and Frossard]{yuce2022structured}
Gizem Y{\"u}ce, Guillermo Ortiz-Jim{\'e}nez, Beril Besbinar, and Pascal Frossard.
\newblock A structured dictionary perspective on implicit neural representations.
\newblock In \emph{Proceedings of the IEEE/CVF Conference on Computer Vision and Pattern Recognition}, pp.\  19228--19238, 2022.

\end{thebibliography}
\bibliographystyle{tmlr}

\newpage
\tableofcontents
\newpage
\appendix
\section*{Appendix}
\label{appendix}
\addtocontents{toc}{\protect\setcounter{tocdepth}{2}}
\section{Analytic NTK}
\label{sec:ntk}
In this section, we compute the analytic NTK for a neural network that uses the proposed activation function (STAF), following the notation from \citep{modernml2024}. Interested readers can also refer to \citep{jacot2018neural} and \citep{golikov2022neural}. However, we chose \citep{modernml2024} for its clarity and ease of understanding. According to \citep{modernml2024}, the NTK of an activation function for a neural network with $L-1$ hidden layers is as follows.
\begin{theorem}
(Theorem 1 of \citep{modernml2024}, Lecture 6) For $\boldsymbol{x} \in \mathcal{S}^{d-1}$, let $f_{\boldsymbol{x}}^{(L)}(\boldsymbol{w}):\mathbb{R}^p \rightarrow\mathbb{R}$ denote a neural network with $L-1$ hidden layers such that:
\begin{equation}
    f_{\boldsymbol{x}}^{(L)}(\boldsymbol{w}) = \boldsymbol{W^{(L)}} \frac{1}{\sqrt{F_{L-1}}} \phi \left( \boldsymbol{W^{(L-1)}} \frac{1}{\sqrt{F_{L-2}}} \phi \left( \ldots \boldsymbol{W^{(2)}} \frac{1}{\sqrt{F_1}} \phi \left( \boldsymbol{W^{(1)}} \boldsymbol{x} \right) \ldots \right) \right);
\end{equation}
where $W^{(i)} \in \mathbb{R}^{F_i \times F_{i-1}}$ for $i \in \{1, \ldots, L\}$ with $F_0 = d$, $F_L = 1$, and $\phi : \mathbb{R} \rightarrow \mathbb{R}$ is an element-wise activation function. As $F_1, F_2, \dots, F_{L-1} \to \infty$ in order, the Neural Network Gaussian Process (NNGP), denoted as $\Sigma^{(L)}$, and the NTK, denoted as $K^{(L)}$, of $f_{\boldsymbol{x}}(\boldsymbol{w})$ are given by:
\begin{equation}
    \begin{aligned}
    &\Sigma^{(L)}(\boldsymbol{x},\Tilde{\boldsymbol{x}}) = \check{\phi}\left(\Sigma^{(L-1)}(\boldsymbol{x},\Tilde{\boldsymbol{x}})\right);\quad \Sigma^{(0)}(\boldsymbol{x},\Tilde{\boldsymbol{x}})=\boldsymbol{x}^T \Tilde{\boldsymbol{x}} \\
    & K^{(L)}(\boldsymbol{x},\Tilde{\boldsymbol{x}}) = \Sigma^{(L)}(\boldsymbol{x},\Tilde{\boldsymbol{x}}) + K^{(L-1)}(\boldsymbol{x},\Tilde{\boldsymbol{x}})\check{\phi'}\left(\Sigma^{(L-1)}(\boldsymbol{x},\Tilde{\boldsymbol{x}})\right);\\
    & K^{(0)}(\boldsymbol{x},\Tilde{\boldsymbol{x}}) = \boldsymbol{x}^T \Tilde{\boldsymbol{x}}
    \end{aligned}
\end{equation}
where $\check{\phi}: [-1, 1] \rightarrow \mathbb{R}$ is the dual activation for $\phi$, and is calculated as follows:
\begin{equation}
\check{\phi}(\xi) = \mathbb{E}_{(u,v) \sim \mathcal{N}(0,\boldsymbol{\Lambda})} [\phi(u)\phi(v)] \quad\text{where}~ \boldsymbol{\Lambda} = 
\begin{bmatrix}
1 & \xi \\
\xi & 1 
\end{bmatrix}.
\end{equation}
Furthermore, $\phi$ is normalized such that $\check{\phi}(1) = 1$.
\end{theorem}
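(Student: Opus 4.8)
The plan is to prove the two recursions together by induction on the network depth, letting the hidden widths $F_1,\dots,F_{L-1}$ tend to infinity one after another, starting from the layer nearest the input. Write $\boldsymbol{a}^{(1)}(\boldsymbol{x})=\boldsymbol{W}^{(1)}\boldsymbol{x}$ and $\boldsymbol{a}^{(l)}(\boldsymbol{x})=\boldsymbol{W}^{(l)}\tfrac{1}{\sqrt{F_{l-1}}}\phi(\boldsymbol{a}^{(l-1)}(\boldsymbol{x}))$ for the pre-activation feeding the $l$-th weight matrix, so that $f^{(L)}_{\boldsymbol{x}}(\boldsymbol{w})=\boldsymbol{a}^{(L)}(\boldsymbol{x})$; the entries of every $\boldsymbol{W}^{(l)}$ are i.i.d.\ standard Gaussian and $\phi$ is normalized so that $\check{\phi}(1)=1$. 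The induction hypothesis at depth $l$ is twofold: (i) over any finite collection of inputs on $\mathcal{S}^{d-1}$, the coordinates of $\boldsymbol{a}^{(l)}$ converge in distribution to i.i.d.\ centered Gaussians with a common covariance $\Sigma^{(l-1)}$ (the input serving as ``depth $0$'', with $\Sigma^{(0)}(\boldsymbol{x},\tilde{\boldsymbol{x}})=\boldsymbol{x}^{\top}\tilde{\boldsymbol{x}}$), and (ii) the empirical kernel of the sub-network using only the first $l$ weight matrices converges in probability to the deterministic kernel $K^{(l)}$; matching these to the theorem's exact index convention is pure bookkeeping.

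For the Gaussian-process (NNGP) part the base case is exact: $\boldsymbol{a}^{(1)}(\boldsymbol{x})=\boldsymbol{W}^{(1)}\boldsymbol{x}$ has independent coordinates that are jointly Gaussian across inputs with covariance $\boldsymbol{x}^{\top}\tilde{\boldsymbol{x}}=\Sigma^{(0)}(\boldsymbol{x},\tilde{\boldsymbol{x}})$. For the inductive step I would condition on $\boldsymbol{a}^{(l-1)}$: given it, the coordinates of $\boldsymbol{a}^{(l)}$ are conditionally independent Gaussians with covariance $\tfrac{1}{F_{l-1}}\langle\phi(\boldsymbol{a}^{(l-1)}(\boldsymbol{x})),\phi(\boldsymbol{a}^{(l-1)}(\tilde{\boldsymbol{x}}))\rangle$, and by the induction hypothesis the summands $\phi(a^{(l-1)}_i(\boldsymbol{x}))\phi(a^{(l-1)}_i(\tilde{\boldsymbol{x}}))$ are asymptotically i.i.d.\ across $i$, so the law of large numbers makes this empirical covariance concentrate on $\mathbb{E}_{(u,v)\sim\mathcal{N}(0,\Lambda)}[\phi(u)\phi(v)]=\check{\phi}\big(\Sigma^{(l-1)}(\boldsymbol{x},\tilde{\boldsymbol{x}})\big)$; here $\Lambda$ has unit diagonal because $\check{\phi}(1)=1$ forces every layer's per-coordinate variance on the sphere to stay at $1$, and Cauchy--Schwarz keeps the off-diagonal entry in $[-1,1]$, which is exactly the domain of $\check{\phi}$. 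Hence $\boldsymbol{a}^{(l)}$ converges to a centered Gaussian process with covariance $\Sigma^{(l)}=\check{\phi}(\Sigma^{(l-1)})$, which is claim (i) one level deeper.

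For the NTK part I would split the parameters into the last matrix $\boldsymbol{W}^{(L)}$ and the rest, $\boldsymbol{\theta}'=(\boldsymbol{W}^{(1)},\dots,\boldsymbol{W}^{(L-1)})$, and add the two blocks of $\langle\nabla_{\boldsymbol{\theta}}f(\boldsymbol{x}),\nabla_{\boldsymbol{\theta}}f(\tilde{\boldsymbol{x}})\rangle$. Since $\nabla_{\boldsymbol{W}^{(L)}}f(\boldsymbol{x})=\tfrac{1}{\sqrt{F_{L-1}}}\phi(\boldsymbol{a}^{(L-1)}(\boldsymbol{x}))$, the $\boldsymbol{W}^{(L)}$-block equals $\tfrac{1}{F_{L-1}}\langle\phi(\boldsymbol{a}^{(L-1)}(\boldsymbol{x})),\phi(\boldsymbol{a}^{(L-1)}(\tilde{\boldsymbol{x}}))\rangle$, i.e.\ exactly the quantity shown above to converge to $\Sigma^{(L)}(\boldsymbol{x},\tilde{\boldsymbol{x}})$. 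Writing $g_i=\boldsymbol{a}^{(L-1)}_i$ for the $i$-th output coordinate of the depth-$(L-1)$ sub-network with parameters $\boldsymbol{\theta}'$, the chain rule gives $\nabla_{\boldsymbol{\theta}'}f(\boldsymbol{x})=\tfrac{1}{\sqrt{F_{L-1}}}\sum_i\boldsymbol{W}^{(L)}_i\,\phi'(g_i(\boldsymbol{x}))\,\nabla_{\boldsymbol{\theta}'}g_i(\boldsymbol{x})$, so the $\boldsymbol{\theta}'$-block is $\tfrac{1}{F_{L-1}}\sum_{i,j}\boldsymbol{W}^{(L)}_i\boldsymbol{W}^{(L)}_j\,\phi'(g_i(\boldsymbol{x}))\phi'(g_j(\tilde{\boldsymbol{x}}))\,\langle\nabla_{\boldsymbol{\theta}'}g_i(\boldsymbol{x}),\nabla_{\boldsymbol{\theta}'}g_j(\tilde{\boldsymbol{x}})\rangle$. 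In the width limit the $i\neq j$ terms average to zero, $\tfrac{1}{F_{L-1}}\sum_i(\boldsymbol{W}^{(L)}_i)^2\to1$, $\tfrac{1}{F_{L-1}}\sum_i\phi'(g_i(\boldsymbol{x}))\phi'(g_i(\tilde{\boldsymbol{x}}))\to\check{\phi'}\big(\Sigma^{(L-1)}(\boldsymbol{x},\tilde{\boldsymbol{x}})\big)$, and each $g_i$ is an independent copy of a depth-$(L-1)$ scalar network so that $\langle\nabla_{\boldsymbol{\theta}'}g_i(\boldsymbol{x}),\nabla_{\boldsymbol{\theta}'}g_i(\tilde{\boldsymbol{x}})\rangle\to K^{(L-1)}(\boldsymbol{x},\tilde{\boldsymbol{x}})$ by induction, with this deterministic limit decoupling from the $\phi'(g_i)$ factors; hence the $\boldsymbol{\theta}'$-block tends to $K^{(L-1)}\check{\phi'}(\Sigma^{(L-1)})$. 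Adding the two blocks yields $K^{(L)}=\Sigma^{(L)}+K^{(L-1)}\check{\phi'}(\Sigma^{(L-1)})$, and the base case $K^{(0)}(\boldsymbol{x},\tilde{\boldsymbol{x}})=\boldsymbol{x}^{\top}\tilde{\boldsymbol{x}}$ is just the NTK of the bare linear map $\boldsymbol{x}\mapsto\boldsymbol{W}^{(1)}\boldsymbol{x}$.

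The hard part will be making the \emph{sequential} infinite-width limit and the decoupling claims rigorous: at finite width the pre-activations are only approximately Gaussian and the forward weights $\boldsymbol{W}^{(l)}$ reappear in the backward pass, so one must show that the non-Gaussian corrections and these forward--backward dependencies vanish as $F_1,\dots,F_{L-1}\to\infty$ in this order (the ``gradient independence'' property). I would discharge this either by a careful conditioning/coupling argument in the spirit of Jacot et al., or by recasting the whole computation in the tensor-programs (Netsor) calculus, where both the NNGP and the NTK recursions fall out of the master theorem; a mild regularity assumption on $\phi$ (e.g.\ $\phi,\phi'$ of polynomial growth) is needed so that $\check{\phi}$ and $\check{\phi'}$ are well defined and the laws of large numbers and dominated-convergence passages above are valid. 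Everything else reduces to the two recursions traced out above.
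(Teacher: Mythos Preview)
The paper does not prove this theorem at all: it is quoted verbatim as ``Theorem 1 of \cite{modernml2024}, Lecture 6'' and used as a black box, with the paper immediately moving on (``Consequently, it suffices to calculate $\check{\phi}$'') to its own contribution, Theorem~\ref{analytic_NTK}, which computes the dual activation $\check{\rho^*}$ for STAF. So there is no in-paper proof to compare against.

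That said, your sketch is a faithful outline of the standard Jacot-style argument for the NNGP and NTK recursions: induction on depth with the sequential infinite-width limit, law of large numbers for the empirical covariance to get $\Sigma^{(l)}=\check{\phi}(\Sigma^{(l-1)})$, and the last-layer/earlier-layers decomposition of the gradient inner product to get the additive NTK recursion. You correctly flag the delicate points (gradient independence, the forward--backward weight correlations, and the need for sequential limits or a tensor-programs framework to make the decoupling rigorous). If you were actually asked to prove this, the sketch would be a reasonable plan; but for the purposes of this paper the statement is simply imported from the cited source.
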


Consequently, it suffices to calculate $\check{\phi}$. It has been calculated in the following theorem. Just like what mentioned in \citep{wang2023learning}, we assume that the optimization of neural networks with STAF can be decomposed into two phases, where we learn the coefficients of STAF in the first phase and then train the parameters of the neural network in the second phase. This assumption is reasonable as the number of parameters of STAF is far less than that of networks, and they quickly converge at the early stage of training. \added{Empirically, we support this approximation with two experiments reported in~\Cref{sec:ntk_empirical_validation}: (i) tracking the dynamics of the sinusoidal parameters $(\Omega,\Phi,C)$ during training, which shows that their updates are concentrated in the early stage and then stabilize, and (ii) a freeze-after-warmup experiment, where these parameters are frozen after an initial training period and the model still achieves nearly the same reconstruction quality.} As a result, in the following theorem, all the parameters except weights are fixed, since they have been obtained in the first phase of training.

\begin{theorem} \label{analytic_NTK}
    Let $\rho^*$ be the proposed activation function (STAF). Then
    \begin{align}
    &\check{\rho^*}(\xi)= \sum_{i=1}^{\tau} \sum_{j=1}^{\tau} C_i C_j \Delta_{i,j} \nonumber \\
    &= \frac{1}{2}\sum_{i=1}^{\tau} \sum_{j=1}^{\tau} C_i C_j e^{\frac{-1}{2}\left(\Omega_i^2+\Omega_j^2\right)}\left(e^{\Omega_i \Omega_j \xi} \cos(\Phi_i - \Phi_j)+ e^{-\Omega_i \Omega_j \xi} \cos(\Phi_i + \Phi_j)\right)
\end{align}

Therefore,
\begin{equation}
\scalebox{0.98}{$
\check{\rho^*}'(\xi)= \frac{1}{2} \sum_{i=1}^{\tau} C_i \Omega_i \sum_{j=1}^{\tau} \left[C_j \Omega_j e^{\frac{-1}{2}(\Omega_i^2+\Omega_j^2)}\left(e^{\Omega_i \Omega_j \xi} \cos(\Phi_i - \Phi_j) - e^{-\Omega_i \Omega_j \xi} \cos(\Phi_i + \Phi_j)\right)\right].
$}
\end{equation}
\end{theorem}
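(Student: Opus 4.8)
The plan is to reduce the computation of $\check{\rho^*}$ to a single Gaussian integral identity and then expand by bilinearity. By definition of the dual activation, $\check{\rho^*}(\xi) = \mathbb{E}_{(u,v)\sim\mathcal{N}(0,\boldsymbol{\Lambda})}[\rho^*(u)\rho^*(v)]$ where $\boldsymbol{\Lambda}$ has unit diagonal and off-diagonal entry $\xi$. Substituting the Fourier-series form $\rho^*(x)=\sum_{i=1}^\tau C_i\sin(\Omega_i x+\Phi_i)$ and using linearity of expectation, I would write $\check{\rho^*}(\xi)=\sum_{i=1}^\tau\sum_{j=1}^\tau C_iC_j\,\Delta_{i,j}$ with
\begin{equation}
\Delta_{i,j}=\mathbb{E}\big[\sin(\Omega_i u+\Phi_i)\sin(\Omega_j v+\Phi_j)\big].
\end{equation}
Here the $C_i$ can be pulled out because, per the two-phase assumption stated just before the theorem, they are fixed constants in this phase; only $u,v$ are random.

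The core step is evaluating $\Delta_{i,j}$. I would convert the product of sines into a sum of cosines via the identity $\sin\alpha\sin\beta=\tfrac12(\cos(\alpha-\beta)-\cos(\alpha+\beta))$, so that $\Delta_{i,j}=\tfrac12\mathbb{E}[\cos(\Omega_i u-\Omega_j v+\Phi_i-\Phi_j)] - \tfrac12\mathbb{E}[\cos(\Omega_i u+\Omega_j v+\Phi_i+\Phi_j)]$. Each term has the form $\mathbb{E}[\cos(\langle \boldsymbol{t},(u,v)\rangle + \psi)]$ for a deterministic vector $\boldsymbol{t}$ and phase $\psi$; writing it as $\mathrm{Re}\,\mathbb{E}[e^{i\langle \boldsymbol{t},(u,v)\rangle}e^{i\psi}]$ and invoking the Gaussian characteristic function $\mathbb{E}[e^{i\langle \boldsymbol{t},\boldsymbol{g}\rangle}]=e^{-\frac12\boldsymbol{t}^{\!\top}\boldsymbol{\Lambda}\boldsymbol{t}}$ gives a closed form. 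For the first term $\boldsymbol{t}=(\Omega_i,-\Omega_j)$, so $\boldsymbol{t}^{\!\top}\boldsymbol{\Lambda}\boldsymbol{t}=\Omega_i^2+\Omega_j^2-2\Omega_i\Omega_j\xi$; for the second term $\boldsymbol{t}=(\Omega_i,\Omega_j)$, giving $\Omega_i^2+\Omega_j^2+2\Omega_i\Omega_j\xi$. Taking real parts yields
\begin{equation}
\Delta_{i,j}=\tfrac12 e^{-\frac12(\Omega_i^2+\Omega_j^2)}\big(e^{\Omega_i\Omega_j\xi}\cos(\Phi_i-\Phi_j)+e^{-\Omega_i\Omega_j\xi}\cos(\Phi_i+\Phi_j)\big),
\end{equation}
which is exactly the claimed summand. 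Summing over $i,j$ with the $C_iC_j$ weights gives the stated formula for $\check{\rho^*}(\xi)$.

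For the derivative formula, I would simply differentiate the closed form of $\check{\rho^*}$ term by term in $\xi$: $\tfrac{d}{d\xi}e^{\pm\Omega_i\Omega_j\xi}=\pm\Omega_i\Omega_j e^{\pm\Omega_i\Omega_j\xi}$, which turns the $+$ between the two exponential terms into a $-$ and pulls out a factor $\Omega_i\Omega_j$; regrouping $\Omega_i\Omega_j = \Omega_i\cdot\Omega_j$ into the two sums produces the displayed expression for $\check{\rho^*}'(\xi)$. I do not expect genuine obstacles here; the only points requiring care are (i) justifying the interchange of expectation and the finite double sum (immediate, since the sum is finite and each $\sin$ is bounded), (ii) being careful with the signs when applying the product-to-sum identity and when taking real parts, and (iii) making explicit the two-phase assumption that legitimizes treating $C_i,\Omega_i,\Phi_i$ as constants inside the expectation. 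One should also note this $\check{\rho^*}$ is not normalized so that $\check{\rho^*}(1)=1$, so in applying the NTK recursion one would rescale $\rho^*$ (equivalently the $C_i$) by $\check{\rho^*}(1)^{-1/2}$; this is a cosmetic normalization and does not affect the form of the result.
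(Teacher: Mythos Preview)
Your approach is correct and cleaner than the paper's. The paper writes out the bivariate Gaussian density explicitly, reduces the double integral to an iterated one by completing the square and substituting $\eta=u-\xi v$, and then invokes a one-dimensional Gaussian integral lemma (their Lemma~\ref{integral_lemma}) twice to evaluate first the $u$-integral and then the $v$-integral. You instead apply the product-to-sum identity at the outset and hit both terms at once with the multivariate Gaussian characteristic function $\mathbb{E}[e^{i\langle \boldsymbol{t},(u,v)\rangle}]=e^{-\frac12\boldsymbol{t}^{\!\top}\boldsymbol{\Lambda}\boldsymbol{t}}$. This avoids the change of variables and the auxiliary lemma entirely and makes the dependence on $\xi$ transparent; the paper's route is more hands-on but reaches the same destination.

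One genuine slip to flag: you correctly write $\Delta_{i,j}=\tfrac12\mathbb{E}[\cos(\Omega_i u-\Omega_j v+\Phi_i-\Phi_j)]-\tfrac12\mathbb{E}[\cos(\Omega_i u+\Omega_j v+\Phi_i+\Phi_j)]$, but then your displayed $\Delta_{i,j}$ has a $+$ between the two exponential terms. Carrying the minus through your own characteristic-function step gives
\[
\Delta_{i,j}=\tfrac12 e^{-\frac12(\Omega_i^2+\Omega_j^2)}\bigl(e^{\Omega_i\Omega_j\xi}\cos(\Phi_i-\Phi_j)\,-\,e^{-\Omega_i\Omega_j\xi}\cos(\Phi_i+\Phi_j)\bigr),
\]
and a quick sanity check at $\xi=0$ (where $u,v$ are independent and $\Delta_{i,j}=e^{-\frac12(\Omega_i^2+\Omega_j^2)}\sin\Phi_i\sin\Phi_j$) confirms the minus is correct. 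The paper's proof silently flips this same sign when passing from $\aleph$ to the integrated expression, so the discrepancy is a typo in the stated theorem rather than an error in your method; the derivative formula inherits the corresponding sign change.
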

\begin{proof}
\begin{align} \label{dual}
\check{\rho^*}(\xi) &=\mathbb{E}_{(u,v) \sim \mathcal{N}(0,\boldsymbol{\Lambda})} [\rho^*(u)\rho^*(v)] \nonumber \\
&= \mathbb{E}_{(u,v) \sim \mathcal{N}(0,\boldsymbol{\Lambda})} 
\left[
    \sum_{i=1}^{\tau} C_i \sin(\Omega_i u + \Phi_i) \sum_{i=1}^{\tau} C_i \sin(\Omega_i v + \Phi_i)
\right]
\nonumber\\
&= \mathbb{E}_{(u,v)\sim\mathcal{N}(0,\boldsymbol{\Lambda})}
\left[
    {\sum}_{i=1}^\tau {\sum}_{j=1}^\tau C_iC_j\sin(\Omega_ iu+\Phi_ i)
        \sin(\Omega_ jv+\Phi_ j)
\right] \nonumber\\
&= \sum_{i=1}^{\tau} \sum_{j=1}^{\tau} C_i C_j \mathbb{E}_{(u,v) \sim \mathcal{N}(0,\boldsymbol{\Lambda})} 
\Bigl(\sin(\Omega_i u + \Phi_i)\sin(\Omega_j v + \Phi_j)\Bigr).
\end{align}

So, we need to compute the following expectation:
\begin{equation} \label{delta}
\Delta_{i,j}=\mathbb{E}_{(u,v) \sim \mathcal{N}(0,\boldsymbol{\Lambda})} \left(\sin(\Omega_i u + \Phi_i)\sin(\Omega_j v + \Phi_j)\right)
\end{equation}

Note that for a random vector $\mathbf{X}=(X_1,\dots,X_d)^T$ with mean vector $\boldsymbol{\mu}$ and covariance matrix $\boldsymbol{\Lambda}$, the joint probability density function (PDF) is as follows:
\begin{equation}
    f_{\mathbf{X}}(\mathbf{x})=(2\pi)^{-d/2}\det(\boldsymbol{\Lambda})^{-1/2}e^{(\frac{-1}{2}(\mathbf{x}-\boldsymbol{\mu})^T\boldsymbol{\Lambda}^{-1}(\mathbf{x}-\boldsymbol{\mu}))}.
\end{equation}
As a result, since $\boldsymbol{\Lambda}^{-1} = \frac{1}{1 - \xi^2} \begin{bmatrix} 1 & -\xi \\ -\xi & 1 \end{bmatrix}$, we will have:
\begin{align} \label{joint_pdf}
f_{U,V}(u,v) &= \frac{1}{2\pi\sqrt{1 - \xi^2}} e^{-\frac{1}{2} 
\begin{pmatrix} u & v \end{pmatrix} 
\boldsymbol{\Lambda}^{-1} 
\begin{pmatrix} u \\ v \end{pmatrix}} = \frac{1}{2\pi\sqrt{1 - \xi^2}} e^{\frac{-1}{2(1-\xi^2)} 
\begin{pmatrix} u & v \end{pmatrix}
\begin{pmatrix} 1 & -\xi \\ -\xi & 1 \end{pmatrix}
\begin{pmatrix} u \\ v \end{pmatrix}} \nonumber\\
&= \frac{1}{2\pi\sqrt{1-\xi^2}} e^{\frac{-(u^2- 2\xi uv +v^2)}{2(1-\xi^2)}}.
\end{align}

Consequently, using \Cref{dual,delta}, we have
\begin{align} \label{delta_in_terms_of_I1}
\Delta_{i,j}&= \int_{-\infty}^{\infty} \int_{-\infty}^{\infty}
\Bigl( \sin(\Omega_i u + \Phi_i) \sin(\Omega_j v + \Phi_j)f_{U,V}(u,v) \Bigr) dudv \nonumber \\
&=\frac{1}{2\pi\sqrt{1 - \xi^2}}
\int_{-\infty}^\infty
\sin(\Omega_j v + \Phi_j)I_1dv;
\end{align}
where
\begin{align}
&I_1 = \int_{-\infty}^\infty \sin(\Omega_i u+\Phi_i) e^{\frac{-(u^2 - 2\xi uv + v^2)}{2(1 -\xi^2)}}du = e^{\frac{-v^2}{2(1-\xi^2 )}} \int_{-\infty}^{\infty} \sin(\Omega_i u+\Phi_i) e^{\frac{-(u^2-2\xi uv)}{2(1-\xi^2)}}du  \nonumber \\
&= e^\frac{-v^2+\xi^2 v^2}{2(1-\xi^2)} \int_{-\infty}^{\infty} \sin(\Omega_i u+\Phi_i) e^\frac{-(u^2-2\xi uv+\xi^2 v^2)}{2(1-\xi^2)}du = e^{-v^2/2}\int_{-\infty}^{\infty} \sin(\Omega_i u+\Phi_i) e^\frac{-(u-\xi v)^2}{2(1-\xi^2)}du
\end{align}
By assuming $\eta=u-\xi v$ we will have:
\begin{align}
    I_1 &= e^{-v^2/2}\int_{-\infty}^{\infty}\sin(\Omega_i (\eta+\xi v)+\Phi_i) e^\frac{-\eta^2}{2(1-\xi^2)}d\eta
\end{align}
Before going further, we need to consider the following lemma.
\begin{lemma} \label{integral_lemma}
\begin{align}
    \int_{-\infty}^{\infty}\cos(\alpha u + \beta)e^{-\gamma u^2}du = \sqrt{\frac{\pi}{\gamma}}e^{-\frac{\alpha^2}{4\gamma}}\cos\beta,  \label{Improper_integral_1} \\
    \int_{-\infty}^{\infty}\sin(\alpha u + \beta)e^{-\gamma u^2}du = \sqrt{\frac{\pi}{\gamma}}e^{-\frac{\alpha^2}{4\gamma}}\sin\beta \label{Improper_integral_2}
\end{align}
The proof is provided in \eqref{proof_of_integral_lemma}.
\end{lemma}

Let $\alpha=\Omega_i$, $\beta=\Omega_i\xi v+\Phi_i$, and $\gamma=\frac{1}{2(1-\xi^2)}$. As a result of equation \eqref{Improper_integral_2}, we have
\begin{align}
    I_1 &= e^{-v^2/2} \sqrt{2\pi(1-\xi^2)} e^\frac{-\Omega_i^2}{2/(1-\xi^2)}\sin(\Omega_i \xi v+\Phi_i) \nonumber \\
    &= \sqrt{2\pi(1-\xi^2)} e^\frac{-(v^2+\Omega_i^2 (1-\xi^2 ))}{2}\sin(\Omega_i \xi v+\Phi_i)
\end{align}
Therefore, based on \eqref{delta_in_terms_of_I1}, we will have
\begin{align}
&\Delta_{i,j}= \frac{1}{2\pi\sqrt{1-\xi^2}} 
    \int_{-\infty}^{\infty} 
    \Bigl[\sin(\Omega_j v + \Phi_j)\sqrt{2\pi(1-\xi^2)}e^\frac{-(v^2+\Omega_i^2 (1-\xi^2 ))}{2}
    \sin(\Omega_i \xi v+\Phi_i)\Bigr]dv\nonumber \\
&= \frac{e^\frac{(-\Omega_i^2(1-\xi^2)}{2}}{\sqrt{2\pi}}\int_{-\infty}^\infty 
    \Bigl[\sin(\Omega_j v+ \Phi_j)e^{-v^2/2}\sin(\Omega_i \xi v+\Phi_i)\Bigr] dv \nonumber \\
&= \frac{e^{-\Omega_i^2 (1-\xi^2)/2}}{\sqrt{2\pi}}
  \int_{-\infty}^\infty e^{-v^2/2}~\aleph~dv
\end{align}
where
\begin{align}
    \aleph &= \frac{1}{2}\bigl[\cos(v(\Omega_i\xi - \Omega_j)+\Phi_i-\Phi_j) -\cos(v(\Omega_i\xi+\Omega_j) + \Phi_i + \Phi_j)\bigr]
\end{align}
Therefore,
\begin{align} \label{final_delta}
&\Delta_{i,j} = \frac{e^{-\Omega_i^2(1-\xi^2)/2}}{2\sqrt{2\pi}} \left(\sqrt{2\pi} e^{-(\Omega_i\xi - \Omega_j)^2/2} \cos(\Phi_i - \Phi_j)+\sqrt{2\pi} e^{-(\Omega_i\xi + \Omega_j)^2/2} \cos(\Phi_i + \Phi_j)\right)\nonumber \\
&= \frac{e^{-\Omega_i^2 (1-\xi^2)/2}}{2} \left(e^{- (\Omega_i\xi - \Omega_j)^2/2} \cos(\Phi_i - \Phi_j)+e^{- (\Omega_i\xi + \Omega_j)^2/2} \cos(\Phi_i + \Phi_j)\right) \nonumber \\
&= \frac{e^\frac{-\Omega_i^2 (1-\xi^2)}{2} e^{\frac{- (\Omega_i^2 \xi^2+\Omega_j^2)}{2}}}{2} \left(e^{\Omega_i \Omega_j \xi} \cos(\Phi_i - \Phi_j)+e^{-\Omega_i \Omega_j \xi} \cos(\Phi_i + \Phi_j)\right) \nonumber \\
&= \frac{e^{\frac{-1}{2}\left(\Omega_i^2+\Omega_j^2\right)}}{2} \left(e^{\Omega_i \Omega_j \xi} \cos(\Phi_i - \Phi_j)+e^{-\Omega_i \Omega_j \xi} \cos(\Phi_i + \Phi_j)\right)
\end{align}

As a result of \Cref{dual,final_delta}, we have
\begin{align}
    \check{\rho^*}(\xi)&= \sum_{i=1}^{\tau} \sum_{j=1}^{\tau} C_i C_j \Delta_{i,j} \nonumber \\
    &= \frac{1}{2}\sum_{i=1}^{\tau} \sum_{j=1}^{\tau} C_i C_j e^{\frac{-1}{2}\left(\Omega_i^2+\Omega_j^2\right)}\left(e^{\Omega_i \Omega_j \xi} \cos(\Phi_i - \Phi_j)+e^{-\Omega_i \Omega_j \xi} \cos(\Phi_i + \Phi_j)\right)
\end{align}
\end{proof}

\addtocontents{toc}{\protect\setcounter{tocdepth}{3}}
\subsubsection{Proof of \eqref{integral_lemma}} \label{proof_of_integral_lemma}
\begin{proof}
    We want to calculate these integrals:
\begin{align}
    I_1 = \int_{-\infty}^{\infty}\cos(\alpha u + \beta)e^{-\gamma u^2}du, \nonumber\\
    I_2 = \int_{-\infty}^{\infty}\sin(\alpha u + \beta)e^{-\gamma u^2}du
\end{align}
By adding them we will have
\begin{align}
I_1+iI_2 &=\int_{-\infty}^{\infty}e^{-\gamma u^2}\bigl(\cos(\alpha u +\beta)+i\sin(\alpha u +\beta)\bigr)du =\int_{-\infty}^{\infty} e^{i (\alpha u + \beta)} e^{-\gamma u^2} du \nonumber \\
&= e^{i \beta} \int_{-\infty}^{\infty} e^{-\gamma(u^2 + \frac{\alpha i}{\gamma} u)} du= e^{i \beta} \int_{-\infty}^{\infty} e^{-\gamma(u^2 + \frac{\alpha i}{\gamma} u - \frac{\alpha^2}{4\gamma^2})} e^{-\frac{\alpha^2}{4\gamma}} du\nonumber \\
&= e^{-\frac{\alpha^2}{4\gamma} + i \beta} \int_{-\infty}^{\infty} e^{-\gamma(u^2 + \frac{\alpha i}{\gamma} u - \frac{\alpha^2}{4\gamma^2})} du = e^{-\frac{\alpha^2}{4\gamma} + i \beta} \underbrace{\int_{-\infty}^{\infty} e^{-\gamma\left(u + \frac{\alpha i}{2\gamma}\right)^2} du}_{I_3}
\end{align}
where $i$ is the unit imaginary number. Since we know that the integral of an arbitrary Gaussian function is
\begin{equation}
    \int_{-\infty}^{\infty}  e^{-a(x+b)^2}\,dx= \sqrt{\frac{\pi}{a}},
\end{equation}
we will have $I_3=\sqrt{\frac{\pi}{\gamma}}$. Therefore,
\begin{equation}
    I_1+iI_2 = \sqrt{\frac{\pi}{\gamma}}e^{-\frac{\alpha^2}{4\gamma} + i \beta} = \sqrt{\frac{\pi}{\gamma}}e^{-\frac{\alpha^2}{4\gamma}}(\cos\beta+i\sin\beta)
\end{equation}
As a result,
\begin{equation}
    I_1=\sqrt{\frac{\pi}{\gamma}}e^{-\frac{\alpha^2}{4\gamma}}\cos\beta,\quad
    I_2=\sqrt{\frac{\pi}{\gamma}}e^{-\frac{\alpha^2}{4\gamma}}\sin\beta.
\end{equation}
\end{proof}

\begin{figure*}[!th]
    \centering
    \includegraphics[width=\textwidth]{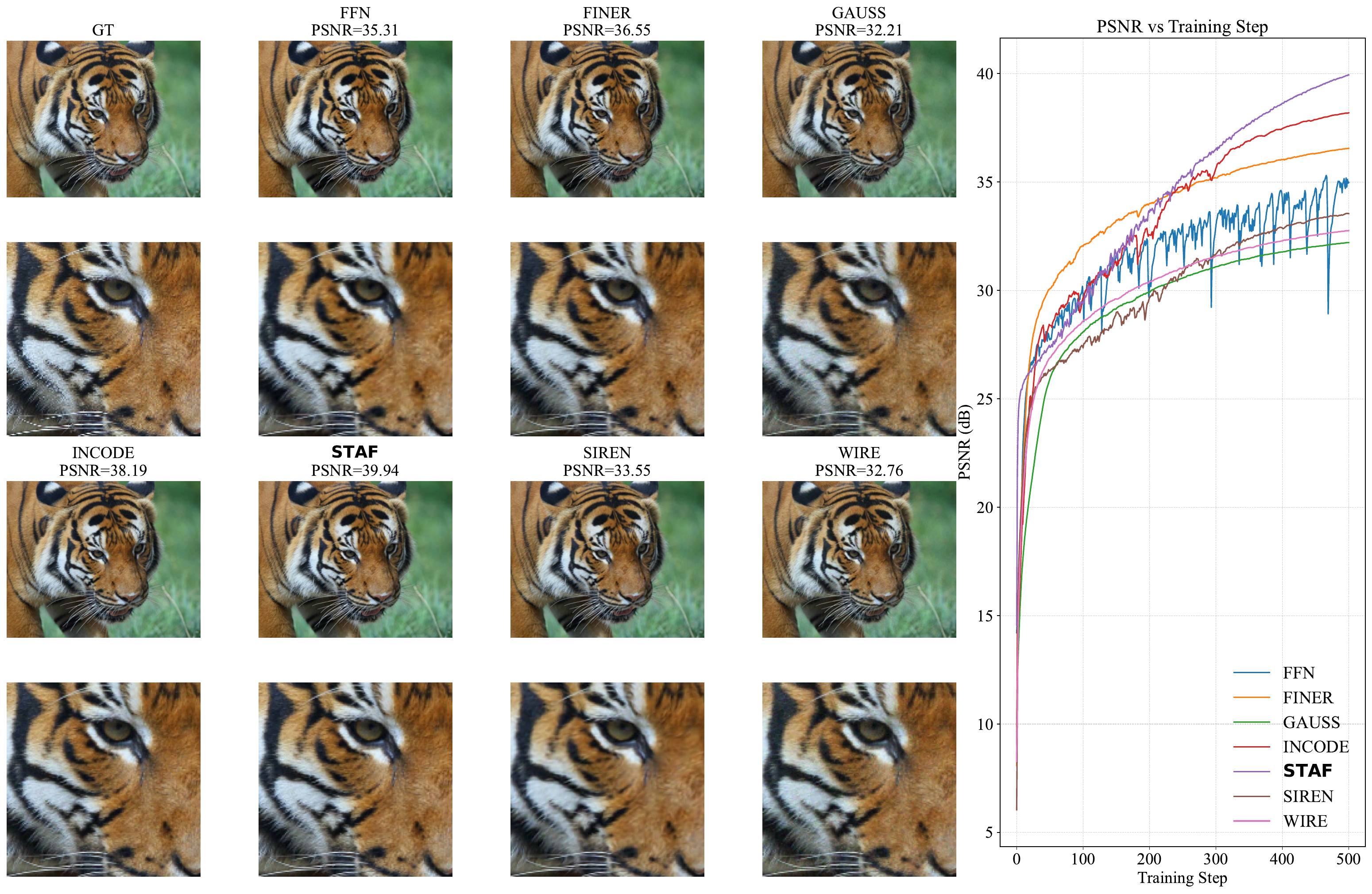}\\
    \includegraphics[width=\textwidth]{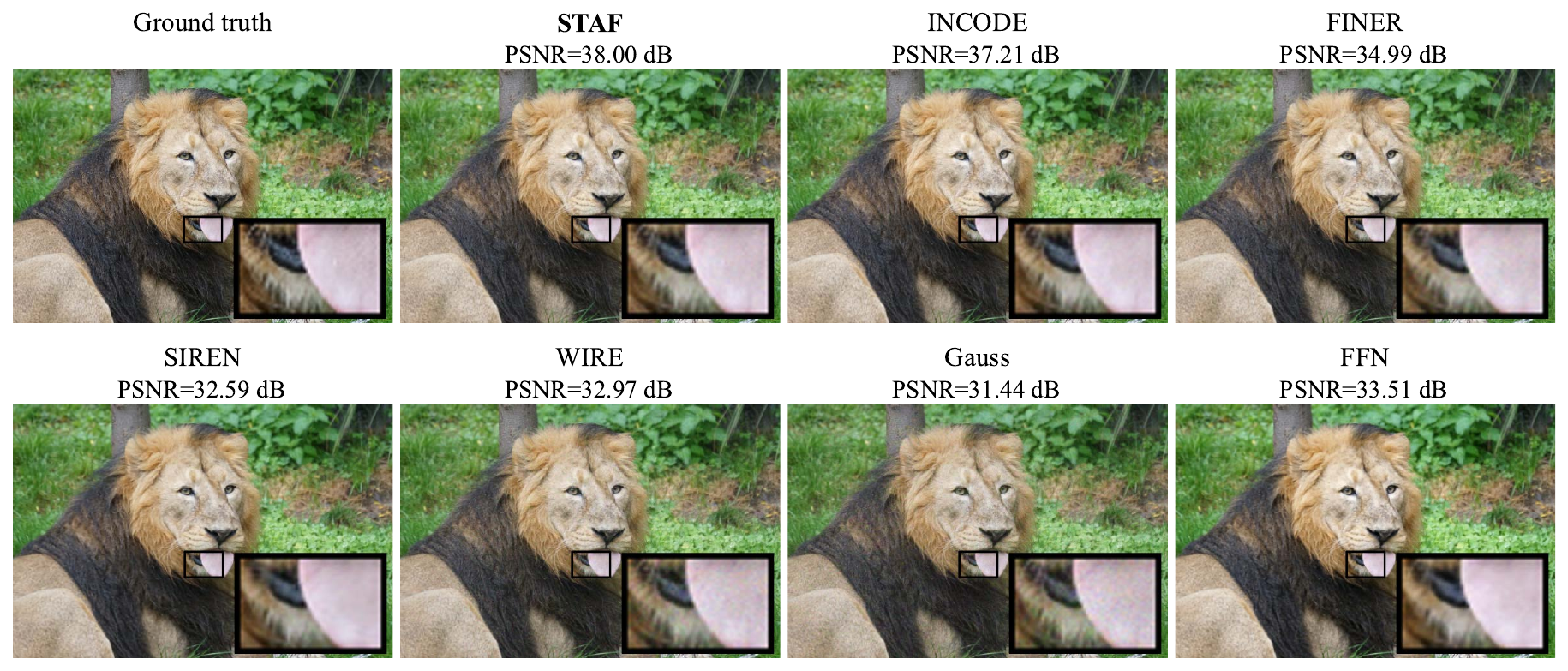}
\caption{\textbf{Qualitative and quantitative comparison of image representations.}
\textit{Top:} reconstructions of a DIV2K image using \textbf{STAF} and competing activations (PSNR shown above each result), with a zoomed-in crop highlighting high-frequency details.
\textit{Right:} PSNR versus training step over 500 iterations, where \textbf{STAF} converges faster and reaches the highest final PSNR.
\textit{Bottom:} a second example with inset crops, showing that \textbf{STAF} better preserves fine textures and sharp boundaries compared to prior activations.}    
    \label{fig:image_rep}
\end{figure*}
\begin{figure}[!th]
    \centering    
    \includegraphics[width=\textwidth]{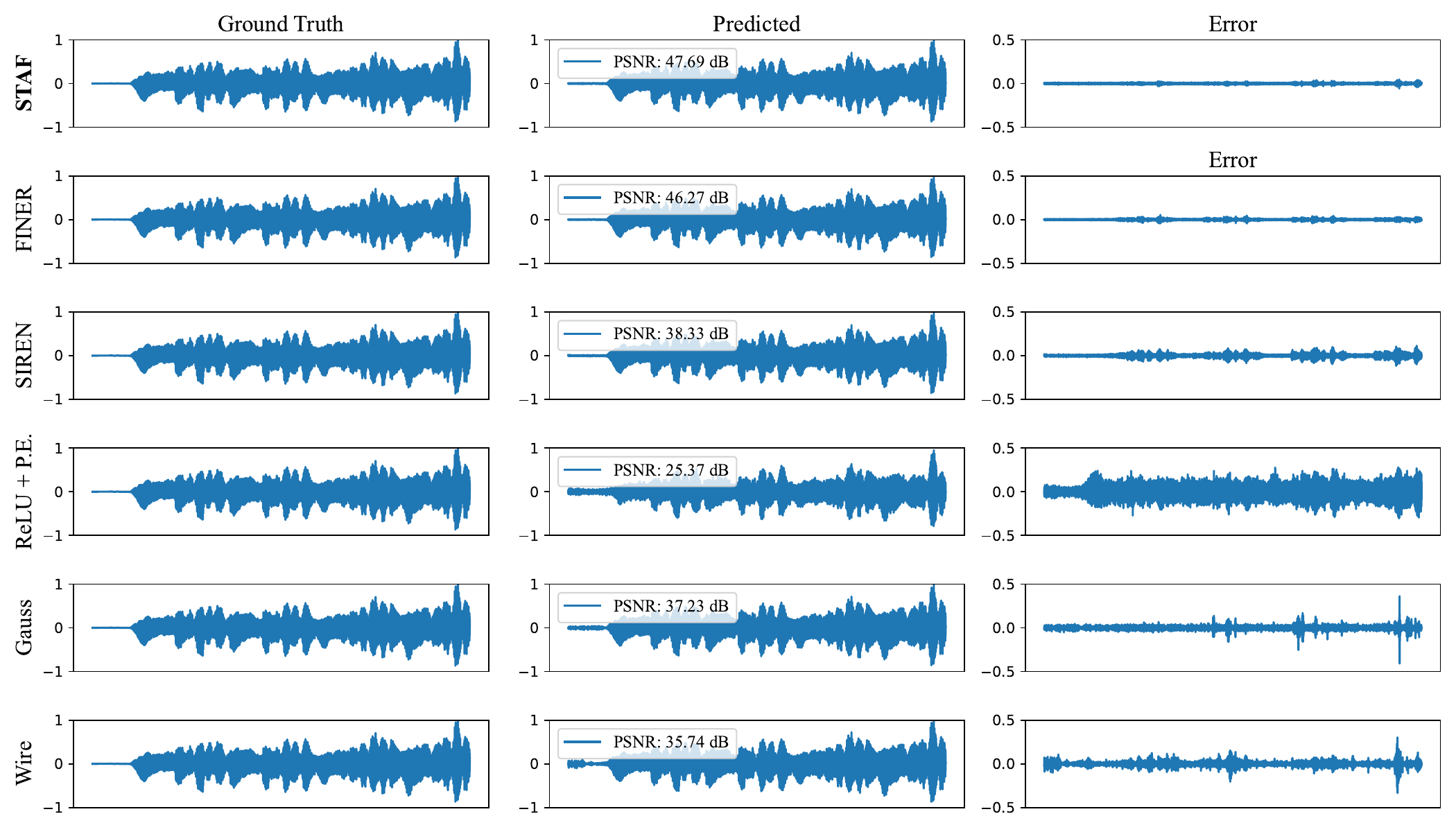}
\caption{\textbf{Audio representation comparison.} For each activation, we show the \textit{ground-truth} waveform (left), the \textit{reconstructed} waveform (middle; PSNR reported), and the \textit{reconstruction error} (right). \textbf{STAF} yields the smallest error and the highest PSNR, indicating more faithful recovery of both low- and high-amplitude temporal structures.}    \label{fig:audio}
\end{figure}
\begin{figure}[!th]
    \centering    
    \includegraphics[width=\textwidth]{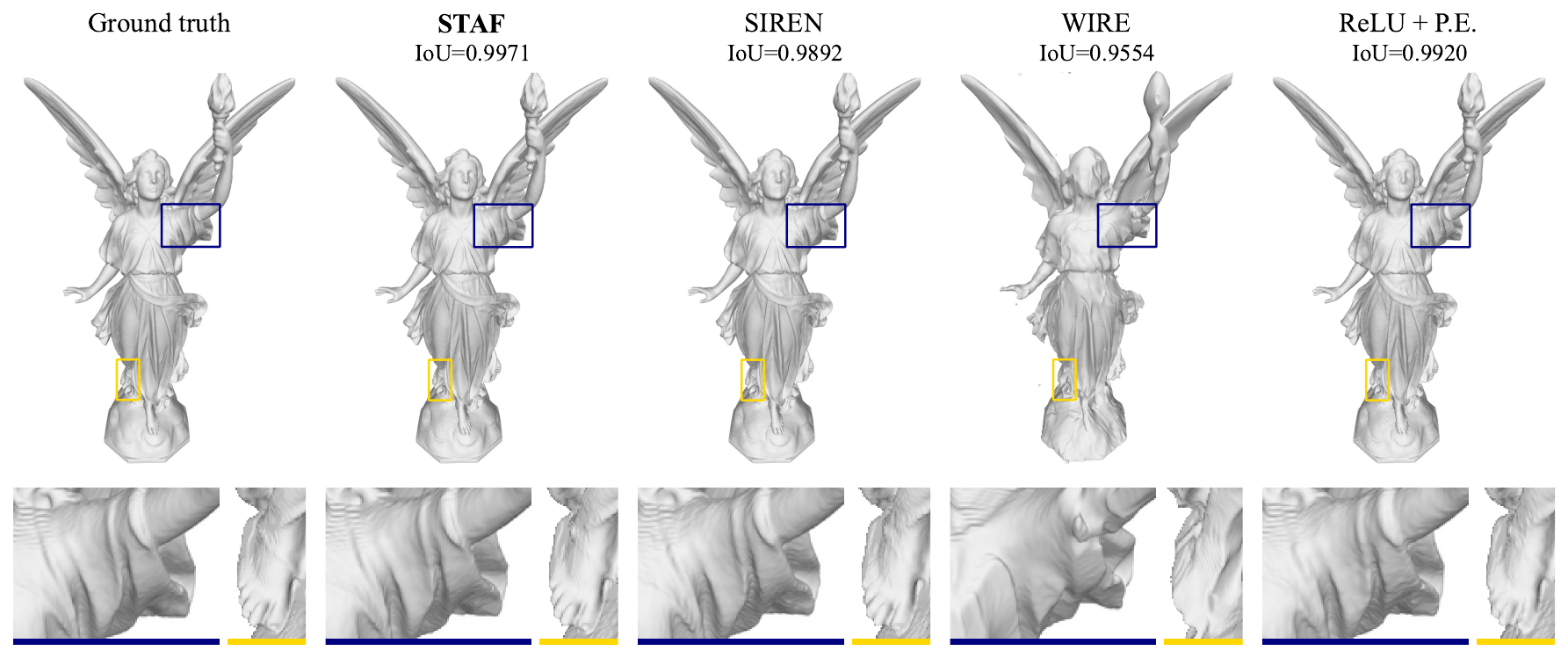}
\caption{\textbf{3D shape representation comparison.} Reconstruction results for a representative shape (IoU reported above each method). Colored boxes indicate zoom-in regions (blue/yellow) shown below. \textbf{STAF} produces sharper local geometry and fewer surface artifacts in the zoomed views, consistent with its higher IoU compared to alternative activations.}    \label{fig:sdf}
\end{figure}

\begin{figure*}[!th]
    \centering
    \includegraphics[width=0.95\textwidth]{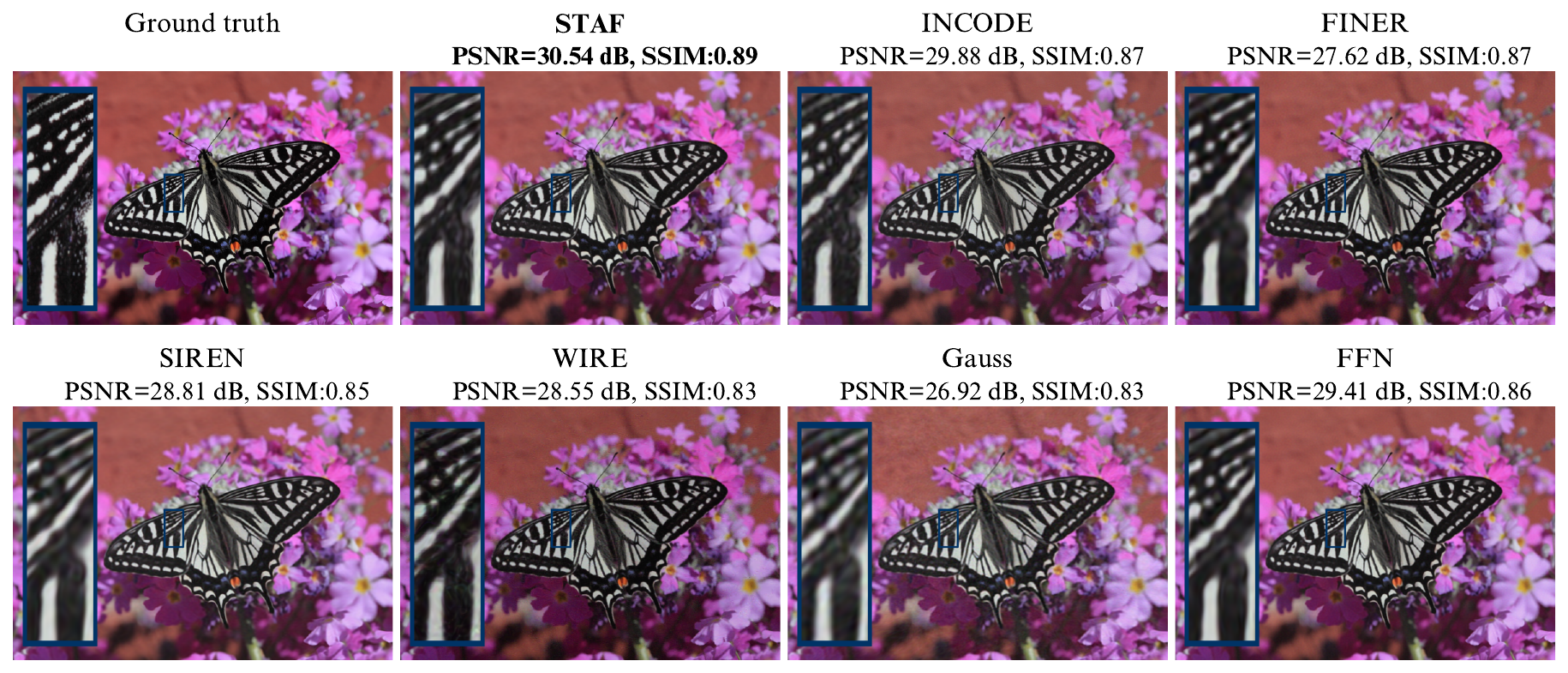}\\
\caption{\textbf{$4\times$ super-resolution comparison.} Qualitative results for \textbf{STAF} and competing activations (PSNR/SSIM shown above each reconstruction). Insets highlight a challenging high-frequency region, where \textbf{STAF} better preserves fine stripe patterns and edge sharpness, consistent with its higher PSNR/SSIM.}    \label{fig:4x_super_res}
\end{figure*}

\begin{figure*}[!th]
    \centering
    \begin{tabular}{cc} 
        \begin{subfigure}
            \centering
            \includegraphics[width=0.72\textwidth]{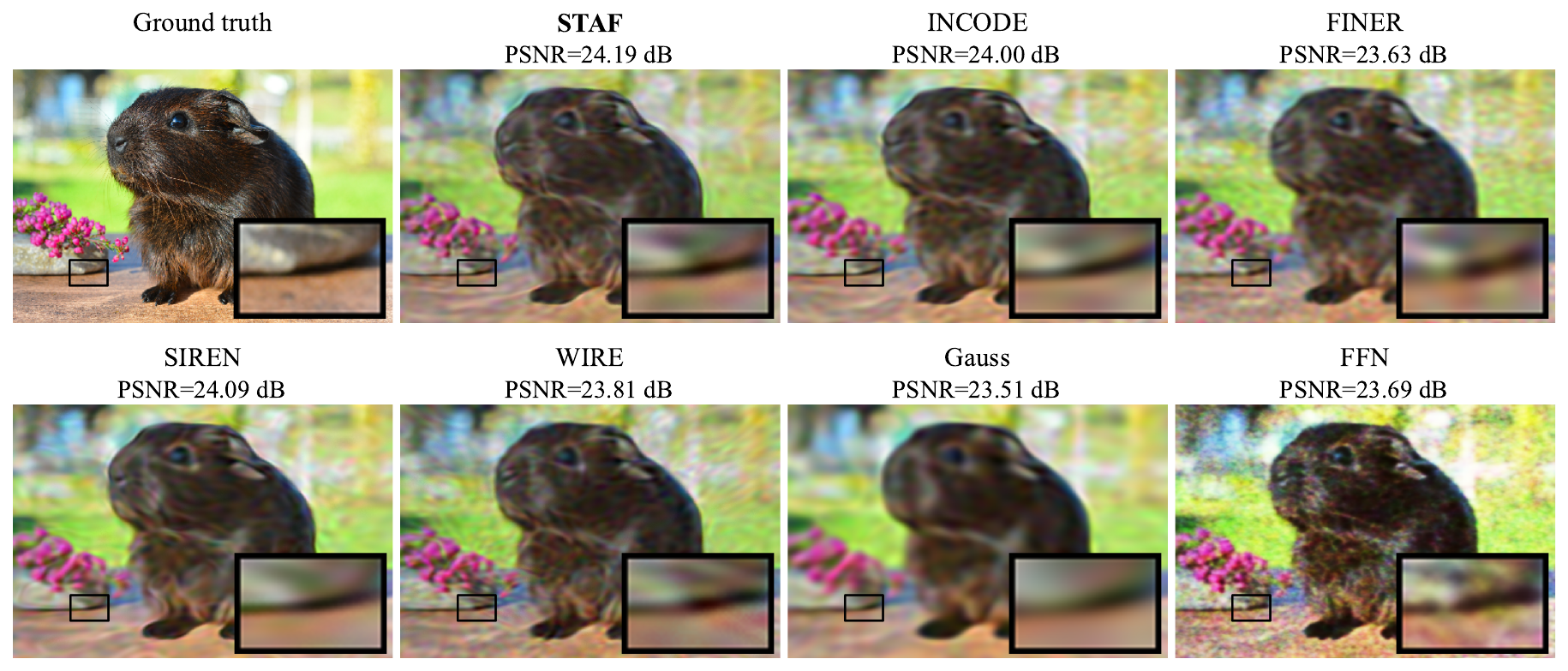}
            \label{fig:denoising_1}
        \end{subfigure} &
        \raisebox{10mm}{ 
            \begin{subfigure}
                \centering
                \includegraphics[width=0.24\textwidth]{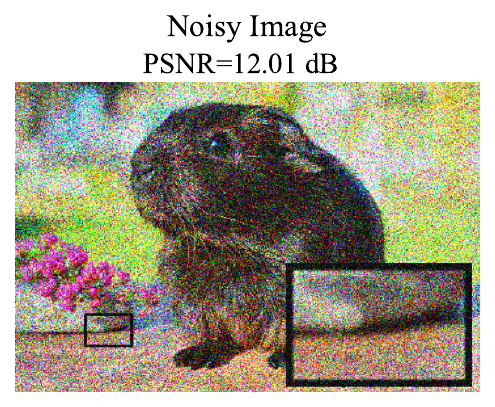}
                \label{fig:denoising_2}
            \end{subfigure}
        }
    \end{tabular}
    \vspace{-1em}
\caption{\textbf{Image denoising comparison.} Reconstructions from the same noisy input (right; PSNR shown) using \textbf{STAF} and competing activations (PSNR reported above each result). Insets highlight a textured region, where \textbf{STAF} better suppresses noise while preserving edges and smooth color transitions, consistent with its higher PSNR.}    \label{fig:img_denoising}
\end{figure*}

\section{Additional Experimental Results}
\label{sec:continue}
\subsection{Neural Radiance Fields (NeRFs)}
\label{sec:nerf}
To implement our NeRF experiments, we adopted an approach inspired by~\citep{saragadam2023wire}, utilizing the publicly available \texttt{torch-ngp} framework~\citep{tang2022torch} for training. Our architecture consists of two separate MLPs: one predicts the volumetric density ($\sigma$), while the other outputs the color values (RGB). Both networks are implemented as 4-layer MLPs with 128 hidden units per layer. The sigma network received only spatial coordinates $(x, y, z)$, while the RGB network also included viewing direction parameters $(\theta, \phi)$. We conducted experiments on five different scenes, using 100 training images, each downsampled to $400 \times 400$ resolution. The generalization of the model was evaluated using 200 additional test views. All training was performed on an NVIDIA A40 GPU with 48 GB of memory. During training, we used the learning rate $3 \times 10^{-4}$ for STAF, $5 \times 10^{-4}$ for FINER following their codebase, and followed the same optimized learning rates using~\citep{kazerouni2024incode} for other methods. For sinusoidal-based methods, we used $\omega_0 = 40$ (SIREN, WIRE, STAF), $\omega_0 = 30$ for FINER, and $\sigma_0 = 40$ (WIRE and Gauss). Apart from ReLU, no positional encoding was used for other nonlinearities to isolate their representational capabilities.

\subsection{Results on the Tokyo Image}
\label{sec:tokyo}
We additionally evaluated STAF on a high-resolution image to demonstrate its scalability and ability to reconstruct fine details. STAF both quantitatively and qualitatively outperforms all baseline methods, which is depicted in~\cref{fig:tokyo}. We have highlighted key differences in~\cref{fig:tokyo_patch}. As shown, STAF provides a more accurate reconstruction by preserving fine details and colors. For example, in the \textit{first row}, STAF correctly reconstructs the text in green, matching the ground truth, while other methods tend to reconstruct it in blue. Furthermore, in the traffic sign image \textit{(last row)}, only STAF successfully reconstructs the \texttt{"No Parking"} sign, faithfully preserving the circular blue background, the red border, and the diagonal slash, closely matching the ground truth. In contrast, InstantNGP~\citep{muller2022instant} and SIREN fail to recover the fine structure and distort both color and shape. FINER partially retains the circular shape but loses critical color fidelity. These results demonstrate the superior performance of STAF in capturing fine-grained visual details, which is also supported by the provided error map. In this experiment, we use the PyTorch InstantNGP training pipeline, substituting the original ReLU activations with FINER, STAF, and SIREN. 

\subsection{Results on the Kodak, CelebA, and DIV2K datasets}
\label{sec:kodak_and_celeba}
Tables~\ref{tab:kodak_comparison} and~\ref{tab:celebA_comparison} summarize the quantitative comparisons using PSNR, SSIM, and LPIPS metrics that respectively assess pixel-wise quality, structural similarity, and perceptual quality. On the Kodak dataset \citep{kodak_dataset,KodakSuiteOriginal}, which consists of 24 high-quality $256 \times 256 \times 3$ images, STAF achieves the highest PSNR and SSIM scores, indicating excellent fidelity and structural similarity in the reconstructed images. Similarly, on the CelebA dataset \citep{liu2015faceattributes}, containing 19,867 $128 \times 128\times 3$ images, STAF outperforms other methods, showcasing its ability to generalize across diverse and large-scale datasets. Surprisingly, our results are sensitive to the LPIPS metric, ranking fourth-best on both datasets. This discrepancy stems from the well-known distortion–perception trade-off. STAF minimizes pixel-wise losses (e.g., MSE), which favor smooth reconstructions and lead to high PSNR/SSIM by approximating the conditional mean. However, LPIPS measures perceptual similarity using deep features that are sensitive to high-frequency details and textures, features often averaged out in pixel-accurate reconstructions, resulting in lower perceptual scores. We also report PSNR values on five randomly selected samples from the DIV2K~\citep{div2k} $512 \times 512$ dataset in~\Cref{tab:div2k_results}, comparing STAF with alternative activation functions. Across all samples, STAF consistently achieves relatively higher reconstruction quality. 

\begin{table}[h]
\centering
\label{tab:main_comparison}
\begin{minipage}{0.48\textwidth}
\scriptsize
\centering
\caption{Kodak dataset (24 images, 1000 training iterations). \colorbox{purple}{\kern-\fboxsep best\kern-\fboxsep} and \colorbox{light}{\kern-\fboxsep second-best\kern-\fboxsep} performance.}
\label{tab:kodak_comparison}
\renewcommand{\arraystretch}{1.2}
\resizebox{\textwidth}{!}{%
\begin{tabular}{lccc}
\toprule
\textbf{Method} & \textbf{PSNR} $\uparrow$ & \textbf{SSIM} $\uparrow$ & \textbf{LPIPS} $\downarrow$ \\
\midrule
STAF ($\tau = 5$) & \colorbox{purple}{57.13 $\pm$ 9.12} & \colorbox{purple}{0.993 $\pm$ 0.082} & {$0.011 \pm 0.052$} \\
SIREN & $29.82 \pm 2.31$ & $0.843 \pm 0.054$ & $0.201 \pm 0.081$ \\
FINER & $43.81 \pm 2.45$ & $0.986 \pm 0.010$ & \colorbox{light}{$0.004 \pm 0.004$} \\
WIRE & $29.02 \pm 2.54$ & $0.828 \pm 0.074$ & $0.101 \pm 0.096$ \\
INCODE & \colorbox{light}{$47.97 \pm 1.23$} & \colorbox{light}{$0.992 \pm 1.535$} & \colorbox{purple}{0.001 $\pm$ 0.001} \\
MFN & $44.61 \pm 1.52$ & $0.986 \pm 0.006$ & \colorbox{purple}{0.001 $\pm$ 0.001} \\
Gauss & $31.75 \pm 2.34$ & $0.886 \pm 0.026$ & $0.068 \pm 0.030$ \\
ReLU + PE & $28.37 \pm 2.012$ & $0.761 \pm 0.049$ & $0.182 \pm 0.047$ \\
\bottomrule
\end{tabular}}
    \end{minipage}
    \hfill
    \begin{minipage}{0.48\textwidth}
    \scriptsize
    \centering
    \caption{CelebA dataset (19,867 images, 250 training iterations). \colorbox{purple}{\kern-\fboxsep best\kern-\fboxsep} and \colorbox{light}{\kern-\fboxsep second-best\kern-\fboxsep} performance.}
    \label{tab:celebA_comparison}
    \renewcommand{\arraystretch}{1.2}
    \resizebox{\textwidth}{!}{%
    \begin{tabular}{lccc}
    \toprule
    \textbf{Method} & \textbf{PSNR} $\uparrow$ & \textbf{SSIM} $\uparrow$ & \textbf{LPIPS} $\downarrow$ \\
    \midrule
    STAF ($\tau = 5$) & \colorbox{purple}{70.932 $\pm$ 4.248} & \colorbox{purple}{0.9970 $\pm$ 0.0075} & $0.0005 \pm 0.0017$ \\
    SIREN & $58.251 \pm 2.322$ & $0.9955 \pm 0.0025$ & $0.0006 \pm 0.0004$ \\
    FINER & $48.96 \pm 2.136$ & $0.9932 \pm 0.0029$ & \colorbox{light}{$0.0003 \pm 0.0003$} \\
    WIRE & $39.283 \pm 2.403$ & $0.9773 \pm 0.0144$ & $0.0082 \pm 0.0052$ \\
    MFN & $44.124 \pm 2.912$ & $0.9890 \pm 0.0059$ & $0.0021 \pm 0.0021$ \\
    INCODE & \colorbox{light}{$59.853 \pm 5.523$} & \colorbox{light}{$0.9964 \pm 0.0019$} & \colorbox{purple}{0.0001 $\pm$ 0.0003} \\
    GAUSS & $47.547 \pm 2.807$ & $0.9934 \pm 0.0055$ & \colorbox{light}{$0.0003 \pm 0.0004$} \\
    ReLU + PE & $31.088 \pm 2.318$ & $0.8899 \pm 0.0372$ & $0.0865 \pm 0.0414$ \\
    \bottomrule
    \end{tabular}}
    \end{minipage}
\end{table}

\section{Ablation Studies}
\label{sec:additional}
In this section, we present ablation studies to demonstrate the effectiveness of STAF.

\begin{figure}[t]
    \centering
     \includegraphics[width=0.8\textwidth]{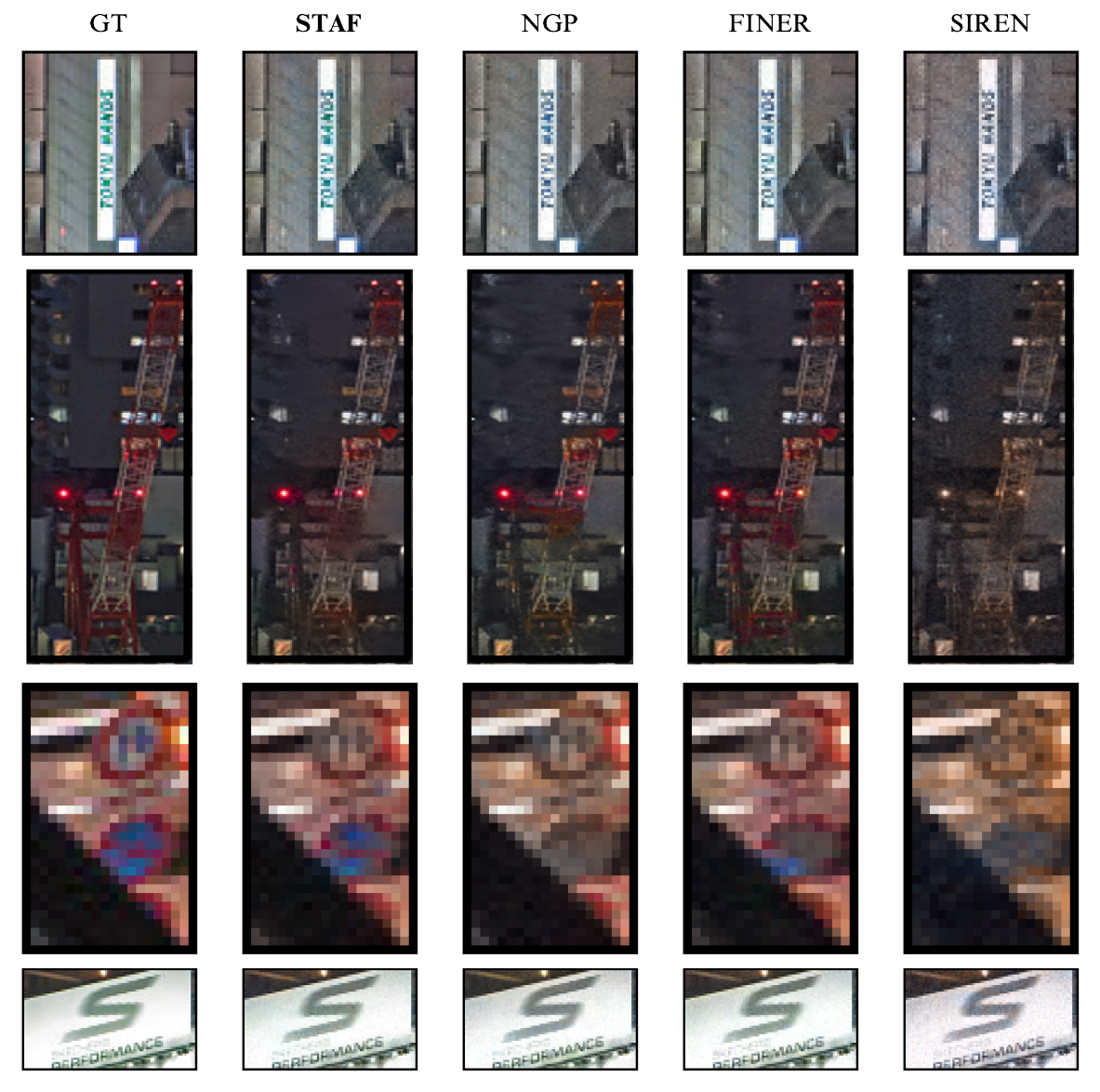}
\caption{\textbf{Tokyo image patch reconstruction.} Cropped regions from the Tokyo scene reconstructed by \textbf{STAF} and competing representations (NGP, FINER, SIREN), compared to the ground truth. Across patches, \textbf{STAF} better preserves fine structures (e.g., text strokes and lattice edges), maintains more accurate color/contrast in dark regions, and reduces blotchy noise and over-smoothing artifacts that appear in the baselines.}    
    \label{fig:tokyo_patch}
\end{figure}

\begin{figure}[!th]
    \centering
    \begin{minipage}[b]{0.48\textwidth}
        \centering
        \includegraphics[width=\textwidth]{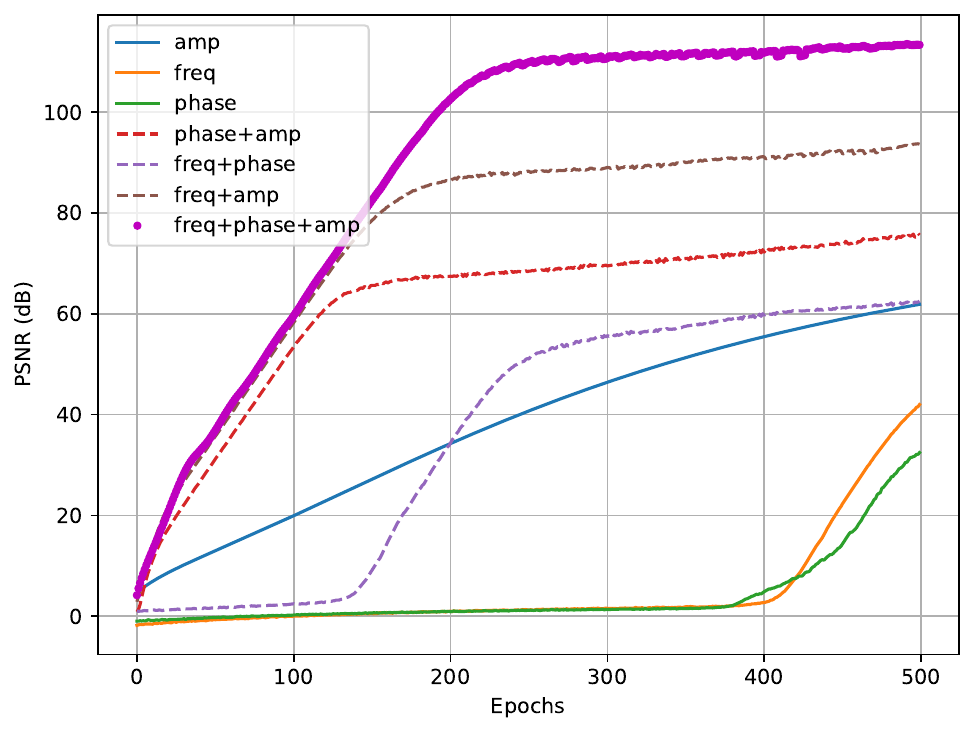}
\caption{\textbf{Ablation of modulation components.} PSNR over training epochs when enabling amplitude (\textbf{amp}), frequency (\textbf{freq}), and phase (\textbf{phase}) individually and in combination. Each component alone provides limited gains, while combining them yields strong improvements; the full \textbf{freq+phase+amp} model converges fastest and attains the highest final PSNR, indicating that the three factors are complementary for recovering high-fidelity details.}        \label{fig:component1}
    \end{minipage}
    \hfill
    \begin{minipage}[b]{0.48\textwidth}
        \centering
        \includegraphics[width=\textwidth]{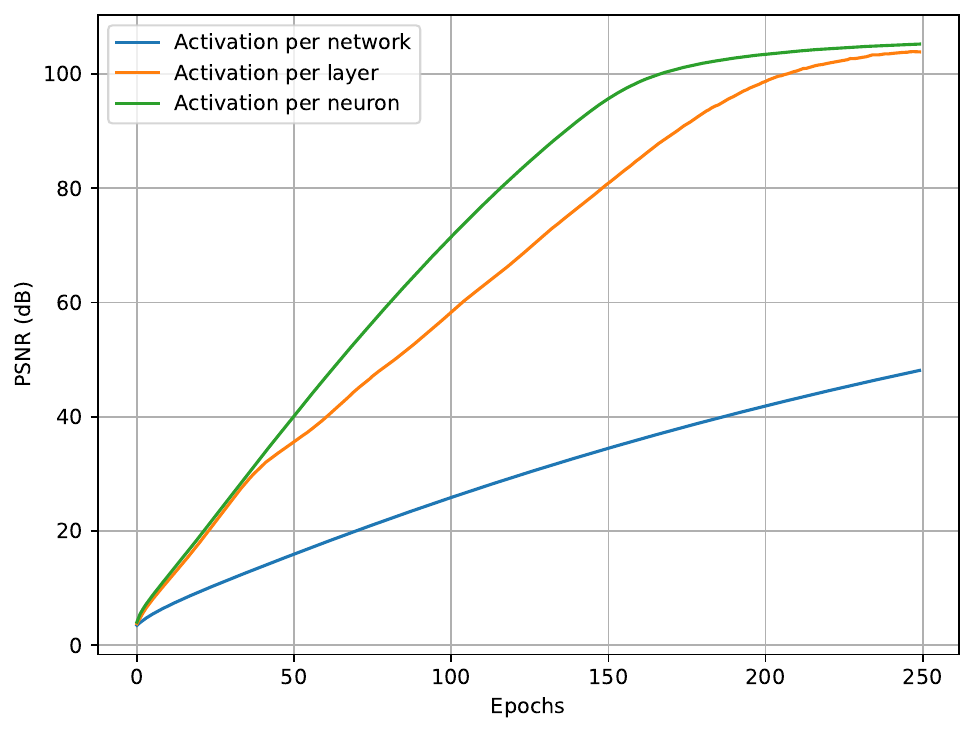}
\caption{\textbf{Granularity of activation selection.} PSNR versus training epochs when applying the activation choice at different levels: \textit{per-network} (single activation shared across all layers), \textit{per-layer}, and \textit{per-neuron}. Finer granularity improves reconstruction quality, with \textit{per-layer} achieving near-\textit{per-neuron} performance while being simpler and more parameter-efficient; we therefore adopt \textbf{per-layer} activation in all experiments.}        \label{fig:network_act_type}
    \end{minipage}
\end{figure}

\begin{figure}[!th]
    \centering
    \begin{minipage}[b]{0.48\textwidth}
        \centering
        \includegraphics[width=\textwidth]{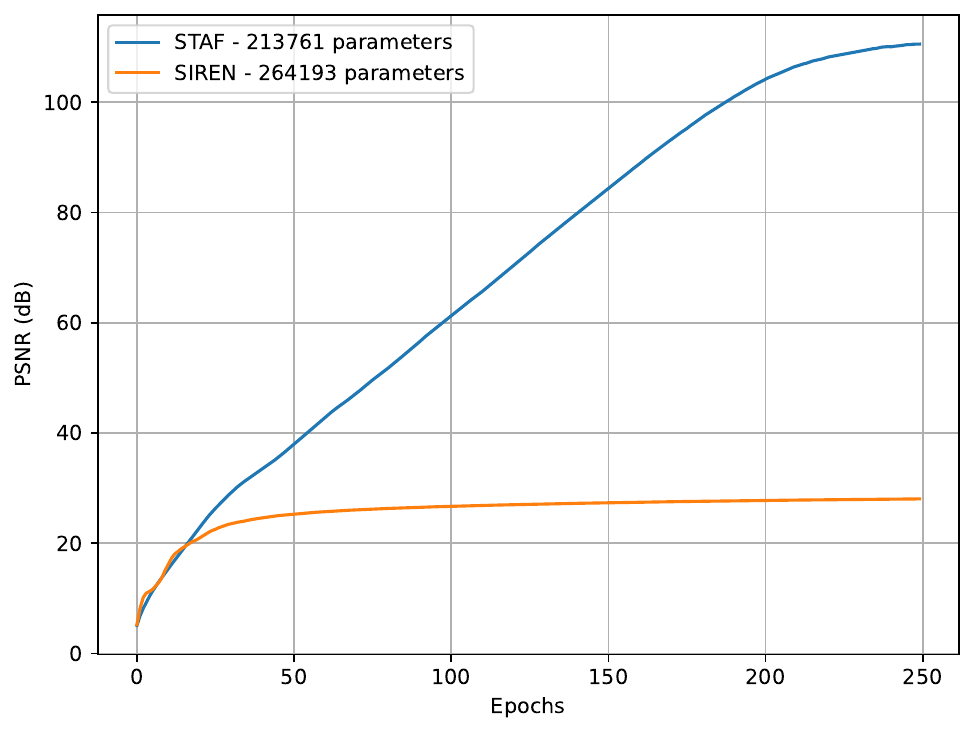}
\caption{\textbf{STAF vs. SIREN.} PSNR over 250 training epochs. Despite using fewer parameters (\textbf{STAF}: 213{,}761 vs. \textbf{SIREN}: 264{,}193), \textbf{STAF} consistently converges faster and achieves higher PSNR throughout training.}
        \label{fig:match_params}
    \end{minipage}
    \hfill
    \begin{minipage}[b]{0.48\textwidth}
        \centering
        \includegraphics[width=\textwidth]{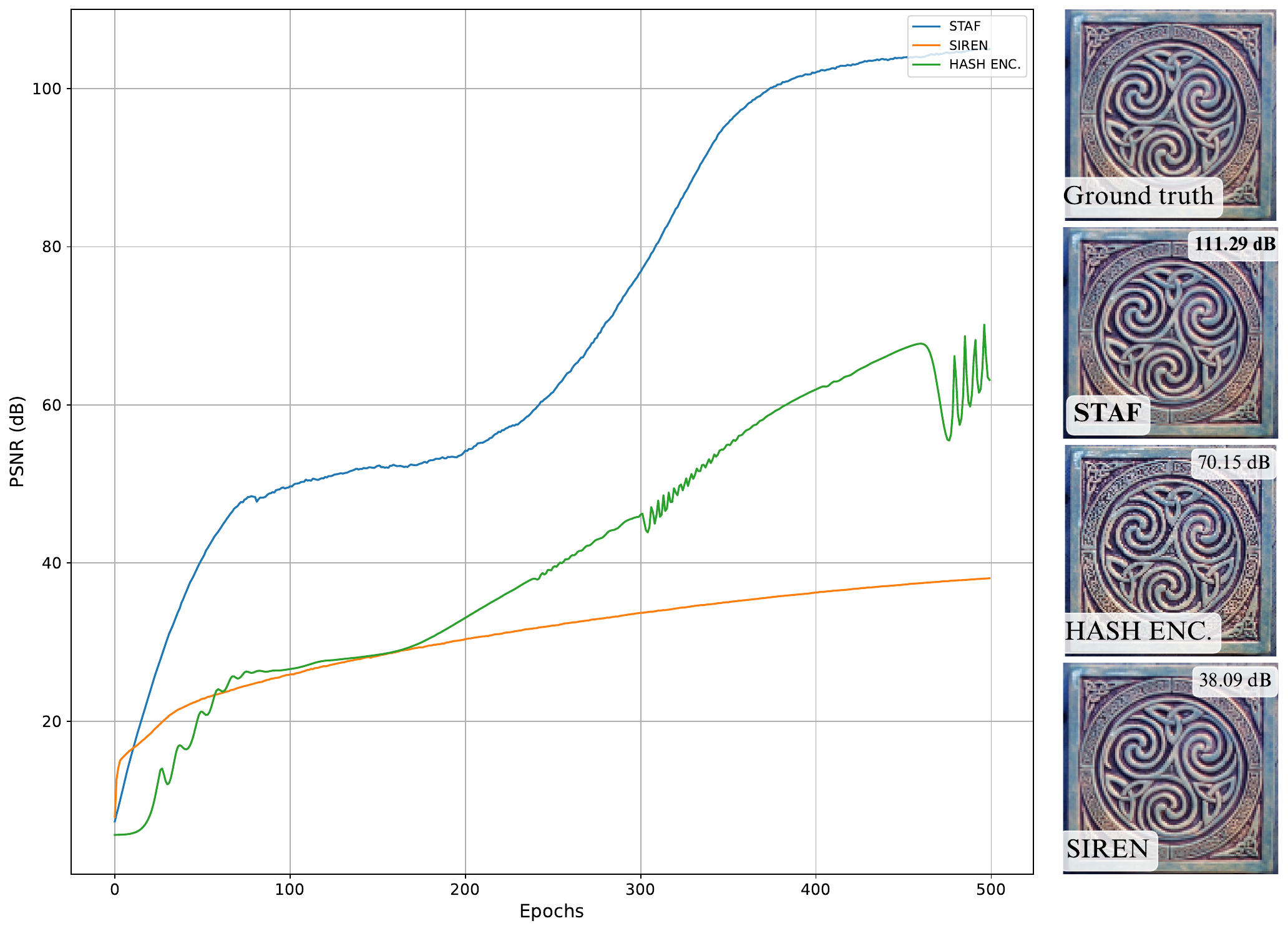}
\caption{\textbf{Single-image reconstruction comparison.} PSNR versus training epochs for STAF, SIREN, and Hash Encoding. STAF converges fastest and attains the highest final PSNR, followed by Hash Encoding, while SIREN lags behind throughout training.}        
\label{fig:hash}
    \end{minipage}
\end{figure}

\subsection{Impact of Amplitude, Frequency, and Phase}
\cref{fig:component1} illustrates the PSNR (dB) over 500 iterations for different component combinations: \textbf{amplitude} ($C_i$'s), \textbf{frequency} ($\Omega_i$'s), \textbf{phase} ($\Phi_i$'s), and their interactions. The model leveraging all three components (freq + phase + amp) achieves the highest PSNR, significantly outperforming individual and partial combinations. This confirms the importance of integrating amplitude, frequency, and phase in the model design for optimal performance, and validates our initial design choices and mathematical analysis.

Additionally, this graph highlights the varying importance of the parameters in our model. Specifically, the amplitudes exhibit the highest significance, followed by the frequencies, with the phases contributing the least. These findings provide valuable guidance for parameter reduction in scenarios with limited training time or hardware resources, enabling more efficient model optimization.

\subsection{Effect of Number of Sinusoids ($\tau$)}
We conducted an ablation study on the number of sinusoids $\tau$ to investigate its impact on model performance and computational efficiency, as summarized in Table~\ref{tab:tau_ablation}. The results were obtained on the Kodak dataset~\citep{kodak_dataset}, and training and inference times were measured on a single NVIDIA T4 GPU with 16GB of memory. We train models for 500 iterations.

By varying $\tau$, we observed that increasing the number of sinusoids improves reconstruction quality, reflected in higher PSNR and SSIM, and lower LPIPS scores, as well as convergence stability (lower variance). However, these improvements come at the cost of increased training and inference times, and higher model complexity. Therefore, selecting an appropriate $\tau$ involves a trade-off between performance and computational overhead, which can be adjusted depending on application-specific constraints. We set $\tau = 5$ as a good trade-off between quality and training time for all tasks, except for denoising, where a smaller value of $\tau = 2$ yielded better performance.

\added{Table~\ref{tab:tau_ablation} also shows that inference time scales approximately linearly with $\tau$, from 0.079s at $\tau=2$ to 1.268s at $\tau=50$. Thus, while increasing $\tau$ improves reconstruction quality, it also incurs a substantial latency cost. We therefore view $\tau$ as a direct quality–latency knob: $\tau=5$ is a strong practical default for most tasks, smaller values are preferable in latency-sensitive settings, and much larger values are best reserved for offline or quality-critical scenarios where runtime is less important.}

\begin{table}[t]
\centering
\begin{minipage}{0.48\textwidth}
\scriptsize
\centering
\caption{Effect of varying $\tau$ on reconstruction quality, training, and inference time.}
\label{tab:tau_ablation}
\renewcommand{\arraystretch}{1.2}
\resizebox{\textwidth}{!}{%
\begin{tabular}{lccccc}
\toprule
& $\tau=2$ & $\tau=5$ & $\tau=10$ & $\tau=20$ & $\tau=50$ \\
\midrule
\textbf{PSNR (dB) ↑} & & & & & \\
Average  & 37.07 & 38.80 & 39.14 & {41.27} & {41.41} \\
Variance & 20.99 & 13.58 & 13.44 & {12.74} & {7.63} \\
\midrule
\textbf{SSIM ↑} & & & & & \\
Average  & 0.947 & 0.965 & 0.968 & {0.971} & {0.972} \\
Variance & 0.0022 & 0.0014 & 0.0011 & {0.0007} & {0.0001} \\
\midrule
\textbf{LPIPS ↓} & & & & & \\
Average  & 0.021 & 0.012 & 0.010 & {0.006} & {0.005} \\
Variance & 0.0020 & 0.0002 & 0.0001 & {6.91e-5} & {5.18e-5} \\
\midrule
\textbf{Training time (s)} & {47.38} & {92.89} & 162.43 & 325.53 & 742.94 \\
\textbf{Inference time (s)} & {0.079} & {0.157} & 0.273 & 0.556 & 1.268 \\
\bottomrule
\end{tabular}}
\end{minipage}
\hfill
\begin{minipage}{0.48\textwidth}
\scriptsize
\centering
\caption{PSNR results on five randomly selected samples from the DIV2K 512$\times$512 dataset.}
\label{tab:div2k_results}
\renewcommand{\arraystretch}{1.2}
\resizebox{\textwidth}{!}{%
\begin{tabular}{lccccccc}
\toprule
\# & STAF & FINER & SIREN & FFN & PEMLP & WIRE & GAUSS \\
\midrule
Sample 1 & \textbf{40.88} & 37.28 & 35.91 & 35.31 & 29.21 & 32.53 & 31.19 \\
Sample 2 & \textbf{34.93} & 34.90 & 30.81 & 31.56 & 23.42 & 31.28 & 30.47 \\
Sample 3 & \textbf{33.38} & 31.69 & 29.64 & 30.29 & 21.63 & 30.52 & 27.43 \\
Sample 4 & \textbf{40.23} & 38.01 & 36.94 & 35.19 & 25.32 & 32.89 & 31.65 \\
Sample 5 & \textbf{34.81} & 33.70 & 33.09 & 33.01 & 22.81 & 30.68 & 27.69 \\
\bottomrule
\end{tabular}}
\end{minipage}
\end{table}

\subsection{Comparative Analysis of Activation Strategies}
\Cref{fig:network_act_type} aligns with the described strategies in \hyperref[sec:implement_str]{Section 3.4} for implementing STAF's parametric activation functions. The per-neuron activation (\textcolor{green}{green} curve) achieves the highest PSNR, demonstrating superior expressiveness, but at the cost of a significant parameter increase, as expected. The network-wide activation (\textcolor{blue}{blue} curve) shows the weakest performance, reflecting limited expressiveness due to shared activation functions across the entire network. The layer-wise activation (\textcolor{orange}{orange} curve) offers a balanced trade-off, achieving nearly the same performance as per-neuron activation while requiring far fewer additional parameters (e.g., 225 parameters for a 3-layer MLP with 25 terms). This supports its use as an efficient and effective strategy, as highlighted in \hyperref[sec:implement_str]{Section 3.4}.


\subsection{Computational Cost}
We provide a detailed analysis in Table~\ref{tab:model_comparison}, comparing all trainable and non-trainable methods in terms of parameter count and FLOPs to better assess the computational cost and complexity of STAF on a single image reconstruction task (on the tiger image in~\cref{fig:image_rep}). As shown, increasing $\tau$ results in only a minimal rise in parameter count, 60 parameters for $\tau = 5$ and 120 for $\tau = 10$, which is negligible relative to the total number of model parameters. This increase yields improved reconstruction quality without a significant rise in FLOPs. Moreover, it is worth noting that although our STAF models have fewer parameters than INCODE, MFN, FFN, and ReLU+PE, they outperform all of them across all metrics. We used the Calflops package~\citep{calculateflops} to calculate the FLOP count.

\begin{table}[ht]
    \centering
    \caption{Performance comparison across models in terms of parameter count, FLOPs, PSNR, SSIM, and LPIPS. FLOPs are calculated using the Calflops package~\citep{calculateflops} with an input shape of (1, 1, 2).}
    \label{tab:model_comparison}
    \resizebox{\textwidth}{!}{%
    \begin{tabular}{l|c|c|c|c|c|c}
    \toprule
    \textbf{Model} & \textbf{\#Parameters} & \textbf{\#FW FLOPs} & \textbf{\#Total (FW+BW) FLOPs} & \textbf{PSNR (dB) ↑} & \textbf{SSIM ↑} & \textbf{LPIPS ↓} \\
    \midrule
    STAF ($\tau = 5$)   & 198,975 & 395.78 K & 1.19 M   & {39.94} & {0.973} & {0.005} \\
    STAF ($\tau = 10$)  & 199,035 & 395.78 K & 1.19 M   & {40.84} & {0.979} & {0.003} \\
    \midrule
    INCODE              & 436,775 & 6.85 G   & 20.55 G  & 38.19 & 0.962 & 0.011 \\
    FINER               & 198,915 & 395.78 K & 1.19 M   & 36.55 & 0.950 & 0.017 \\
    WIRE                & 99,915  & 198.38 K & 595.13 K & 32.76 & 0.885 & 0.058 \\
    MFN                 & 204,291 & 398.85 K & 1.2 M    & 33.87 & 0.942 & 0.020 \\
    Gauss               & 198,915 & 395.78 K & 1.19 M   & 32.21 & 0.888 & 0.070 \\
    FFN                 & 204,035 & 406.02 K & 1.22 M   & 35.31 & 0.944 & 0.037 \\
    SIREN               & 198,915 & 395.78 K & 1.19 M   & 33.55 & 0.928 & 0.062 \\
    ReLU + PE           & 206,083 & 411.14 K & 1.23 M   & 30.10 & 0.856 & 0.150 \\
    \bottomrule
    \end{tabular}%
    }
\end{table}

\subsection{Initialization}
Note that while our initialization technique is designed to address the shortcomings of SIREN's, it remains compatible with existing approaches like SIREN, as it operates on the coefficients rather than the weights. As demonstrated in~\cref{fig:activation_2x2}, the central limit theorem remains valid for the post-activation values, ensuring that the distribution of dot-product values follows $\mathcal{N}(0,1)$ across different $\tau$ values. In addition, with increasing $\tau$, STAF exhibits a greater increase in maximum frequency compared to SIREN, allowing it to capture a broader range of frequencies more effectively.

\begin{figure}[!th]
    \centering

    \resizebox{\linewidth}{!}{%
    \begin{minipage}{\linewidth}
        \centering
        \subfigure[SIREN]{
            \includegraphics[width=0.38\linewidth]{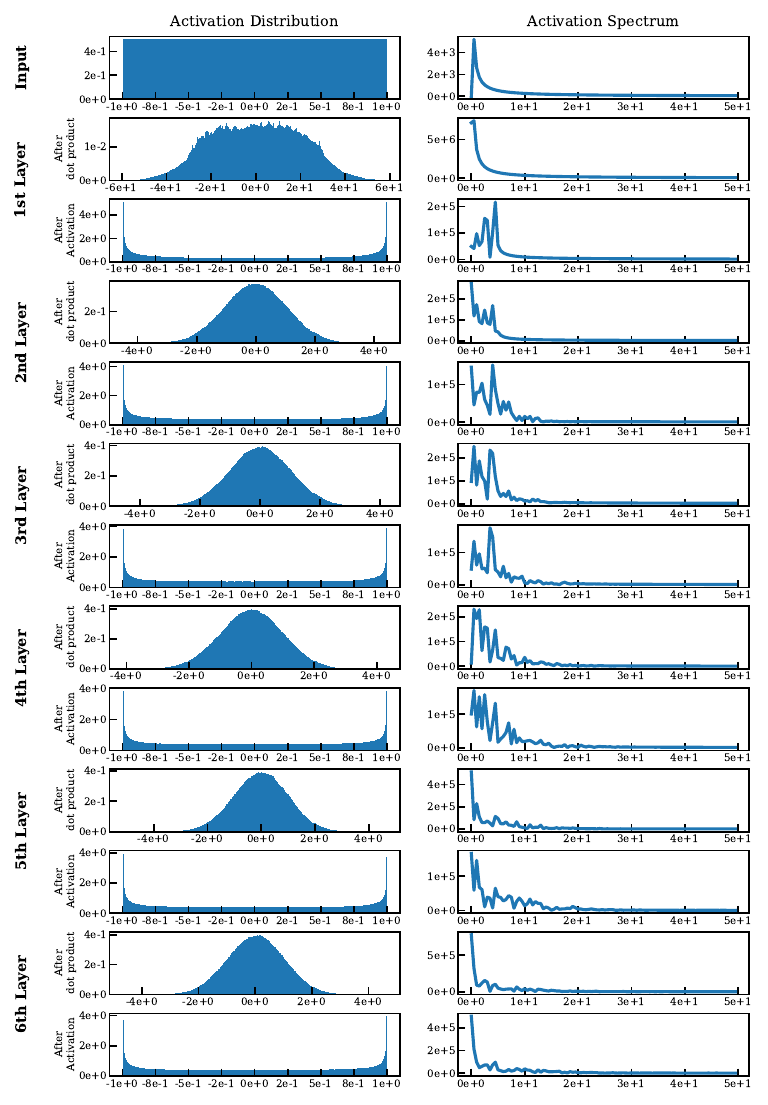}
            \label{fig:activations_siren}
        }
        \hfill
        \subfigure[STAF ($\tau=5$)]{
            \includegraphics[width=0.38\linewidth]{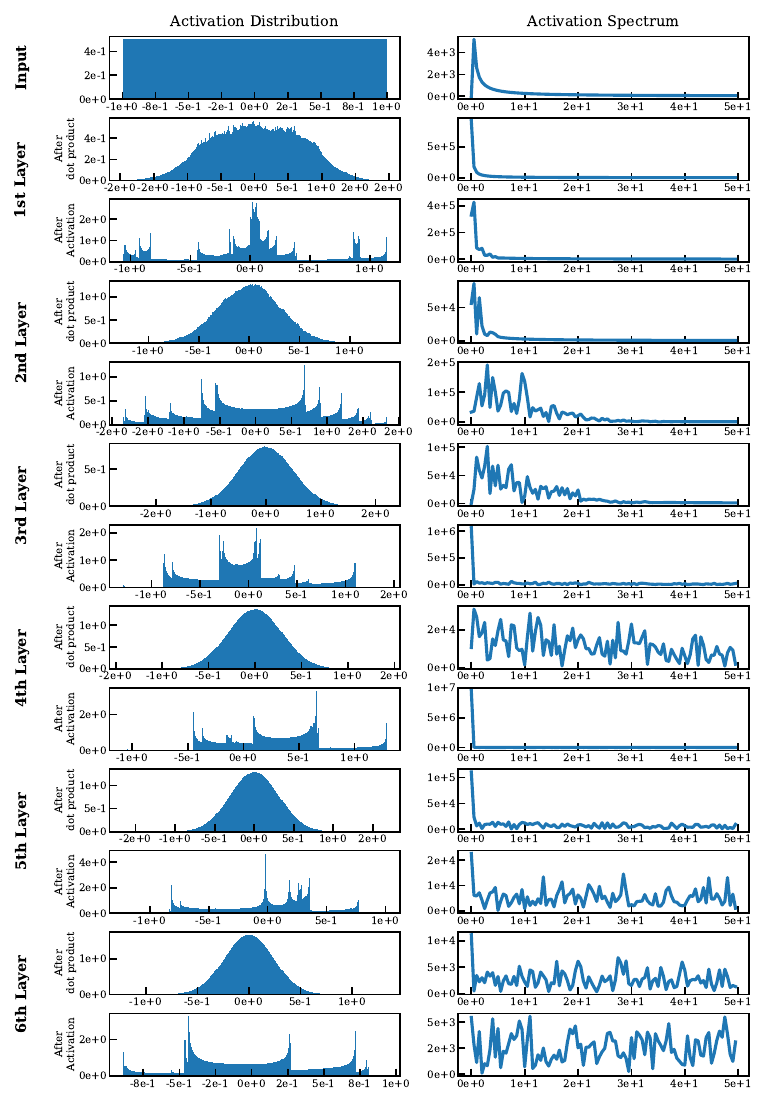}
            \label{fig:activations_staf2_ta_5}
        }

        \subfigure[STAF ($\tau=10$)]{
            \includegraphics[width=0.38\linewidth]{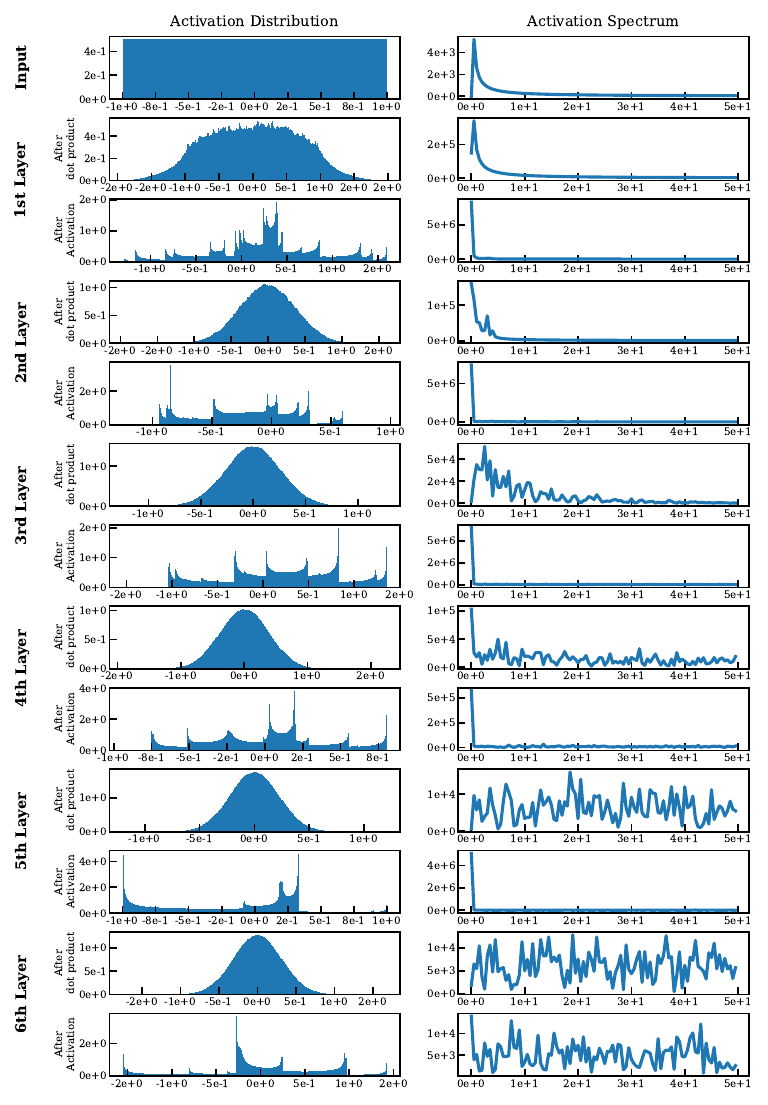}
            \label{fig:activations_staf2_ta_10}
        }
        \hfill
        \subfigure[STAF ($\tau=20$)]{
            \includegraphics[width=0.38\linewidth]{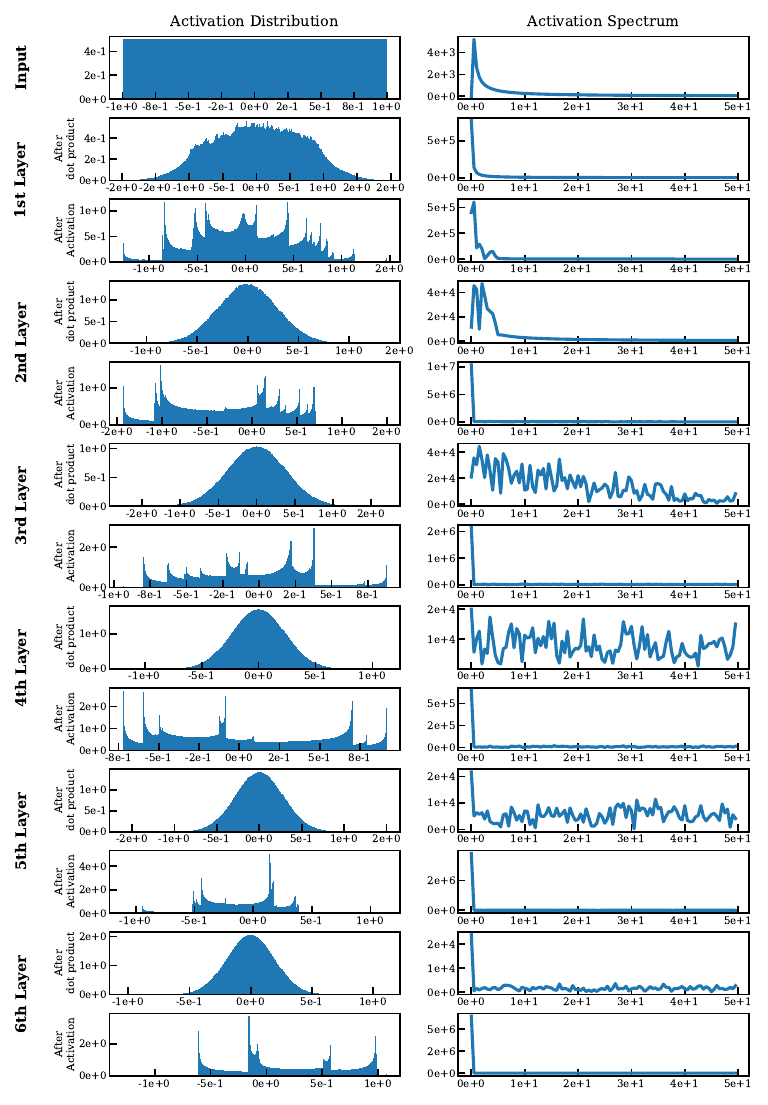}
            \label{fig:activations_staf2_ta_20}
        }
    \end{minipage}%
    }
    
    \caption{Comparison of activation distributions and frequency spectra between SIREN and STAF with $\tau$ values of 5, 10, and 20. Note that the scales may differ across plots; however, to appreciate how effectively STAF captures higher frequencies, observe the maximum frequency in each spectrum.}
    \label{fig:activation_2x2}
\end{figure}

\subsection{Performance Comparison of STAF and SIREN with Similar Parameter Counts}
\cref{fig:match_params} demonstrates the superior performance of STAF compared to SIREN in terms of PSNR (dB) across 250 epochs, despite SIREN having a higher parameter count for the Celtic image. To ensure a balanced evaluation, the default configuration of SIREN was modified by adding one additional layer, resulting in 264,193 parameters for SIREN compared to STAF's 213,761 parameters. This approach avoids extensive parameter tuning for SIREN, offering a practical comparison between the two models. The results clearly show that STAF consistently outperforms SIREN, achieving significantly higher PSNR values throughout the training process. This highlights STAF’s efficiency and effectiveness, even when constrained to a lower parameter count, making it a more suitable choice for tasks requiring high-quality image reconstruction.

\subsection{More Comparative Evaluations}
Figure \ref{fig:hash} presents a comparative analysis of three methods, STAF, SIREN, and Hash Encoding \citep{muller2022instant}. \added{It should be noted that since hash encodings and activation design address different modeling axes, this comparison should be viewed as complementary rather than a direct activation-only benchmark.} The PSNR curves indicate that STAF outperforms both SIREN and Hash Encoding, reaching a PSNR of over 100 dB after 500 epochs. While Hash Encoding shows a notable improvement over SIREN, peaking at around 70 dB, it still falls short of STAF’s superior performance. SIREN, in contrast, exhibits the slowest PSNR growth, achieving only around 38 dB. The qualitative comparisons on the right further support these quantitative results, with STAF closely approximating the ground truth, while Hash Encoding and SIREN produce visibly lower-quality reconstructions. This analysis highlights the advantage of STAF in achieving both higher fidelity and faster convergence in image reconstruction tasks.

\begin{lstlisting}[
    caption={Parameter initialization of STAF in PyTorch.},
    label={lst:init_params},
    float=t,
    numbers=left,
    backgroundcolor=\color{codebg}
]
def init_params(self):
    """
    Initialize parameters for sinusoidal activations.

    - ws: Frequency parameters scaled by omega_0.
    - phis: Phase offsets sampled uniformly from [-pi, pi].
    - bs: Scale factors sampled from a Laplace distribution,
          with signed square root applied.
    """
    # Frequencies: random in [0, 1), scaled by omega_0
    ws = self.omega_0 * torch.rand(self.tau)
    self.ws = nn.Parameter(ws, requires_grad=True)

    # Phases: uniform in [-pi, pi]
    self.phis = nn.Parameter(
        -math.pi + 2 * math.pi * torch.rand(self.tau),
        requires_grad=True
    )

    # Scale factors: Laplace sampling with signed sqrt
    laplace_scale = 2 / self.tau
    laplace_samples = torch.distributions.Laplace(0, laplace_scale).sample((self.tau,))
    self.bs = nn.Parameter(
        torch.sign(laplace_samples) * torch.sqrt(torch.abs(laplace_samples)),
        requires_grad=True
    )
\end{lstlisting}

\begin{figure*}[!th]
    \centering
    \includegraphics[width=0.95\textwidth]{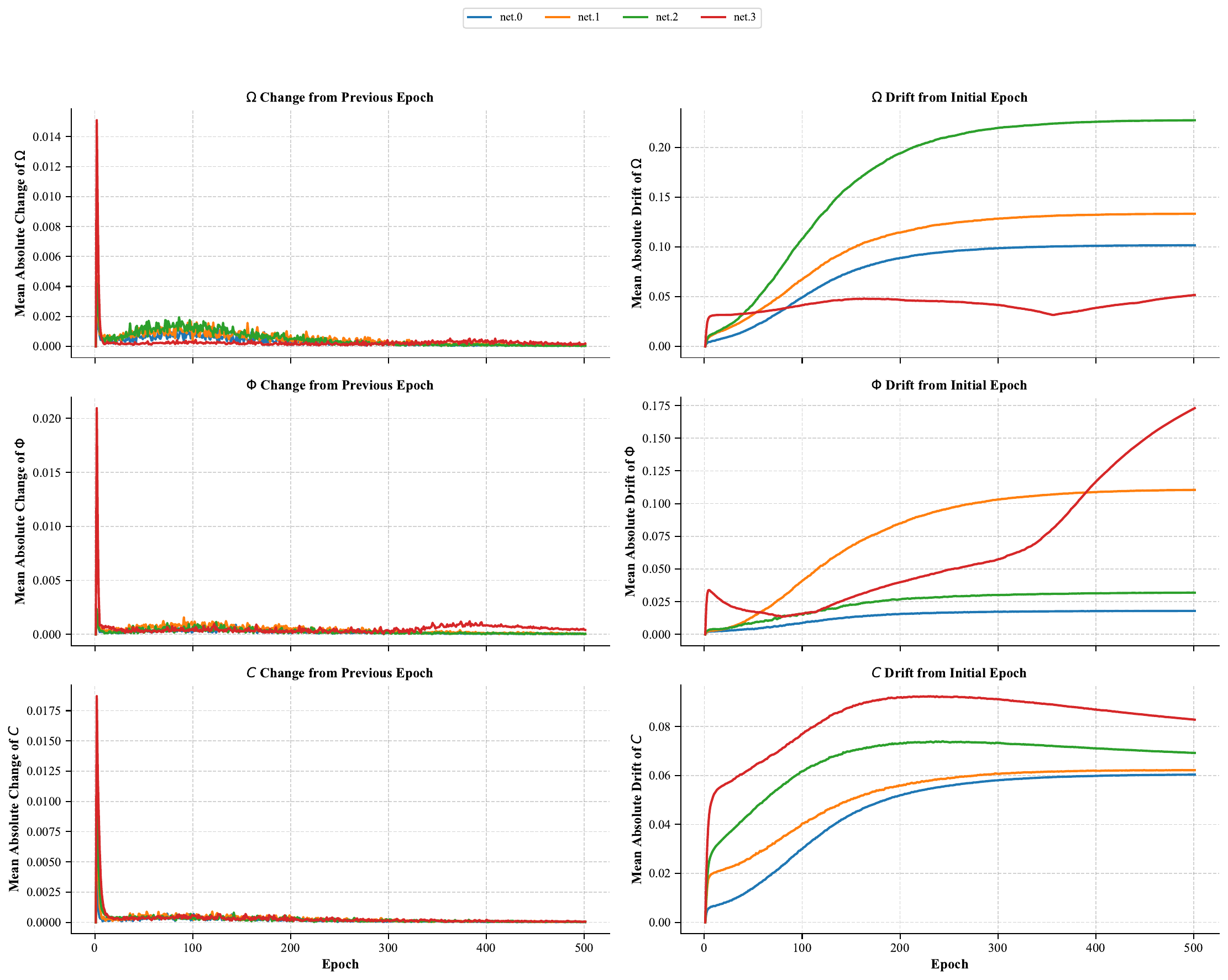}\\
\caption{\added{\textbf{Layer-wise dynamics of the trainable sinusoidal parameters in STAF during training.} The left column reports the mean absolute change from the previous epoch, and the right column reports the mean absolute drift from initialization, for $\Omega$ (top), $\Phi$ (middle), and $C$ (bottom). Different colors correspond to different MLP layers in the network, denoted by \texttt{net.0}--\texttt{net.3}. The updates are concentrated in the early stage of training and then decay toward near-zero values, while the cumulative drift grows early and later plateaus. These trends suggest that the sinusoidal parameters are learned primarily in an early phase and remain relatively stable afterward.}}
\label{fig:staf_dynamics_full}
\end{figure*}

\subsection{Empirical Validation of the Two-Phase NTK Approximation}
\label{sec:ntk_empirical_validation}

The analytic NTK in \Cref{sec:ntk} is derived under a two-phase approximation, where the sinusoidal parameters of STAF are assumed to adapt mainly in an early stage of training, after which the remaining optimization is largely governed by the MLP weights. We emphasize that this approximation is not intended as an exact description of the full optimization dynamics. Rather, it serves as a simplified regime that makes the NTK analysis tractable. To assess whether this approximation is reasonable in practice, we perform two empirical studies.

First, we track the dynamics of the trainable sinusoidal parameters, namely the frequencies $(\Omega)$, phases $(\Phi)$, and amplitudes $(C)$, throughout training. In \Cref{fig:staf_dynamics_full}, we report both the mean absolute change from the previous epoch and the mean absolute drift from initialization, measured separately for each layer. Across layers, the per-epoch updates are large only at the beginning of training and then rapidly decay toward near-zero values, while the cumulative drift from initialization grows early and gradually plateaus. This behavior indicates that the sinusoidal parameters are primarily adjusted in the early stage of training and become relatively stable afterward. While this does not imply that they stop changing entirely, it suggests that their most significant adaptation occurs early, which is consistent with the two-phase approximation used in our NTK derivation.

Second, we perform a freeze-after-warmup experiment on the image fitting task using the image in \cref{fig:celtic}. In the standard full-training setting over 500 iterations, STAF achieves {40.47} PSNR, {0.9804} SSIM, and {0.00577} LPIPS. In the freeze-after-warmup setting, we train normally for the first 200 steps, then freeze the sinusoidal parameters and continue optimizing only the remaining MLP weights. Under this setting, the model still reaches {39.96} PSNR, {0.9804} SSIM, and {0.00632} LPIPS. This corresponds to only a 0.51 dB drop in PSNR, a negligible {$2.8\times10^{-5}$} drop in SSIM, and a modest {$5.5\times10^{-4}$} increase in LPIPS. These results suggest that freezing the sinusoidal parameters after the early stage causes only a minor degradation in final performance, indicating that most of their useful adaptation occurs during warmup.

Taken together, these two experiments support the two-phase view as an approximation in our setting: the sinusoidal parameters adapt most strongly early in training, and the later stages of optimization can be carried largely by the MLP weights. Accordingly, the NTK analysis in \Cref{sec:ntk} should be interpreted as characterizing this approximate regime rather than the exact end-to-end dynamics of STAF over the entire course of training.

\begin{figure*}[!th]
    \centering
    \includegraphics[width=0.75\textwidth]{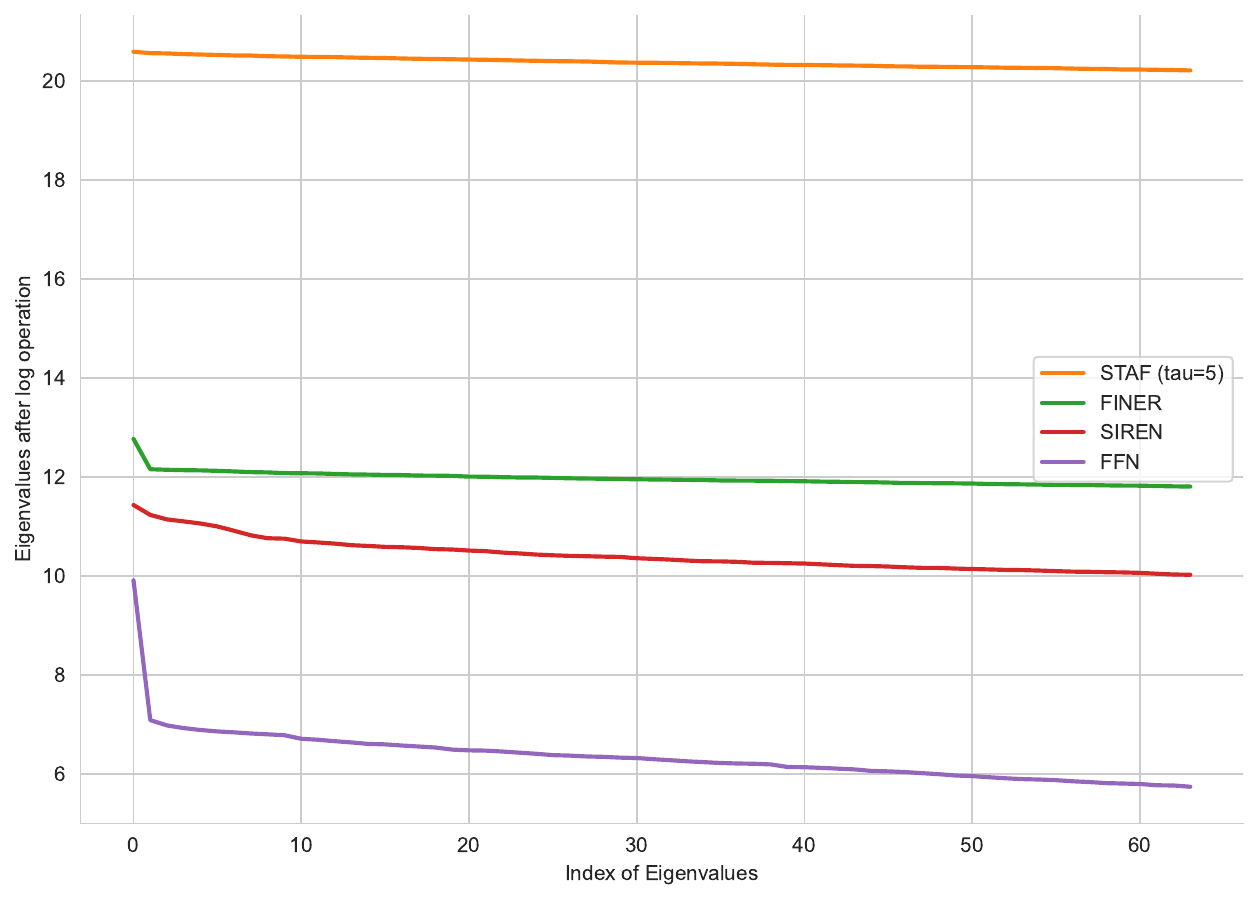}\\
    \vspace{-1em}
\caption{\added{\textbf{Empirical NTK eigenvalue distribution for the audio representation task.} We compare STAF ($\tau=5$), FINER, SIREN, and FFN using the same empirical NTK protocol as in the image-domain analysis. The plot reports the log-transformed NTK eigenvalues sorted by index. STAF exhibits a stronger spectrum, with larger eigenvalues across much of the distribution, indicating a richer kernel and more favorable optimization behavior for fitting oscillatory audio signals.}}
\label{fig:audio_ntk_eigenvalue_distribution}
\end{figure*}
\subsection{Empirical NTK Analysis on Audio}
\label{sec:appendix_audio_ntk}

We further examine the NTK eigenvalue distribution for the audio representation task. Using the same empirical NTK computation protocol as in~\Cref{sec:ntk}, we compare STAF ($\tau=5$) with FINER, SIREN, and FFN. The resulting spectra are shown in \Cref{fig:audio_ntk_eigenvalue_distribution}. Overall, STAF exhibits a stronger NTK spectrum on the audio task, with larger log-eigenvalues over a substantial portion of the spectrum relative to the baselines. This suggests that, in the audio setting as well, STAF induces a kernel with stronger signal components and a richer effective representation space. Consistent with the interpretation in the main text, such a spectrum is associated with more favorable optimization behavior, since components corresponding to larger kernel eigenvalues are typically learned more rapidly.

These results mirror the NTK observations in the image domain and further suggest that the benefits of trainable sinusoidal activations are not limited to visual signals. Rather, STAF also reshapes the NTK spectrum in 1D audio reconstruction, where accurately modeling oscillatory and mixed-frequency structure is essential. We do not interpret this as eliminating spectral bias; instead, it indicates that STAF improves the practical capacity--convergence trade-off by making a broader range of components easier to optimize.

\subsection{More Implementation Details}
We provide the \texttt{PyTorch} implementation of our initialization scheme in~\cref{lst:init_params} to better illustrate our approach. We set $W_0$ (or \texttt{omega\_0} in the code) to 30, and found that 40 also works well in practice. For the image representation task, we use a learning rate of $1\text{e}{-3}$. For shape representation, we use $2.5\text{e}{-4}$ with a marching cubes threshold of 0.5. In the audio task, we adopt a learning rate of $2.5\text{e}{-4}$, with $W_0 = 3000.0$ in the first layer and $W_0 = 30.0$ in the hidden layers. For denoising, we use a learning rate of $1.5\text{e}{-4}$ with $W_0 = 5$. Additionally, for high-resolution DIV2K images, we follow the approach in~\citep{kazerouni2025lift} and incorporate skip connections in the hidden layers.

\section{Proofs}
\addtocontents{toc}{\protect\setcounter{tocdepth}{2}}
\subsection{Distribution of $Y = \sin(aX + b)$ for $X \sim U(-1,1)$} \label{assessing_dist}
The authors of \citep{Siren} claim that regardless of the choice of $b$, if $a > \frac{\pi}{2}$, the output $Y$ follows an arcsine distribution, denoted as $Arcsine(-1,1)$.
\addtocontents{toc}{\protect\setcounter{tocdepth}{3}}
\subsubsection{Assessing SIREN's claim about the distribution of $Y=\sin(ax+b)$}
If SIREN's claim were true, the expected value of $Y$ would be independent of $b$.

\begin{flalign}
    \mathbb{E}[Y]&=\int_{-1}^{1} \sin(ax + b) f_X(x)\, dx = \frac{1}{2}\int_{-1}^{1} (\sin(ax) \cos b + \sin b \cos(ax)) \, dx && \nonumber \\
    &= \frac{1}{2a}\Big[\big(\cos(-a)-\cos(a)\big)\cos b + \big(\sin(a)-\sin(-a)\big)\sin b\Big] = \frac{\sin a \sin b}{a} &&
\end{flalign}
which obviously depends on $a$ and $b$. However, if we want to eliminate $b$ from $\mathbb{E}[Y]$, we can set
\begin{equation} \label{length_of_interval}
    a=n\pi,
\end{equation}
for an $n \in \mathbb{Z}\setminus{\{0\}}$. Next, let us consider the next moments of $Y$, because if the moment-generating function (MGF) of $Y$ exists, the moments can uniquely determine the distribution of $Y$.
\begin{flalign}
    \mathbb{E}[Y^k]=\int_{-1}^{1}\frac{1}{2}\sin^k(ax+b)dx
\end{flalign}
Using \eqref{length_of_interval}, it is equal to $\frac{1}{2n\pi}\int_{-1}^{1}\sin^k(ax+b)dx.$ By assuming $u=ax+b$, we have
\begin{equation}
    \mathbb{E}[Y^k]=\frac{1}{2n\pi}\int_{b-a}^{b-a+2n\pi}\sin^k(u)du.
\end{equation}
Since for each pair of natural numbers $(k,n)$, $2n\pi$ is a period of $\sin^k(u)$, we can write
\begin{equation}
    \mathbb{E}[Y^k]=\frac{1}{2n\pi}\int_{0}^{2\pi}\sin^k(u)du =\begin{cases}
			0, & \text{if $k$ is odd}\\
            \frac{\binom{k}{k/2}}{2^{k}n}, & \text{if $k$ is even}
		 \end{cases}
\end{equation}
If $a = n\pi$, then all moments of $Y$ will be independent of $a$ and $b$. Consequently, one may expect that the distribution of $Y$ is also independent of $a$ and $b$. However, this only occurs when 
$a = n\pi.$ Therefore, the assumption that the distribution of $Y$ is independent of the parameter $b$ (as used in \citep{Siren}) holds only for specific values of $a$.

We next calculate the exact distribution of $Y = \sin(aX+b)$, where $X \sim U(-1,1)$.
\addtocontents{toc}{\protect\setcounter{tocdepth}{3}}
\subsubsection{Finding the Distribution of $Y$} \label{distribution_of_Y}

To obtain the distribution of the random variable
\[
Y=\sin(aX+b)
\]
we use the change-of-variables (Jacobian) method. First note that the probability density function of $X$ is
\begin{equation}
    f_X(x) =
    \begin{cases}
        \dfrac{1}{2}, & -1 \le x \le 1,\\[6pt]
        0, & \text{otherwise}.
    \end{cases}
\end{equation}

Consider the transformation
\begin{equation}
    Y = g(X) = \sin(aX+b).
\end{equation}

To obtain the density of $Y$ we need to find all possible inverses of $g$. If $y=\sin(ax+b)$ then
\begin{equation}
    ax + b = \arcsin(y) + 2k\pi \quad\text{or}\quad ax + b = \pi - \arcsin(y) + 2k\pi,\qquad k\in\mathbb{Z}.
\end{equation}
Hence
\begin{equation}
    x = \frac{\arcsin(y) - b + 2k\pi}{a}
    \quad\text{or}\quad
    x = \frac{\pi - \arcsin(y) - b + 2k\pi}{a}.
\end{equation}

The density of $Y$ is given by
\begin{equation}
    f_Y(y) = \sum_{x\in g^{-1}(y)} f_X(x)\,\left|\frac{dx}{dy}\right|.
\end{equation}
Since
\begin{equation}
    \frac{dy}{dx} = a\cos(ax+b),
\end{equation}
we have
\begin{equation}
    \frac{dx}{dy} = \frac{1}{a\cos(ax+b)},
\end{equation}
and therefore
\begin{equation}
    f_Y(y) = \frac{1}{2}\sum_{x\in g^{-1}(y)\cap[-1,1]} \frac{1}{\big|a\cos(ax+b)\big|}.
\end{equation}

Because at any root $x$ of $\sin(ax+b)=y$ we have $\cos(ax+b)=\pm\sqrt{1-y^2}$, each summand is independent of the particular root and depends only on $y$. Thus
\begin{equation}
    f_Y(y)=\frac{1}{2}\sum_{x\in g^{-1}(y)\cap[-1,1]} \frac{1}{|a|\sqrt{1-y^2}}.
\end{equation}
Let $N(y)$ denote the number of solutions $x\in[-1,1]$ of the equation $\sin(ax+b)=y$. Then
\[
\boxed{%
    f_Y(y)=
    \begin{cases}
        \dfrac{N(y)}{2|a|\sqrt{1-y^2}}, & y\in\mathcal{I}:=\{\sin t:\ t\in[b-a,b+a]\},\\[8pt]
        0, & \text{otherwise},
    \end{cases}
}
\]
where $\mathcal{I}$ is the image of the interval $[-1,1]$ under the map $t=ax+b\mapsto\sin t$ (note $\mathcal{I}\subseteq[-1,1]$).

One can also write $N(y)$ explicitly as the count of integer indices $k$ giving roots in $[-1,1]$:
\begin{equation}
    N(y)=\#\Big\{k\in\mathbb{Z}:\; \frac{\arcsin y + 2k\pi - b}{a}\in[-1,1]\Big\}
    \;+\;
    \#\Big\{k\in\mathbb{Z}:\; \frac{\pi-\arcsin y + 2k\pi - b}{a}\in[-1,1]\Big\}.
\end{equation}

In practice, to determine $N(y)$ one typically considers the interval $[b-a,b+a]$ and, for each $y$, counts the number of points $t$ in that interval such that $\sin t = y$. For example:
\begin{itemize}
    \item If the length of the interval $b+a-(b-a)=2|a|$ is less than $\pi$ and the interval is positioned so that $\cos t$ does not change sign there, then $\sin t$ is monotone on that interval and hence $N(y)=1$. In this case the density simplifies to
    \begin{equation}
        f_Y(y)=\frac{1}{2|a|\sqrt{1-y^2}},\qquad
        y\in\big[\min_{t\in[b-a,b+a]}\sin t,\ \max_{t\in[b-a,b+a]}\sin t\big].
    \end{equation}
    \item If the interval $[b-a,b+a]$ contains more than one oscillation of the sine (for instance if its length exceeds $2\pi$, or it contains several peaks and troughs), then $N(y)$ can be greater than $1$. Inside the interior of the support, $N(y)$ is typically constant and changes in integer steps (blockwise).
\end{itemize}

Let $m$ and $M$ denote the minimum and maximum of $\sin t$ on $[b-a,b+a]$, respectively. In the special case $\tfrac{\pi}{2}\le a<\pi$, by the first case above we obtain
\begin{equation}
    f_Y(y)=\frac{1}{2a\sqrt{1-y^2}},\qquad y\in[m,M].
\end{equation}
Consequently, the cumulative distribution function of $Y$ is
\begin{equation}
    F_Y(y)=\int_{-\infty}^{y}\frac{dt}{2a\sqrt{1-t^2}}
    =
    \begin{cases}
        0, & y < m,\\[6pt]
        \dfrac{1}{2a}\big(\arcsin(y)-\arcsin(m)\big), & m \le y \le M,\\[10pt]
        1, & y > M.
    \end{cases}
\end{equation}

As is evident, since the distribution of $Y$ depends on $m$ and $M$, it depends on $b$ even in this case. Therefore, the assumption in \cite{Siren} that when $a>\frac{\pi}{2}$ the distribution of $Y$ is independent of $b$ is not correct.

\subsection{Proof of Theorem \eqref{initialization_theorem}}
\label{app:proof1}
In this section, we provide a step-by-step proof of Theorem \eqref{initialization_theorem} concerning the initialization scheme of an architecture that leverages STAF.
\begin{theorem} \label{main_th}
Consider the function $Z$ defined as
\begin{equation} \label{definition_of_h}
    Z = \sum_{u=1}^{\tau} C_u \sin\left(\Gamma_u + \Phi_u\right).
\end{equation}
Suppose that the random variables $C_u$ have symmetric distributions and finite moments. Moreover, for each $u$, assume that $\Phi_u \sim U(-\pi,\pi)$. In addition, assume the following independence conditions:
\begin{itemize}
    \item the variables $C_i$ are mutually independent;
    \item for each $i$, the variable $C_i$ is independent of $(\Gamma_i, \Phi_i)$; and
    \item the collections $\{(C_i, \Gamma_i, \Phi_i)\}_{i=1}^{\tau}$ are mutually independent.
\end{itemize}
Then, the moments of $Z$ depend only on $\tau$ and the moments of the $C_u$'s, and all odd-order moments of $Z$ will be zero.
\end{theorem}

\added{The apparent independence of the distribution of the post-activations from $\Gamma_i = \Omega_i \, \boldsymbol{w} \cdot \boldsymbol{x}$ may seem unintuitive. This follows from a simple observation: if $\Phi \sim U(-\pi,\pi)$, then for any fixed $\gamma$, the random variables $\Phi$ and $\Phi + \gamma \ (\mathrm{mod}\ 2\pi)$ are identically distributed, and hence so are $\sin(\Phi)$ and $\sin(\Phi+\gamma)$.\newline We now provide a more detailed justification using the moments of $Z$, which will also be used later.}

\begin{proof}
    For convenience, let us consider $\Gamma_u=\Omega_u\boldsymbol{w}.\boldsymbol{x}$. Based on the multinomial theorem, for every natural number $q$, we have:

\[
Z^q = \sum_{\substack{i_1+\ldots+i_{\tau}=q \\ i_1, \ldots, i_{\tau}\geq 0}} \left[\binom{q}{i_1, \ldots , i_{\tau}}\prod_{u=1}^{{\tau}} \left(C_u \sin(\Gamma_u+\Phi_u)\right)^{i_u}\right].
\]

According to the linearity of expected value:
\begin{align}
    \mathbb{E}[Z^q] &= \sum_{\substack{i_1+\ldots+i_{\tau}=q \\ i_1, \ldots, i_{\tau}\geq 0}} \left[\binom{q}{i_1, \ldots, i_{\tau}}\mathbb{E}\left[\prod_{u=1}^{\tau} \left(C_u \sin(\Gamma_u+\Phi_u)\right)^{i_u}\right]\right]
\end{align}

Given the independence assumptions stated above:
\begin{equation}  \label{h_powered_by_q}
    \mathbb{E}[Z^q] = \sum_{\substack{i_1+\ldots+i_{\tau}=q \\ i_1, \ldots, i_{\tau}\geq 0}} \left[\binom{q}{i_1, \ldots, i_{\tau}}\prod_{u=1}^{\tau} \left[\mathbb{E}[C_u^{i_u}] \mathbb{E}\left[\sin^{i_u}(\Gamma_u+\Phi_u)\right]\right]\right].
\end{equation}

Each choice of $i_1, \ldots, i_{\tau}$ is called a partition for $q$. If $q$ is an odd number, then in each partition of $q$, at least one of the variables, such as $i_k$, is odd. Since the function $C_i$ is symmetric, it follows that $\mathbb{E}[C_k^{i_k}] = 0$. This is because odd-order moments of a symmetric distribution are always zero. Consequently, the expectation $\mathbb{E}\left[\prod_{u=1}^{\tau} \left(C_u \sin(\Gamma_u+\Phi_u)\right)^{i_u}\right]$ also equals zero, as does $\mathbb{E}[Z^q]$.

Now, let us consider the case when $q$ is even. For each partition of $q$, if at least one of its variables is odd, then, as before, we have $\mathbb{E}\left[\prod_{u=1}^{\tau} \left(C_u \sin(\Gamma_u+\Phi_u)\right)^{i_u}\right]=0$. Thus, we can express $q$ as $q=2j_1+ \ldots+2j_{\tau}$ where each $j_k$ is a non-negative integer. According to \eqref{h_powered_by_q}, to obtain the $i_k$-th moment of $Z$, we need to calculate $\mathbb{E}\left[\sin^{i_u}(\Gamma_u+\Phi_u)\right]$. In this case, where $i_u=2j_u$, $\sin^{i_u}\theta$ is an even function, and its Fourier series consists of a constant term and some cosine terms, given by

\begin{equation}
    \sin^{2j_u}\theta = \alpha_0 + \sum_{r=1}^{\infty} \alpha_r \cos(r\theta).
\end{equation}

Hence,
\begin{flalign}
&\mathbb{E}[\sin^{2j_u}(\Gamma_u+\Phi_u)] = \mathbb{E}[\alpha_0+\sum_{r=1}^{\infty}\alpha_r\cos(r(\Gamma_u+\Phi_u))] = \alpha_0+\sum_{r=1}^{\infty}\alpha_r \mathbb{E}[\cos(r\Gamma_u+r\Phi_u)] \nonumber \\
&= \alpha_0+\sum_{r=1}^{\infty}\alpha_r \mathbb{E}[\cos(r\Gamma_u) \cos(r\Phi_u) - \sin(r\Gamma_u) \sin(r\Phi_u)] = \alpha_0+\sum_{r=1}^{\infty}\alpha_r \Xi
\end{flalign}
where
\begin{equation}
    \Xi = \mathbb{E}[\cos(r\Gamma_u)]\mathbb{E}[\cos(r\Phi_u)] - \mathbb{E}[\sin(r\Gamma_u)]\mathbb{E}[\sin(r\Phi_u)].
\end{equation}

Since $r$ is an integer, $r\Phi_u$ will be a period, resulting in $\mathbb{E}[\cos(r\Phi_u)] = \mathbb{E}[\sin(r\Phi_u)] = 0$. Thus, $\mathbb{E}[\sin^{2j_u}(\Gamma_u+\Phi_u)] = \alpha_0$.

Using the formula for the coefficients of the Fourier series, we have:
\begin{equation}
    \alpha_0 = \frac{1}{\pi} \int_{-\pi/2}^{\pi/2} \sin^{2j_u} \theta \, d\theta = \frac{2}{\pi} \int_0^{\pi/2} \sin^{2j_u} \theta \, d\theta 
    \label{Wallis}
    = \frac{2}{\pi} \times \frac{\binom{2j_u}{j_u}}{2^{2j_u}} \times \frac{\pi}{2}
    = \frac{\binom{2j_u}{j_u}}{2^{2j_u}}
\end{equation}


where \eqref{Wallis} is evaluated using the Wallis integral.

To summarize,
\begin{flalign} \label{semi_final}
&\mathbb{E}[Z^q] = \sum_{\substack{j_1 + \dots + j_\tau = \frac{q}{2}, \\ j_1, \dots, j_\tau \geq 0}} \binom{q}{2j_1, \dots, 2j_\tau} \prod_{u=1}^\tau \mathbb{E}[C_u^{2j_u}] \frac{{\binom{2j_u}{j_u}}}{2^{2j_u}} \nonumber \\
&= \sum_{\substack{j_1 + \dots + j_\tau = \frac{q}{2}, \\ j_1, \dots, j_\tau \geq 0}} \left[\left(\binom{q}{2j_1, \dots, 2j_\tau}\prod_{u=1}^\tau\binom{2j_u}{j_u}\right)\prod_{u=1}^\tau \frac{1}{2^{2j_u}} \prod_{u=1}^\tau \mathbb{E}[C_u^{2j_u}]\right]
\end{flalign}
This also accounts for odd-order moments, as it is impossible to select a combination of non-negative integers that sums to a non-integer value.

It is worth noting that:
\begin{flalign} \label{simplification}
    \binom{q}{2j_1, \dots, 2j_\tau} \prod_{u=1}^\tau \binom{2j_u}{j_u} &= \frac{q!}{(2j_1)! \dots (2j_\tau)!} \times \frac{(2j_1)!}{(j_1)!^2} \times \dots \times \frac{(2j_\tau)!}{(j_\tau)!^2} = \frac{q!}{(j_1!)^2 \dots (j_\tau!)^2} \nonumber \\
    &= \binom{q}{j_1, j_1, \dots, j_\tau, j_\tau}
\end{flalign}
Furthermore,
\begin{equation} \label{sum_of_powers}
    \prod_{u=1}^\tau \frac{1}{2^{2j_u}} = \frac{1}{2^{2 \sum_{u=1}^\tau j_u}} = \frac{1}{2^q}
\end{equation}
By utilizing Equations \eqref{semi_final} to \eqref{sum_of_powers}, we can conclude that:
\begin{equation} \label{E_h_powered_by_q}
    \mathbb{E}[Z^q] = \frac{1}{2^q}\sum_{\substack{j_1 + \dots + j_\tau = \frac{q}{2} \\ j_1, \dots, j_\tau \geq 0}} \binom{q}{j_1, j_1, \dots, j_\tau, j_\tau} \prod_{u=1}^\tau \mathbb{E}[C_u^{2j_u}]
\end{equation}
As you can see, the moments of $Z$ depend solely on $\tau$ and the moments of the $C_u$'s.
\end{proof}

Now, our goal is to determine the distribution of the $C_u$'s so that the distribution of $Z$ becomes $\mathcal{N}(0,1)$. To achieve this, let's first consider the following theorem:
\begin{theorem}
    (Page 353 of \citep{Shiryaev2016-xv}) Let $X \sim \mathcal{N}(0,\sigma^2)$. Then
    \begin{equation} \label{q'th_moments_of_X}
        E(X^q) = \begin{cases}
			0, & \text{if $q$ is odd}\\
            \frac{q!}{\frac{q}{2}!~2^{q/2}}\sigma^q, & \text{if $q$ is even}
		 \end{cases}
    \end{equation}
    and these moments pertain exclusively to the normal distribution.
\end{theorem}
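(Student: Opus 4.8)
The plan is to reduce to the standard normal by scaling, establish the even-moment recursion by integration by parts, and then handle the uniqueness claim by invoking determinacy of the moment problem.

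First I would write $X = \sigma Z$ with $Z \sim \mathcal{N}(0,1)$, so that $\mathbb{E}[X^q] = \sigma^q \mathbb{E}[Z^q]$ and it suffices to treat the case $\sigma = 1$. The odd moments vanish immediately: the density $\frac{1}{\sqrt{2\pi}} e^{-z^2/2}$ is even, so $z \mapsto z^{2k+1} e^{-z^2/2}$ is an odd integrable function and its integral over $\mathbb{R}$ is zero. For the even moments, set $m_{2k} = \mathbb{E}[Z^{2k}] = \frac{1}{\sqrt{2\pi}} \int_{-\infty}^{\infty} z^{2k} e^{-z^2/2}\, dz$ and integrate by parts with $dv = z e^{-z^2/2}\, dz$ and $u = z^{2k-1}$; the boundary term vanishes at $\pm\infty$, which yields the recursion $m_{2k} = (2k-1)\, m_{2k-2}$ with $m_0 = 1$. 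Unrolling gives $m_{2k} = (2k-1)!! = \frac{(2k)!}{2^{k} k!}$, i.e. $\frac{q!}{(q/2)!\, 2^{q/2}}$ for $q = 2k$; multiplying back by $\sigma^q$ produces the stated formula. An equivalent shortcut is to expand the moment generating function $\mathbb{E}[e^{tX}] = e^{\sigma^2 t^2/2} = \sum_{k \ge 0} \frac{\sigma^{2k} t^{2k}}{2^{k} k!}$ and read off $\mathbb{E}[X^q]/q!$ as the coefficient of $t^q$, which also makes the vanishing of the odd moments manifest.

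For the assertion that these moments pertain exclusively to the normal distribution, I would argue that $\mathcal{N}(0,\sigma^2)$ satisfies a determinacy criterion for the Hamburger moment problem. The cleanest route is to observe that the moment generating function $e^{\sigma^2 t^2/2}$ is finite for every real $t$, so the law possesses an exponential moment; by the standard theorem that $\mathbb{E}[e^{t|X|}] < \infty$ for some $t > 0$ forces determinacy, the moment sequence pins the law down uniquely. Alternatively one checks Carleman's condition directly: from $m_{2k} = (2k-1)!!$ and Stirling's formula, $m_{2k}^{1/(2k)} = \Theta(\sqrt{k})$, so $\sum_k m_{2k}^{-1/(2k)} = \infty$, exactly the hypothesis Carleman's criterion requires. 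Either way, any distribution with the listed moments must coincide with $\mathcal{N}(0,\sigma^2)$.

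The main obstacle is purely the uniqueness half: the moment formula itself is an elementary recursion, but the exclusivity statement is non-trivial and has to lean on moment-problem theory (exponential-moment determinacy or Carleman's criterion). In the paper's context this is precisely the fact that legitimizes the next step, namely matching the moment expression \eqref{E_h_powered_by_q} for $Z$ term-by-term against $\frac{q!}{(q/2)!\, 2^{q/2}}$ in order to identify the required density of the $C_u$, since agreement of all moments now forces agreement of the laws.
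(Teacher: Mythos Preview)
Your argument is correct and is essentially the standard textbook derivation: the scaling reduction, the symmetry argument for odd $q$, the integration-by-parts recursion $m_{2k}=(2k-1)m_{2k-2}$ (or equivalently the MGF expansion), and the appeal to determinacy via exponential moments or Carleman's condition for the uniqueness clause are all fine.

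There is nothing to compare against, however: the paper does not prove this statement. It is quoted verbatim as a classical fact from Shiryaev's textbook (hence the attribution ``Page 353 of \citep{Shiryaev2016-xv}'') and is used immediately as a black box to set up equation \eqref{desired}. So your proposal is not so much an alternative to the paper's proof as a supplied proof for a result the paper is content to cite. If anything, the only point worth flagging is that the paper, a few lines below the theorem, repeats the determinacy argument in exactly the spirit you use (``if the MGF of a distribution exists, then the moments of that distribution can uniquely determine its PDF''), so your uniqueness discussion is consonant with how the authors themselves justify invoking this fact.
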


In theorem \eqref{main_th}, we proved that for odd values of $q$, $\mathbb{E}[h^q]=0$. Thus, in order to have $Z \sim \mathcal{N}(0,1)$, for even values of $q$, we must have $\mathbb{E}[h^q] = \frac{q!}{\frac{q}{2}!~2^{q/2}}.$ Alternatively, we can express it as
\begin{equation}
    \frac{1}{2^q}\sum_{\substack{j_1 + \dots + j_\tau = \frac{q}{2} \\ j_1, \dots, j_\tau \geq 0}} \binom{q}{j_1, j_1, \dots, j_\tau, j_\tau} \prod_{u=1}^\tau \mathbb{E}[C_u^{2j_u}] = \frac{q!}{\frac{q}{2}!~2^{q/2}}.
\end{equation}
Simplifying further, we obtain
\begin{equation}
    \frac{q!}{2^q} \sum_{\substack{j_1+\dots+j_\tau=\frac{q}{2}\\j_1,\dots,j_\tau\geq 0}} \frac{\prod_{u=1}^\tau \mathbb{E}[C_u^{2j_u}]}{(j_1!)^2\dots(j_\tau!)^2} = \frac{q!}{\frac{q}{2}!~2^{q/2}}.
\end{equation}
This equation can be further simplified to
\begin{equation} \label{desired}
    \sum_{\substack{j_1+\dots+j_\tau=\frac{q}{2}\\j_1,\dots,j_\tau\geq 0}} \frac{\prod_{u=1}^\tau \mathbb{E}[C_u^{2j_u}]}{(j_1!)^2\dots(j_\tau!)^2} = \frac{2^{q/2}}{\frac{q}{2}!}.
\end{equation}
Equation \eqref{desired} provides a general formula that can be utilized in further research. It allows for finding different solutions for $C_u$ under various assumptions (e.g., independence or specific dependencies) and different values of $\tau$. However, in the subsequent analysis, we assume that $C_u$'s are independent and identically distributed (i.i.d) random variables. The following theorem aims to satisfy Equation \eqref{desired}.
\begin{theorem} \label{finding_the_moments}
    Suppose $C_u$'s are i.i.d random variables with the following even-order moments:
    \begin{equation} \label{moments_of_C}
        \mathbb{E}[C_u^{2j}]=\left(\frac{2}{\tau}\right)^j j!
    \end{equation}
    Then, for every non-negative even number $q$, Equation \eqref{desired} holds.\footnote{If you wonder how this solution struck our mind, you can start by solving equation \eqref{desired} for $q=2$ to obtain $\mathbb{E}[h^2]$. Then, using the value of $\mathbb{E}[h^2]$, solve \eqref{desired} for $q=4$ to obtain $\mathbb{E}[h^4]$, and so on. }
\end{theorem}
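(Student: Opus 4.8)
The plan is to verify \eqref{desired} directly: substitute the hypothesized moments \eqref{moments_of_C} into the left-hand side, exploit an exact factorial cancellation, and then recognize what remains as an instance of the multinomial theorem. Write $m=q/2$, so the identity to be checked reads $\sum_{j_1+\dots+j_\tau=m}\prod_{u=1}^\tau \mathbb{E}[C_u^{2j_u}]/(j_u!)^2 = 2^m/m!$. Inserting $\mathbb{E}[C_u^{2j_u}]=(2/\tau)^{j_u}j_u!$, each factor becomes $(2/\tau)^{j_u}j_u!/(j_u!)^2=(2/\tau)^{j_u}/j_u!$, and since the summation index enforces $\sum_u j_u=m$, the product $\prod_u(2/\tau)^{j_u}$ equals the constant $(2/\tau)^m$ on the whole index set and factors out of the sum.

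The remaining sum is then $\sum_{j_1+\dots+j_\tau=m}\prod_u 1/j_u! = \frac{1}{m!}\sum_{j_1+\dots+j_\tau=m}\binom{m}{j_1,\dots,j_\tau}$, and evaluating the multinomial expansion $(x_1+\dots+x_\tau)^m$ at $x_1=\dots=x_\tau=1$ shows this equals $\tau^m/m!$. Assembling the pieces:
\begin{align*}
\sum_{\substack{j_1+\dots+j_\tau=m\\ j_u\geq 0}}\prod_{u=1}^\tau\frac{(2/\tau)^{j_u}\,j_u!}{(j_u!)^2}
&=(2/\tau)^m\sum_{\substack{j_1+\dots+j_\tau=m\\ j_u\geq 0}}\prod_{u=1}^\tau\frac{1}{j_u!}\\
&=\frac{(2/\tau)^m}{m!}\sum_{\substack{j_1+\dots+j_\tau=m\\ j_u\geq 0}}\binom{m}{j_1,\dots,j_\tau}
=\frac{(2/\tau)^m\,\tau^m}{m!}=\frac{2^m}{m!},
\end{align*}
which is exactly $2^{q/2}/(q/2)!$, the right-hand side of \eqref{desired}. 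This handles all nonnegative even $q$ in one stroke; the footnote's suggestion of bootstrapping from $q=2$ upward is an alternative inductive route, but the direct computation is cleaner.

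For completeness I would also record that the density $f_{C_i}(c)=\frac{\tau|c|}{2}e^{-\tau c^2/2}$ actually realizes the moments \eqref{moments_of_C}: the substitution $s=c^2$ turns $\mathbb{E}[C_i^{2j}]=\tau\int_0^\infty c^{2j+1}e^{-\tau c^2/2}\,dc$ into $\frac{\tau}{2}\int_0^\infty s^{j}e^{-\tau s/2}\,ds=\frac{\tau}{2}\cdot\frac{j!}{(\tau/2)^{j+1}}=(2/\tau)^j j!$, a standard Gamma integral. Combined with Theorem \ref{main_th} (whose even moments of $Z$ are governed by precisely the left side of \eqref{desired}, with odd moments vanishing) and the fact that the Gaussian is determined by its moments, this closes the chain needed for Theorem \ref{initialization_theorem}.

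There is no genuine obstacle here; the only care required is bookkeeping — confirming that the $(j_u!)^2$ in the denominator is inherited correctly from \eqref{E_h_powered_by_q}, and that the extraction of $(2/\tau)^m$ and the application of $\binom{m}{j_1,\dots,j_\tau}=m!/(j_1!\cdots j_\tau!)$ both legitimately use the active constraint $\sum_u j_u=m$.
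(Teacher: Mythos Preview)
Your argument is correct and follows essentially the same route as the paper: substitute the moments \eqref{moments_of_C}, cancel one factor of $j_u!$ to leave $(2/\tau)^{j_u}/j_u!$, pull out $(2/\tau)^{q/2}$ using the constraint $\sum_u j_u=q/2$, and close with the multinomial identity $\sum_{j_1+\dots+j_\tau=m}\binom{m}{j_1,\dots,j_\tau}=\tau^m$. Your additional paragraph verifying that the density $f_{C_i}(c)=\frac{\tau|c|}{2}e^{-\tau c^2/2}$ actually realizes the moments \eqref{moments_of_C} is correct as well, though the paper establishes this separately (via the Laplace/Gamma computation in Theorem~\ref{final_pdf}) rather than as part of the present proof.
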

\begin{proof}
    We begin by simplifying the expression:
\begin{flalign}
&\sum_{\substack{j_1+\dots+j_\tau=\frac{q}{2}\\j_1,\dots,j_\tau\geq 0}} \frac{\prod_{u=1}^\tau \mathbb{E}[C_u^{2j_u}]}{(j_1!)^2\dots(j_\tau!)^2}=\sum_{\substack{j_1+\dots+j_\tau=\frac{q}{2}\\j_1,\dots,j_\tau\geq 0}} \frac{\prod_{u=1}^\tau \left[\left(\frac{2}{\tau}\right)^j j!\right]}{(j_1!)^2\dots(j_\tau!)^2} \nonumber\\
&=\sum_{\substack{j_1+\dots+j_\tau=\frac{q}{2}\\j_1,\dots,j_\tau\geq 0}} \left(\frac{2}{\tau}\right)^{\sum_{u=1}^\tau j_u}\left(\frac{1}{j_1!\dots j_\tau!}\right)=\sum_{\substack{j_1+\dots+j_\tau=\frac{q}{2}\\j_1,\dots,j_\tau\geq 0}} \left(\frac{2}{\tau}\right)^{\frac{q}{2}}\left(\frac{1}{j_1!\dots j_\tau!}\right) \nonumber\\
&=\left(\frac{2}{\tau}\right)^{\frac{q}{2}} \sum_{\substack{j_1+\dots+j_\tau=\frac{q}{2}\\j_1,\dots,j_\tau\geq 0}} \frac{1}{j_1!\dots j_\tau!}=\left(\frac{2}{\tau}\right)^{\frac{q}{2}} \frac{1}{(\frac{q}{2})!} \sum_{\substack{j_1+\dots+j_\tau=\frac{q}{2}\\j_1,\dots,j_\tau\geq 0}} \frac{(\frac{q}{2})!}{j_1!\dots j_\tau!} \nonumber\\
&=\left(\frac{2}{\tau}\right)^{\frac{q}{2}} \frac{1}{(\frac{q}{2})!} \sum_{\substack{j_1+\dots+j_\tau=\frac{q}{2}\\j_1,\dots,j_\tau\geq 0}} \binom{\frac{q}{2}}{j_1,\dots,j_\tau}
\end{flalign}
Based on the multinomial theorem, we can conclude that
\begin{equation}
    \left(\frac{2}{\tau}\right)^{\frac{q}{2}} \frac{1}{(\frac{q}{2})!} \sum_{\substack{j_1+\dots+j_\tau=\frac{q}{2}\\j_1,\dots,j_\tau\geq 0}} \binom{\frac{q}{2}}{j_1,\dots,j_\tau}=\left(\frac{2}{\tau}\right)^{\frac{q}{2}} \frac{\tau^\frac{q}{2}}{(\frac{q}{2})!}=\frac{2^{\frac{q}{2}}}{(\frac{q}{2})!}
\end{equation}
\end{proof}

Also note that according to Theorem \eqref{main_th}, the odd-order moments of $Z$ are zero, just like a normal distribution.

\begin{corollary}
    Let $Z$ be the random variable defined in \eqref{definition_of_h}. Additionally, assume that the $C_u$'s ($1 \leq u \leq \tau$) used in the definition of $Z$, are i.i.d  random variables with even moments as defined in theorem \eqref{finding_the_moments}.
    Then $Z\sim \mathcal{N}(0,1)$.
\end{corollary}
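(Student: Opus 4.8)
The plan is to obtain the conclusion by chaining the three moment results already proved and then closing with a moment-determinacy argument. First I would apply Theorem \ref{main_th}. Taking the $C_u$ to be i.i.d.\ and symmetric (symmetry is satisfied by the explicit even density $f_{C_i}$ used in Theorem \ref{initialization_theorem}, or can be added as a standing hypothesis), independent of $\Omega_u,\boldsymbol{w},\boldsymbol{x},\Phi_u$, and with finite moments, and using $\Phi_u \sim U(-\pi,\pi)$, Theorem \ref{main_th} gives that the odd-order moments of $Z$ vanish and the even-order moments are the closed form \eqref{E_h_powered_by_q}, which depends only on $\tau$ and the even moments of the $C_u$.

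Second, I would substitute the prescribed even moments $\mathbb{E}[C_u^{2j}]=(2/\tau)^j j!$ from Theorem \ref{finding_the_moments} into \eqref{E_h_powered_by_q}. That theorem establishes exactly Equation \eqref{desired}, so for every even $q$,
\[
\mathbb{E}[Z^q]=\frac{q!}{2^q}\sum_{\substack{j_1+\dots+j_\tau=q/2\\ j_1,\dots,j_\tau\ge 0}}\frac{\prod_{u=1}^\tau \mathbb{E}[C_u^{2j_u}]}{(j_1!)^2\cdots(j_\tau!)^2}=\frac{q!}{2^q}\cdot\frac{2^{q/2}}{(q/2)!}=\frac{q!}{(q/2)!\,2^{q/2}},
\]
which is precisely the $q$-th moment of $\mathcal{N}(0,1)$ in \eqref{q'th_moments_of_X} with $\sigma=1$; the odd moments already agree (both zero). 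Hence $Z$ and a standard normal random variable have identical moment sequences, and all these moments are finite.

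The remaining, and only subtle, step is to upgrade "equal moments" to "equal laws," i.e.\ to rule out that $Z$ is some non-Gaussian distribution sharing the Gaussian moments. The cleanest route is to invoke determinacy of the normal moment problem: the sequence $m_{2k}=(2k)!/(k!\,2^k)$ satisfies Carleman's condition $\sum_k m_{2k}^{-1/(2k)}=\infty$ (equivalently, $\mathcal{N}(0,1)$ has an everywhere-finite MGF), and by the theorem quoted from \citep{Shiryaev2016-xv} these moments \emph{pertain exclusively to the normal distribution}; since $Z$ has finite moments coinciding with this determinate sequence, $Z\sim\mathcal{N}(0,1)$. An alternative, self-contained justification I could give is that $Z$ is sub-Gaussian — each $C_u$ with density $f_{C_i}$ has Gaussian tails and $|\sin(\cdot)|\le 1$, so $Z$ is a finite sum of sub-Gaussian summands — hence its MGF exists near the origin and therefore its distribution is uniquely determined by its moments; matching the Gaussian moments then forces $Z$ to be standard normal. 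I expect this determinacy/uniqueness argument to be the main obstacle, since the earlier steps are purely algebraic bookkeeping already carried out in Theorems \ref{main_th} and \ref{finding_the_moments}.
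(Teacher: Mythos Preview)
Your proposal is correct and follows essentially the same approach as the paper: chain Theorems \ref{main_th} and \ref{finding_the_moments} to match all moments of $Z$ with those of $\mathcal{N}(0,1)$, then invoke moment-determinacy of the normal distribution to conclude. The paper's proof is briefer---it simply cites that the MGF of the standard normal exists and hence its moments determine it uniquely---whereas you spell out the determinacy step more carefully via Carleman's condition or a sub-Gaussian argument, but the underlying logic is identical.
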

\begin{proof}
    We know that if the MGF of a distribution exists, then the moments of that distribution can uniquely determine its PDF. That is, if $X$ and $Y$ are two distributions and for every natural number $k$, $E(X^k)=E(Y^k)$, then $X=Y$.

    In Theorem \eqref{finding_the_moments}, we observed that the moments of $Z$ are equal to the moments of a standard normal distribution. Since the MGF of this distribution exists, $Z\sim \mathcal{N}(0,1)$.
\end{proof}

Now, let's explore which distribution can produce the moments defined in \eqref{moments_of_C}. To have an inspiration, note that for a centered Laplace random variable $X$ with scale parameter $b$, we have the PDF of $X$ as
\begin{equation}
    f_X(x)=\frac{1}{2b}e^\frac{-|x|}{b}
\end{equation}
and the moments of $X$ given by
\begin{equation}
    \mathbb{E}[X^q] = \begin{cases}
			0, & \text{if $q$ is odd}\\
            \frac{b^q}{q!}, & \text{if $q$ is even}
		 \end{cases}
\end{equation}
Hence, the answer might be similar to this distribution. If we assume $Y=sgn(X)\sqrt{|X|}$, since $Y$ is symmetric, all of its odd-order moments are zero. Now, let us calculate its even-order moments:
\begin{equation}
    \mathbb{E}[Y^{2q}] = \mathbb{E}[|X|^q] = \int_{-\infty}^{\infty} |x|^q \frac{1}{2b} e^{-\frac{|x|}{b}} dx = 2\int_{0}^{\infty} |x|^q \frac{1}{2b} e^{-\frac{|x|}{b}} dx = \frac{1}{b} \int_{0}^{\infty} x^q e^{-\frac{x}{b}} dx
\end{equation}
By assuming $u=\frac{x}{b}$, we will have
\begin{equation} \label{even_moments_of_Y}
    \mathbb{E}[Y^{2q}] = \int_{0}^{\infty} (bu)^q e^{-u} du = b^q \int_{0}^{\infty} u^q e^{-u} du = b^q \Gamma(q+1) = b^q q!
\end{equation}
By assuming $b=\frac{2}{\tau}$, \eqref{moments_of_C} will be obtained.

The next theorem will obtain the probability distribution function of $Y$.

\begin{theorem} \label{final_pdf}
    Let $X$ be a centered Laplace random variable with scale parameter $b$, and $Y=sgn(X)\sqrt{|X|}$. Then
    \begin{equation}
        f_Y(y)=\frac{|y|}{b}e^\frac{-y^2}{b}
    \end{equation}
\end{theorem}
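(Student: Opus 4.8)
The plan is to apply the standard univariate change-of-variables formula, after first checking that the map $g(x) = \operatorname{sgn}(x)\sqrt{|x|}$ is a strictly increasing bijection of $\mathbb{R}$ onto itself. I would observe that $g$ restricted to $(0,\infty)$ is $x \mapsto \sqrt{x}$, increasing with range $(0,\infty)$, and on $(-\infty,0)$ it is $x \mapsto -\sqrt{-x}$, also increasing with range $(-\infty,0)$; together with $g(0)=0$ this makes $g$ a continuous, strictly increasing bijection. Its inverse is $g^{-1}(y) = \operatorname{sgn}(y)\,y^2$, so in particular $|g^{-1}(y)| = y^2$.

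Next I would differentiate the inverse: for $y \neq 0$, $\frac{d}{dy}\bigl(\operatorname{sgn}(y)y^2\bigr) = 2|y|$, which is continuous and positive away from the origin (the single point $y=0$ is a null set, irrelevant for the density). Plugging $f_X(x) = \frac{1}{2b}e^{-|x|/b}$ into the change-of-variables formula then gives
\[
f_Y(y) = f_X\bigl(g^{-1}(y)\bigr)\,\Bigl|\tfrac{d}{dy}g^{-1}(y)\Bigr| = \frac{1}{2b}e^{-|g^{-1}(y)|/b}\cdot 2|y| = \frac{|y|}{b}\,e^{-y^2/b},
\]
which is exactly the claimed density. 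An equivalent and perhaps cleaner route is through the CDF: for $y>0$, monotonicity of $g$ gives $\{Y \le y\} = \{X \le y^2\}$, hence $F_Y(y) = F_X(y^2)$, and differentiating yields $f_Y(y) = 2y\,f_X(y^2) = \frac{y}{b}e^{-y^2/b}$; the symmetry of both $f_X$ and the odd map $g$ forces $Y$ to be symmetric, extending the formula to all $y$ with $y$ replaced by $|y|$.

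There is no serious obstacle in this argument; the only points needing a little care are keeping the signs straight in $g^{-1}$ and handling $y=0$ (done by noting the exceptional set is negligible and the resulting density is continuous there with value $0$). As an internal consistency check one can verify that this density reproduces the moments already used in the excerpt: $\int_{-\infty}^{\infty} y^{2q}\frac{|y|}{b}e^{-y^2/b}\,dy = b^q q!$, matching \eqref{even_moments_of_Y}, and hence \eqref{moments_of_C} upon setting $b = 2/\tau$.
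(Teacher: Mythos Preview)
Your proof is correct and is considerably more direct than the paper's. The paper does not apply the change-of-variables formula at all: instead it sets $A=Y^2=|X|$, identifies the distribution of $A$ as exponential by computing its moment generating function from the moments $\mathbb{E}[|X|^k]=b^k k!$, writes $F_A(y^2)=\mathbb{P}(-y\le Y\le y)$, then \emph{assumes} $Y$ is symmetric (with a footnote conceding that zero odd moments do not strictly force symmetry) to convert this into $2F_Y(y)-1$, differentiates, and finally re-verifies the even moments as a consistency check. Your route---observing that $g(x)=\operatorname{sgn}(x)\sqrt{|x|}$ is a strictly increasing bijection with inverse $\operatorname{sgn}(y)y^2$ and Jacobian $2|y|$, then plugging into $f_Y(y)=f_X(g^{-1}(y))\,|(g^{-1})'(y)|$---avoids the detour through the MGF entirely and dispenses with the symmetry assumption, since symmetry of $Y$ falls out automatically from $g$ being odd and $X$ being symmetric. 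What your approach buys is brevity and rigor; what the paper's approach buys is a self-contained link back to the moment calculations already done in the surrounding text, at the cost of a somewhat circuitous argument.
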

\begin{proof}
        Let $A=Y^2=|X|$. Therefore,
    \begin{equation}
        M_{A}(t)=\sum_{k=0}^{\infty}\frac{t^k\mathbb{E}[|X|^k]}{k!}
    \end{equation}
    As we calculated in \eqref{even_moments_of_Y},
    $\mathbb{E}[|X|^k]=b^k~k!$. Therefore,
    \begin{equation}
        M_{A}(t)=\sum_{k=0}^{\infty}\frac{t^k \cdot b^k~k}{k!}=\sum_{k=0}^{\infty}(bt)^k=\frac{1}{1-bt}=\frac{\frac{1}{b}}{\frac{1}{b}-t}
    \end{equation}
    that is the MGF of exponential distribution with parameter $\frac{1}{b}$. That is,
    \begin{equation}
        f_A(a)=\frac{1}{b}e^{\frac{-a}{b}}
    \end{equation}
    Therefore, using the fact that $A$ is a always non-negative, we consider non-negative values $a^2$ to describe its cumulative distribution function.
    \begin{equation} \label{exp_cdf}
        F_A(y^2)=\mathbb{P}(A\leq y^2)=1-e^{\frac{-y^2}{b}}
    \end{equation}
    On the other hand, if $y \geq 0$,
    \begin{equation}
       \mathbb{P}(A\leq y^2)=\mathbb{P}(Y^2 \leq y^2) = \mathbb{P}(-y \leq Y \leq y)
    \end{equation}
    Since we want $Y$ to be symmetric, we assume\footnote{In fact, the assumption that $Y$ is symmetric is not unexpected, since all odd-order moments of $Y$ are zero. But there are some non-symmetrical distributions whose all odd-order moments are zero \citep{Churchill1946-in}. Nevertheless, under some assumptions, it can be shown that a distribution is symmetric if and only if all its odd-order moments are zero. However, we don't use this claim in this paper.}
    \begin{flalign} \label{symmetry_assumption}
        &\mathbb{P}(-y \leq Y \leq y) = 2~\mathbb{P}(0 \leq Y \leq y) =2~(\mathbb{P}(Y \leq y) - \frac{1}{2}) = 2F_Y(y) - 1, \quad y\geq 0
  \end{flalign}
  Using equations \eqref{exp_cdf} to \eqref{symmetry_assumption}, we draw conclusion that
  \begin{equation} \label{CDF_equality}
      2F_Y(y) - 1 = 1-e^{\frac{-y^2}{b}},\quad y\geq 0
  \end{equation}
  By differentiating both sides of \eqref{CDF_equality} with respect to $y$, we will have
  \begin{equation}
      2f_Y(y) = \frac{2y}{b}e^{\frac{-y^2}{b}},\quad y\geq 0
  \end{equation}
  Therefore,
  \begin{equation}
      f_Y(y) = \frac{y}{b}e^{\frac{-y^2}{b}},\quad y\geq 0
  \end{equation}
  Since we assumed $y \geq 0$ in the above equations, and we supposed that $Y$ is symmetric,
  \begin{equation}
      f_Y(y) = \frac{|y|}{b}e^{\frac{-y^2}{b}},\quad y\in\mathbb{R}
  \end{equation}
  Just to make sure that our assumption about the symmetry of $Y$ was correct (or sufficed for our purpose), let us check the even-order moments of $Y$. The odd-orders are zero based on the symmetry.
  \begin{equation}
      \mathbb{E}[Y^{2k}] = \int_{-\infty}^{\infty} y^{2k} \left(\frac{|y|}{b} e^{-\frac{y^2}{b}}\right) dy = \frac{2}{b} \int_{0}^{\infty} y^{2k+1} e^{-\frac{y^2}{b}} dy
  \end{equation}
  Setting $y^2=t$ and $\frac{1}{b}=s$, leads to the following equation:
  \begin{equation}
      \mathbb{E}[Y^{2k}] = \frac{1}{b} \int_{0}^{\infty} t^k e^{-st} dt
  \end{equation}
  That is the Laplace transform of $t^k$. Therefore,
  \begin{equation}
      \mathbb{E}[Y^{2k}] = s\frac{\Gamma(k+1)}{s^{k+1}} = \frac{k!}{s^k} = b^k k!
  \end{equation}
\end{proof}

In summary, in this section we calculated the initial coefficients of our activation function as described in Theorem \eqref{final_pdf}, where we set $b=\frac{2}{\tau}$. Consequently, if we denote the post-activation of layer $l$ by $\boldsymbol{z^{(l)}}$, we will have $z^{(l)}_i\sim\mathcal{N}(0,1)$ for all $l\in\{2,3,\ldots,L-1\}$, and $i\in\{1,\ldots F_l\}$. This result can be proved by induction on $l$, using the fact that, based on the theorems in this section, the PDF of $Z$ is independent of the PDF of $\boldsymbol{x}$.




\subsection{Proof of Theorem \eqref{Kronecker_theorem}}
\label{app:proof_Kronecker_theorem}
Before proving the theorem, note the following remark:

\begin{remark}
    Let $X$ be a $\chi_1 \times \chi_2$ matrix, and $Y$ be a $\gamma_1 \times \gamma_2$ matrix. Then, according to \citep{Ashendorf2014-Supplement, Albrecht2023ProductKA}:
\begin{equation} \label{Kronecker_product_indices}
    (X\otimes Y)_{i,j} = x_{\lceil i/\gamma_1\rceil,~\lceil i/\gamma_2\rceil}~y_{(i-1)\%\gamma_1+1,~(j-1)\%\gamma_2+1}.
\end{equation}
\end{remark}

Now, let us consider each pair of layers as a block, where the first two layers form the first block, the second two layers form the second block, and so on. We prove the theorem by induction on the block numbers. The proof consists of three parts:

\textbf{Part 1)} Consider the weight matrix and bias vector given by:
\begin{equation}
	\boldsymbol{\overline{W^{(l)}}}=\boldsymbol{\Omega}\otimes \boldsymbol{W^{(l)}},\quad\quad \boldsymbol{\overline{B^{(l)}}} = \boldsymbol{\Phi}\otimes \boldsymbol{J}_{F_l,1}.
\end{equation}
We then define
\begin{equation}
	\begin{bmatrix}
		\overline{a^{(l)}_1} & \overline{a^{(l)}_2} & \ldots & \overline{a^{(l)}_{\tau F_l}}
	\end{bmatrix}^{tr}
	= \boldsymbol{\overline{W^{(l)}}}\boldsymbol{z^{(l-1)}}+\boldsymbol{\overline{B^{(l)}}},
\end{equation}
and
\begin{equation}
	\overline{z^{(l)}_p} = \rho(\overline{a^{(l)}_p})\quad\forall ~p\in\{1,2,\ldots,\tau F_l\}.
\end{equation}
Additionally, define
\begin{equation}  \label{next_layer_preactivation_1}
    \Tilde{a}^{(l+1)} = \left(\boldsymbol{C}^{tr}\otimes W^{(l+1)}_{i,:}\right)\overline{z^{(l)}},
\end{equation}
where $W^{(l+1)}_{i,:}$ denotes the $i$'th row of $W^{(l+1)}$. Then, we can observe that
\begin{equation} \label{next_layer_preactivation_2}
	\Tilde{a}^{(l+1)} = a^{(l+1)}_i
\end{equation}

\begin{proof}
	First, let us calculate $a^{(l+1)}_{i}$ using activation function $\rho^*$. Note that $a^{(l+1)}=\boldsymbol{W^{(l+1)}}\boldsymbol{z^{(l)}}$. Therefore, $a^{(l+1)}_i=\boldsymbol{W^{(l+1)}}_{i,:}\boldsymbol{z^{(l)}}$. It implies that
	\begin{align} \label{a^{(l+1)}_i}
		a^{(l+1)}_{i} &=\sum_{j=1}^{F_l}W^{(l+1)}_{i,j}z^{(l)}_j =\sum_{j=1}^{F_l}W^{(l+1)}_{i,j}\rho^*\left(a^{(l)}_j\right)=\sum_{j=1}^{F_l}W^{(l+1)}_{i,j}\rho^*\left(\sum_{k=1}^{F_{l-1}}W^{(l)}_{j,k}z^{(l-1)}_k\right) \nonumber \\
		&=\sum_{j=1}^{F_l}W^{(l+1)}_{i,j}\sum_{m=1}^{\tau}\boldsymbol{C}_m\rho\left(\boldsymbol{\Omega}_m\sum_{k=1}^{F_{l-1}}W^{(l)}_{j,k}z^{(l-1)}_k+\boldsymbol{\Phi}_m\right)
	\end{align}
	
	Next, let us calculate $\Tilde{a}^{(l+1)}$. We have
	\begin{align}
		\overline{a^{(l)}_p}&=\left[\overline{W^{(l)}}z^{(l-1)}+\boldsymbol{\overline{B^{(l)}}}\right]_p = \boldsymbol{\overline{W^{(l)}}}_{p,:}\boldsymbol{z^{(l-1)}}+\boldsymbol{\overline{B^{(l)}}}_p = \sum_{k=1}^{F_{l-1}}\left(\boldsymbol{\overline{W^{(l)}}}_{p,k}\boldsymbol{z^{(l-1)}}_k\right) +\boldsymbol{\overline{B^{(l)}}}_p \nonumber \\
  &= \sum_{k=1}^{F_{l-1}}\left(\boldsymbol{\Omega}_{\lceil p/F_l\rceil,\lceil k/F_{l-1}\rceil}W^{(l)}_{1+(p-1)\%F_l,1+(k-1)\%F_{l-1}}z^{(l-1)}_k\right)+\boldsymbol{\Phi}_{\lceil p/F_l\rceil} \label{ceil_usage}
	\end{align}
    Equation \eqref{ceil_usage} is based on equation \eqref{Kronecker_product_indices}. Since $1\leq k\leq F_{l-1}$, it follows that $\lceil k/F_{l-1}\rceil=1$ and $(k-1)\%F_{l-1}=k-1$. As a result,
 \begin{align}
     \overline{a^{(l)}_p}&=\sum_{k=1}^{F_{l-1}}\left(\boldsymbol{\Omega}_{\lceil p/F_l\rceil}W^{(l)}_{1+(p-1)\%F_l,k}z^{(l-1)}_k\right)+\boldsymbol{\Phi}_{\lceil p/F_l\rceil} \nonumber \\
     &=\boldsymbol{\Omega}_{\lceil p/F_l\rceil}\sum_{k=1}^{F_{l-1}}\left(W^{(l)}_{1+(p-1)\%F_l,k}z^{(l-1)}_k\right)+\boldsymbol{\Phi}_{\lceil p/F_l\rceil}
 \end{align}
    Therefore,
	\begin{equation} \label{z_p^{(l)}}
    \overline{z^{(l)}_p}=\rho\left(\boldsymbol{\Omega}_{\lceil p/F_l\rceil}\sum_{k=1}^{F_{l-1}}\left(W^{(l)}_{1+(p-1)\%F_l,k}z^{(l-1)}_k\right)+\boldsymbol{\Phi}_{\lceil p/F_l\rceil}\right)
  \end{equation}
	Consequently,
	\begin{align}
		\Tilde{a}^{(l+1)} &= \sum_{p=1}^{\tau F_l}\left[\boldsymbol{C}^{tr}\otimes W^{(l+1)}_{i,:}\right]_{1,p}\overline{z^{(l)}_p} = \sum_{p=1}^{\tau F_l}\boldsymbol{C}^{tr}_{1,\lceil p/F_l\rceil}W^{(l+1)}_{i,1+(p-1)\%F_l} \overline{z^{(l)}_p} \nonumber \\
        &= \sum_{p=1}^{\tau F_l}\boldsymbol{C}_{\lceil p/F_l\rceil}W^{(l+1)}_{i,1+(p-1)\%F_l} \overline{z^{(l)}_p} \label{single_sum} \\ &=\sum_{j=1}^{F_l}\sum_{m=1}^{\tau}\boldsymbol{W}^{(l+1)}_{i,j}\boldsymbol{C}_m\overline{z^{(1)}_{F_l(m-1)+j}} \label{double_sum}
	\end{align}
	Equation \eqref{double_sum} is obtained as follows: by changing the indices of $\boldsymbol{W}$ and $\boldsymbol{C}$ from equation \eqref{single_sum} to \eqref{double_sum}, we need to change the index of $z^{(l)}$ too. To this end, note that
	\begin{equation} \label{changing_vars}
		m=\lceil p/F_l\rceil, \quad j=1+(p-1)\%F_{l}
	\end{equation}
	If $F_{l}\nmid p$, then $m=1+\lfloor p/F_l \rfloor$. As we know, $p=F_l\lfloor p/F_l \rfloor+p\%F_l$. Therefore, $p=F_l(m-1)+j$. This equation also holds when $F_l\mid p$.
	
	Equation \eqref{double_sum} can be rewritten as follows:
	\begin{align}
		&\sum_{j=1}^{F_l}\boldsymbol{W}^{(l+1)}_{i,j}\sum_{m=1}^{\tau}\boldsymbol{C}_m \overline{z^{(l)}_{F_l(m-1)+j}}
	\end{align}
    where, according to equations \eqref{z_p^{(l)}} and \eqref{changing_vars},
    \begin{equation}
        \overline{z^{(l)}_{F_l(m-1)+j}}=\rho\left(\boldsymbol{\Omega}_m\sum_{k=1}^{F_{l-1}}\left(W^{(l)}_{j,k}z^{(l-1)}_k\right)+\boldsymbol{\Phi}_m\right)
    \end{equation}
    Hence,
    \begin{align}
        \Tilde{a}^{(l+1)}= \sum_{j=1}^{F_l}\boldsymbol{W}^{(l+1)}_{i,j}\sum_{m=1}^{\tau}\boldsymbol{C}_m \rho\left(\boldsymbol{\Omega}_m\sum_{k=1}^{F_{l-1}}\left(W^{(l)}_{j,k}z^{(l-1)}_k\right)+\boldsymbol{\Phi}_m\right)
    \end{align}
    which is equal to $a^{(l+1)}_i$ based on \eqref{a^{(l+1)}_i}.
\end{proof}

\textbf{Part 2)} Let $\boldsymbol{\overline{B^{(l+1)}}}=\boldsymbol{\Phi}\otimes \boldsymbol{J}_{F_{l+1},1}$. We can define $\overline{a^{(l+1)}}$ as follows:
\begin{equation} \label{vector_form}
	\begin{bmatrix}
		\overline{a^{(l+1)}_1} & \overline{a^{(l+1)}_2} & \ldots & \overline{a^{(l+1)}_{\tau(F_{l+1})}}
	\end{bmatrix}^{tr}
	= \boldsymbol{\Omega}\otimes\boldsymbol{a^{(l+1)}} + \boldsymbol{\overline{B^{(l+1)}}}.
\end{equation}

Therefore, using \Cref{next_layer_preactivation_1,next_layer_preactivation_2,vector_form}, we can write
\begin{equation}
	\overline{a^{(l+1)}} = \overline{W^{(l+1)}}~\overline{z^{(l)}}+\boldsymbol{\overline{B^{(l+1)}}}
\end{equation},
where
\begin{equation}
	\overline{W^{(l+1)}}=\boldsymbol{\Omega}\otimes\left(\boldsymbol{C}^{tr}\otimes W^{(l+1)}\right)=\left(\boldsymbol{\Omega}\otimes\boldsymbol{C}^{tr}\right)\otimes W^{(l+1)}.
\end{equation}

Moreover, if we define
\begin{equation}
	\overline{z^{(l+1)}_q} = \rho\left(\overline{a^{(l+1)}_q}\right)\quad\forall~q\in\{1,\ldots,\tau(F_{l+1})\},
\end{equation}
we can observe that
\begin{equation} \label{final}
    {\boldsymbol{z^{(l+1)}}}=\left(\boldsymbol{C}^{tr}\otimes\boldsymbol{I}_{F_{l+1}}\right)\overline{\boldsymbol{z^{(l+1)}}}.
\end{equation}

\begin{proof}
    We know that
    \begin{equation}
        z^{(l+1)}_i=\rho^*(a^{(l+1)}_i)=\sum_{n=1}^{\tau}\rho\left(\boldsymbol{\Omega}_n a^{(l+1)}_i +\boldsymbol{\Phi}_n\right).
    \end{equation}
    Now, let us calculate each entry of the RHS of Equation \eqref{final}
    \begin{equation} \label{ith_entry}
        \left[\left(\boldsymbol{C}^{tr}\otimes\boldsymbol{I}_{F_{l+1}}\right)\overline{\boldsymbol{z^{(l+1)}}}\right]_i = \left[\boldsymbol{C}^{tr}\otimes\boldsymbol{I}_{F_{l+1}}\right]_i\overline{\boldsymbol{z^{(l+1)}}} = \sum_{j=1}^{\tau F_{l+1}}\left(\boldsymbol{C}^{tr}\otimes\boldsymbol{I}_{F_{l+1}}\right)_{i,j}\overline{\boldsymbol{z^{(l+1)}}_j}.
    \end{equation}
    Hence, according to \eqref{Kronecker_product_indices}, we have
    \begin{equation}
        \left[\left(\boldsymbol{C}^{tr}\otimes\boldsymbol{I}_{F_{l+1}}\right)\overline{\boldsymbol{z^{(l+1)}}}\right]_i = \sum_{j=1}^{\tau F_{l+1}}\boldsymbol{C}^{tr}_{\lceil i/F_{l+1} \rceil,\lceil j/F_{l+1}\rceil}\delta_{1+(i-1)\%F_{l+1},1+(j-1)\%F_{l+1}}\overline{\boldsymbol{z^{(l+1)}}_j},
    \end{equation}
    in which $\delta$ refers to Kronecker delta. As a result,
    \begin{equation}
        \left[\left(\boldsymbol{C}^{tr}\otimes\boldsymbol{I}_{F_{l+1}}\right)\overline{\boldsymbol{z^{(l+1)}}}\right]_i = \sum_{j=1}^{\tau F_{l+1}}\boldsymbol{C}_{\lceil j/F_{l+1} \rceil,\lceil i/F_{l+1}\rceil}\delta_{1+(i-1)\%F_{l+1},1+(j-1)\%F_{l+1}}\overline{\boldsymbol{z^{(l+1)}}_j}
    \end{equation}
    Note that $1 \leq i\leq F_{l+1}$. Therefore, $\lceil i/F_{l+1}\rceil=1$, and $(i-1)\%F_{l+1}=i-1$. Hence,
    \begin{equation}
        \left[\left(\boldsymbol{C}^{tr}\otimes\boldsymbol{I}_{F_{l+1}}\right)\overline{\boldsymbol{z^{(l+1)}}}\right]_i = \sum_{j=1}^{\tau F_{l+1}}\boldsymbol{C}_{\lceil j/F_{l+1} \rceil}\delta_{i,1+(j-1)\%F_{l+1}}\overline{\boldsymbol{z^{(l+1)}}_j}.
    \end{equation}
    Also note that $\delta_{i,1+(j-1)\%F_{l+1}}$ is zero, except when $j=kF_{l+1}+i$, in which case $\delta_{i,1+(j-1)\%F_{l+1}}=1$. Thus,
    \begin{align} \label{RHS}
        &\left[\left(\boldsymbol{C}^{tr}\otimes\boldsymbol{I}_{F_{l+1}}\right)\overline{\boldsymbol{z^{(l+1)}}}\right]_i = \sum_{k=0}^{\tau-1}\boldsymbol{C}_{\lceil\left(kF_{l+1}+i\right)/F_{l+1} \rceil}\overline{\boldsymbol{z^{(l+1)}}_{kF_{l+1}+i}} = \sum_{k=0}^{\tau-1}\boldsymbol{C}_{k+\lceil i/F_{l+1} \rceil}\overline{\boldsymbol{z^{(l+1)}}_{kF_{l+1}+i}} \nonumber \\
        &= \sum_{k=0}^{\tau-1}\boldsymbol{C}_{k+1}\overline{\boldsymbol{z^{(l+1)}}_{kF_{l+1}+i}} = \sum_{n=1}^{\tau}\boldsymbol{C}_{n}\overline{\boldsymbol{z^{(l+1)}}_{(n-1)F_{l+1}+i}} = \sum_{n=1}^{\tau}\boldsymbol{C}_{n}\rho\left(\overline{\boldsymbol{a^{(l+1)}}_{(n-1)F_{l+1}+i}}\right).
    \end{align}

    Note that
    \begin{align}
        \overline{\boldsymbol{a^{(l+1)}}_{(n-1)F_{l+1}+i}} &=\boldsymbol{\Omega}_{\lceil\left((n-1)F_{l+1}+i\right)/F_{l+1}\rceil}\boldsymbol{a^{(l+1)}}_{1+\left((n-1)F_{l+1}+i-1\right)\%F_{l+1}}+\boldsymbol{\Phi}_{\lceil\left((n-1)F_{l+1}+i\right)/F_{l+1}\rceil} \nonumber\\
        &=\boldsymbol{\Omega}_{n-1+\lceil i/F_{l+1}\rceil}\boldsymbol{a^{(l+1)}}_{1+\left(i-1\right)\%F_{l+1}}+\boldsymbol{\Phi}_{n-1+\lceil i/F_{l+1}\rceil}
    \end{align}
    Since $\left\lceil\frac{i}{F_{l+1}}\right\rceil = 1$ and $(i-1)\%F_{l+1} = i-1$, we have
    \begin{equation} \label{overline{boldsymbol{a^{(l+1)}}_{(n-1)F_{l+1}+i}}}
        \overline{\boldsymbol{a^{(l+1)}}_{(n-1)F_{l+1}+i}}=\boldsymbol{\Omega}_n\boldsymbol{a^{(l+1)}}_i+\boldsymbol{\Phi}_n
    \end{equation}
    Finally, utilizing Equations \eqref{RHS} and \eqref{overline{boldsymbol{a^{(l+1)}}_{(n-1)F_{l+1}+i}}}, we deduce that
    \begin{equation}
        \left[\left(\boldsymbol{C}^{tr}\otimes\boldsymbol{I}_{F_{l+1}}\right)\overline{\boldsymbol{z^{(l+1)}}}\right]_i = \sum_{n=1}^{\tau}\boldsymbol{C}_{n}\rho\left(\boldsymbol{\Omega}_n\boldsymbol{a^{(l+1)}}_i+\boldsymbol{\Phi}_n\right),
    \end{equation}
    which is equal to the RHS of the Equation \eqref{final}.

    \textbf{Part 3)} Using parts 1 and 2 of the proof, we can state the theorem for arbitrary even values of $L$. By setting $l = 1$ in the previous parts, we obtain
    \begin{equation}
        \boldsymbol{\overline{{W}^{(1)}}} = \boldsymbol{\Omega}\otimes\boldsymbol{W}^{(1)}, \quad \boldsymbol{\overline{{B}^{(1)}}}=\boldsymbol{\Phi}\otimes \boldsymbol{J}_{F_1,1}
    \end{equation}
    and
    \begin{equation}
        \boldsymbol{\overline{{W}^{(2)}}} = \left(\boldsymbol{\Omega}\otimes\boldsymbol{C}^{tr}\right)\otimes\boldsymbol{W}^{(2)},\quad \boldsymbol{\overline{{B}^{(2)}}}=\boldsymbol{\Phi}\otimes \boldsymbol{J}_{F_2,1}.
    \end{equation}
    Thus,
    \begin{equation}
        \overline{\boldsymbol{W}^{(l)}} = \begin{cases}
			\boldsymbol{\Omega}\otimes\boldsymbol{W}^{(l)}, & \text{if $l=1$}\\
            \left(\boldsymbol{\Omega}\otimes\boldsymbol{C}^{tr}\right)\otimes\boldsymbol{W}^{(l)}, & \text{if $l=2$}
		 \end{cases},\quad \boldsymbol{\overline{{B}^{(l)}}}=\boldsymbol{\Phi}\otimes \boldsymbol{J}_{F_l,1}.
    \end{equation}
    In addition, by setting $L=2$, we will have $\overline{f}_{\overline{\theta}}(\boldsymbol{r})=\overline{\boldsymbol{W}^{(3)}}~\overline{\boldsymbol{z}^{(2)}}$. Note that according to the assumptions of the theorem, $\overline{\boldsymbol{W}^{(3)}}=\boldsymbol{C}^{tr}\otimes\boldsymbol{I}_{F_2}$. As a result, $\overline{f}_{\overline{\theta}}(\boldsymbol{r})=\overline{\boldsymbol{W}^{(3)}}~\overline{\boldsymbol{z}^{(2)}}=\left(\boldsymbol{C}^{tr}\otimes\boldsymbol{I}_{F_2}\right)\overline{\boldsymbol{z}^{(2)}}$, which is equal to $\boldsymbol{z}^{(2)}=f_\theta(\boldsymbol{r})$, as derived in \eqref{final}. \eqref{final}. In conclusion, the theorem holds true for $L=2$.
    
    Now, suppose that Equation \eqref{weights_and_biases} holds for $L=2k$. Consequently,
    \begin{equation} \label{induction_h}
        \boldsymbol{z^{(2k)}}=\left(\boldsymbol{C}^{tr}\otimes\boldsymbol{I}_{F_{2k}}\right)\overline{\boldsymbol{z^{(2k)}}}
    \end{equation}
    Now, we aim to analyze the case for $L=2(k+1)$. For this network with two additional layers, we first need to adjust the weight matrix for layer $l=2k+1$. The new weight matrix will be
    \begin{equation}
        \overline{\boldsymbol{W}^{(2k+1)}} = \left(\boldsymbol{\Omega}\otimes\boldsymbol{W}^{(2k+1)}\right)\left(\boldsymbol{C}^{tr}\otimes\boldsymbol{I}_{F_{2k}}\right),
    \end{equation}
    and the weights and the biases of the two new layers will be
    \begin{align}
        \overline{\boldsymbol{W}^{(2k+2)}} = \left(\boldsymbol{\Omega}\otimes\boldsymbol{C}^{tr}\right)\otimes\boldsymbol{W}^{(2k+2)},\quad \boldsymbol{\overline{B}^{(2k+2)}}=\boldsymbol{\Phi}\otimes \boldsymbol{J}_{F_{2k+2},1},\nonumber \\
        \overline{\boldsymbol{W}^{(2k+3)}} = \boldsymbol{C}^{tr}\otimes\boldsymbol{I}_{F_{2k+2}},\quad \boldsymbol{\overline{B}^{(2k+3)}}=\boldsymbol{\Phi}\otimes \boldsymbol{J}_{F_{2k+3},1}.
    \end{align}

    Now, note that
    \begin{equation}
        \overline{\boldsymbol{W}^{(2k+1)}}~\overline{\boldsymbol{z}^{(2k)}} = \left(\boldsymbol{\Omega}\otimes\boldsymbol{W}^{(2k+1)}\right)\left(\boldsymbol{C}^{tr}\otimes\boldsymbol{I}_{F_{2k}}\right)\overline{\boldsymbol{z}^{(2k)}}.
    \end{equation}
    Therefore, by setting $l=2k-1$ in Equation \eqref{final}, or using Equation \eqref{induction_h}, we obtain
    \begin{equation}
        \overline{\boldsymbol{W}^{(2k+1)}}~\overline{\boldsymbol{z}^{(2k)}} = \left(\boldsymbol{\Omega}\otimes\boldsymbol{W}^{(2k+1)}\right)\boldsymbol{z}^{(2k)}
    \end{equation}
    This is analogous to feeding $\boldsymbol{z}^{(2k)}$ into a neural network whose first layer has the weight matrix $\boldsymbol{\Omega}\otimes\boldsymbol{W}^{(2k+1)}$.  Since the additional weight matrices and biases are consistent with Parts 1 and 2 of the proof, we can conclude that
    \begin{equation}
        \overline{f}_{\overline{\theta}}(\boldsymbol{r})=\boldsymbol{z}^{(2k+2)}=f_\theta(\boldsymbol{r}).
    \end{equation}
\end{proof}

\subsection{Proof of Lemma \eqref{rational_elusive}}
\label{app:proof_rational_elusive}
\begin{proof}
    Let $[a_{r,1},a_{r,2},\ldots,a_{r,T}]\in\mathbb{Q}^T$ be the $r$'th row of $\boldsymbol{\Psi}^{tr}$. Now, define a matrix $\boldsymbol{\hat{A}}$ which is identical to $\boldsymbol{A}$ except for its $r$'th row. This modified row is constructed as follows:
\begin{equation}
    \hat{a}_{r,i} = \frac{\sqrt{p_i}}{10^{-\eta}\lfloor 10^\eta\sqrt{p_i}\rfloor}\left(\psi_{r,i}+\epsilon[\psi_{r,i}=0]\right)
\end{equation}
in which $p_i$ is the $i$'th prime number, $\epsilon$ is the machine precision, $[.]$ is Iverson bracket, and $\eta$ is a large enough natural number such that $\frac{\sqrt{p_i}}{10^{-\eta}\lfloor 10^\eta\sqrt{p_i}\rfloor}\approx 1$ (to avoid significant changes in the matrix). At the same time, we must have $|\frac{\sqrt{p_i}}{10^{-\eta}\lfloor 10^\eta\sqrt{p_i}\rfloor}-1|\geq \epsilon$ (to prevent it from becoming a rational number).

Let $\alpha_i:=\frac{\hat{a}_{r,i}}{\sqrt{p_i}}$. Then, $\alpha_i\in\mathbb{Q}\setminus\{0\}$. Now assume that there is $S=[s_1,...,s_T]^{tr}\in Ker(\boldsymbol{\hat{A}})\cap \mathbb{Q}^{T}$. Consequently,
\begin{equation}
    \sum_{i=1}^{T}\hat{a}_{r,i}s_i=0
\end{equation}
As a result,
\begin{equation}
    \sum_{i=1}^{T}\alpha_i\sqrt{p_i}s_i=0
\end{equation}
Note that $\alpha_i s_i\in\mathbb{Q}$. Furthermore, The square roots of all prime numbers are linearly independent over $\mathbb{Q}$ \citep{stewart2022galois}. As a result, $\alpha_i s_i=0$ for all $i$. Since $\alpha_i\neq 0$, we must have $s_i=0$ for all $i$, that is, $Ker(\boldsymbol{\hat{A}})\cap \boldsymbol{Q}^T=\boldsymbol{O}$.\footnote{Note that all algebraic numbers are computable. This analysis was founded on the computability and expressibility of the square roots of prime numbers in a machine. However, most of the computable numbers are rounded or truncated when stored in a machine. Nevertheless,  it is possible to demonstrate theoretically or through simulation that increasing precision can make the aforementioned analysis always feasible.}
\end{proof}


\section{Exact Expressive Power of a 2-layer STAF Network}

In Theorem~\ref{expressive_power}, we discussed the set of potential frequencies. 
Following ~\cite{novello2025tuner}, one can determine the exact set of frequencies using the Jacobi–Anger expansion. 
By Theorem~\ref{Kronecker_theorem}, STAF admits a Kronecker-equivalent sinusoidal network. 
This allows us to directly apply the following results of ~\cite{novello2025tuner}.

\begin{theorem}[{\cite[Thm.~1]{novello2025tuner}}] \label{TUNER_Thm1}
Each hidden neuron $h_i$ of a 3-layer sinusoidal MLP has an amplitude-phase expansion of the form
\[
h_i(x) = \sum_{\mathbf{k}\in \mathbb{Z}^m} \alpha_{\mathbf{k}} \, \sin(\beta_{\mathbf{k}} x + \lambda_{\mathbf{k}}),
\]
where $\beta_{\mathbf{k}} = \langle \mathbf{k}, \omega\rangle$, $\lambda_{\mathbf{k}} = \langle \mathbf{k}, \varphi\rangle + b_i$, and
\[
\alpha_{\mathbf{k}} = \prod_j J_{k_j}(W_{ij}),
\]
with $J_k$ the Bessel functions of the first kind.
\end{theorem}

\begin{theorem}[{\cite[Thm.~2]{novello2025tuner}}] \label{TUNER_Thm2}
The magnitudes of the amplitudes in the above expansion satisfy
\[
|\alpha_{\mathbf{k}}| \leq \prod_{j=1}^m 
\left( \frac{|W_{ij}|}{2} \right)^{|k_j|}
\frac{1}{|k_j|!}.
\]
\end{theorem}

In our setting, the Kronecker equivalence established in Theorem~\ref{Kronecker_theorem} implies that a 2-layer STAF network corresponds to a 3-layer sinusoidal MLP with weights
\[
\omega = \Omega \otimes W^{(1)} \in \mathbb{R}^{\tau F_1 \times F_0},
\qquad 
W = (\Omega \otimes C_S^{tr}) \otimes W^{(2)};
\]
where $\omega \in \mathbb{R}^{m\times d}$. Thus, we have $m = \tau F_1$ and $d = F_0$. 
By adapting the above theorems to our notation, we obtain
\begin{equation}
    \alpha_{\mathbf{k}} 
= \prod_j J_{k_j}\!\left( 
\Omega_{\lceil i/F_2 \rceil,1} 
\cdot (C_S)_{\lceil j/F_1 \rceil \% \tau + 1,\,1} 
\cdot W^{(2)}_{\,i \% F_2,\, j \% F_1}
\right),
\end{equation}
and the amplitude bound
\begin{equation}
    \alpha_{\mathbf{k}} 
\leq \prod_{j=1}^{\tau F_1} 
\left( 
  \frac{(C_S)_{\lceil j/F_1 \rceil \% \tau + 1,\,1}\,
        W^{(2)}_{\, i \% F_2,\, j \% F_1}}{2}
\right)^{|k_j|}
\frac{1}{|k_j|!}.
\end{equation}

\begin{proof}
Let $C_S$ denote the matrix $C$ in Theorem~\ref{Kronecker_theorem}, and $C_T$ denote the matrix $C$ in TUNER’s notation. Similarly, we use $W$ (without superscripts or bars) to refer to weights in TUNER, while $(\overline{W^{(\cdot)}}
$ follows STAF’s notation. Then
\begin{equation}
    W = \overline{W^{(2)}} = (\Omega \otimes C_S^{tr}) \otimes W^{(2)}, 
    \qquad 
    C_T = \overline{W^{(3)}} = C_S^{tr} \otimes I_{F_2}.
\end{equation}

As a result, based on \ref{TUNER_Thm1}, the resulting function can be written as
\begin{equation}
    h_i(x) = \sum_{\mathbf{k} \in \mathbb{Z}^m} 
    \alpha_{\mathbf{k}} \, \sin(\beta_{\mathbf{k}} x + \lambda_{\mathbf{k}}),
\end{equation}
where
\begin{align} \label{alpha_k}
    \alpha_{\mathbf{k}} 
    &= \prod_j J_{k_j}\!\left( W_{i,j} \right)
\end{align}
Note that based on the \eqref{Kronecker_product_indices},
\begin{align} \label{W_ij}
    W_{ij} &= \big( (\Omega \otimes C_S^{tr}) \otimes W^{(2)} \big)_{ij} \nonumber \\
    &= (\Omega \otimes C_S^{tr})_{\lceil i/F_2 \rceil,\, \lceil j/F_1 \rceil} 
        \cdot W^{(2)}_{(i \% F_2)+1,\, (j \% F_1)+1} \nonumber \\
    &= (\Omega \otimes C_S^{tr})_{\lceil i/F_2 \rceil,\, \lceil j/F_1 \rceil / \tau}
  \cdot (C_S^{tr})_{1,\, (\lceil j/F_1 \rceil \% \tau) + 1}
  \cdot W^{(2)}_{(i \% F_2)+1,\,(j \% F_1)+1} \nonumber \\
  &= \Omega_{\left\lceil \tfrac{i}{F_2} \right\rceil,1} \cdot (C_S)_{\left\lceil \tfrac{j}{F_1} \right\rceil \% \tau + 1,\,1} \cdot W^{(2)}_{\,i \% F_2,\, j \% F_1}
\end{align}
Consequently, according to \eqref{alpha_k} and \eqref{W_ij},
\begin{align}
    &\alpha_{\mathbf{k}} = \prod_{j} J_{k_j} \Bigl( \Omega_{\left\lceil \tfrac{i}{F_2} \right\rceil,1} \cdot (C_S)_{\left\lceil \tfrac{j}{F_1} \right\rceil \% \tau + 1,\,1} \cdot W^{(2)}_{\,i \% F_2,\, j \% F_1} \Bigr).
\end{align}

In addition, the frequency term can be written as
\begin{equation}
    \beta_{\mathbf{k}} 
= \langle\mathbf{k}, \omega \rangle
= \big\langle \mathbf{k},\, \Omega \otimes W^{(1)} \big\rangle
= \sum_{\ell=1}^m k_\ell \, (\Omega \otimes W^{(1)})_\ell.
\end{equation}

\medskip
\noindent
Based on Theorem \ref{TUNER_Thm2}, the amplitudes satisfy
\begin{equation}
    \alpha_{\mathbf{k}} 
\leq \prod_{j=1}^m 
\left( \frac{|W_{ij}|}{2} \right)^{|k_j|}
\frac{1}{|k_j|!}.
\end{equation}

Therefore, using \eqref{W_ij},
\begin{equation}
    \alpha_{\mathbf{k}}
\leq \prod_{j=1}^{\tau F_1} 
\left( 
  \frac{|\Omega_{\left\lceil i/F_2 \right\rceil,1}\,(C_S)_{\lceil j/F_1 \rceil \% \tau + 1,\,1}\,
        W_{i \% F_2,\, j \% F_1}^{(2)}|}{2}
\right)^{|k_j|}
\frac{1}{|k_j|!}.
\end{equation}

Grouping terms, we have
\begin{equation}
    \prod_{j=1}^{\tau F_1} 
\bigl( (C_S)_{\lceil j/F_1 \rceil \% \tau + 1,\,1} \bigr)^{|k_j|}
= \prod_{i=1}^\tau (C_S)_{i,1}^{\,\sum_{j=1}^{\tau F_1} |k_j|}.
\end{equation}
Therefore,
\begin{equation}
    \alpha_{\mathbf{k}} \leq 
\left| \Omega_{\lceil i/F_2 \rceil, 1} 
\prod_{i=1}^{\tau} (C_S)_{i,1} \right|^{\sum_{j=1}^{\tau F_1} |k_j|}
\prod_{j=1}^{\tau F_1} \left( \frac{|W_{\,i \% F_2, \, j \% F_1}^{(2)}|}{2} \right)^{|k_j|} 
\frac{1}{|k_j|!}
\end{equation}
\end{proof}

\section{An Alternative Proof of the Asymptotic Behavior of Delannoy Numbers} \label{Asymptotic_proof}
\begin{proof}
	Let $V(A,B)$ denote a Delannoy number. This number represents the number of lattice points inside and on the $L1$ sphere (or equivalently, the cells in the von Neumann neighborhood) in $A$ dimensions with radius $B$. We also know that one of the properties of these numbers is symmetry with respect to the arguments \cite{kiselman2012asymptotic}; that is, for any $A$ and $B$, we have:
	\begin{equation}
		V(A,B) = V(B,A).
	\end{equation}
	
	Hence,
	\begin{equation}
		\frac{|V(\tau T,K)|}{|V(T,K)|} = \frac{|V(K,\tau T)|}{|V(K,T)|}.
	\end{equation}
	The second ratio means that if the radius of the sphere is scaled by $\tau$, the number of lattice points inside and on the $L1$ sphere is multiplied accordingly. It is known that the number of lattice points can be approximated using the volume of the sphere. Moreover, when the radius is scaled by $\tau$ in $K$ dimensions, the volume of an object is multiplied by $\tau^K$. Consequently, the number of lattice points also grows approximately by a factor of $\tau^K$. Therefore,
	\begin{equation}
		\frac{|V(\tau T,K)|}{|V(T,K)|} \sim \tau^K.
	\end{equation}
\end{proof}

\section{Discussion and Limitations}
\textbf{When does STAF help?} Tasks whose target signals contain mixed frequencies or repetitive fine detail benefit most, especially when positional encodings are absent.

\textbf{Tuning.} Very large $\tau$ increases compute and may slow late-phase convergence. We recommend small $\tau$ with layer-wise sharing and the unit-variance initialization.

\textbf{What we do not claim.} We do not claim to eliminate spectral bias or to dominate all baselines on all metrics. Our results indicate consistent improvements in convergence and fidelity across diverse settings, with clear but bounded limitations.

\added{\textbf{Perceptual quality.} While STAF consistently improves distortion-based metrics such as PSNR and SSIM, it does not always achieve the best LPIPS. We therefore distinguish between distortion fidelity and perceptual quality, rather than claiming uniform superiority across reconstruction metrics. In an additional image-fitting experiment on~\cref{fig:celtic}, STAF trained with the original MSE objective achieves 40.47 PSNR, 0.9804 SSIM, and 0.00577 LPIPS, whereas adding an LPIPS term improves LPIPS to 0.00305 but reduces PSNR and SSIM to 37.47 and 0.9616, respectively. This behavior is consistent with the standard distortion--perception trade-off: optimizing more strongly for perceptual similarity can improve LPIPS, but often at the expense of pixel-wise reconstruction fidelity. These results suggest that perceptual or hybrid objectives may provide a more suitable operating point when perceptual quality is prioritized.}

\begin{figure}[t]
    \centering
    \includegraphics[width=0.8\textwidth]{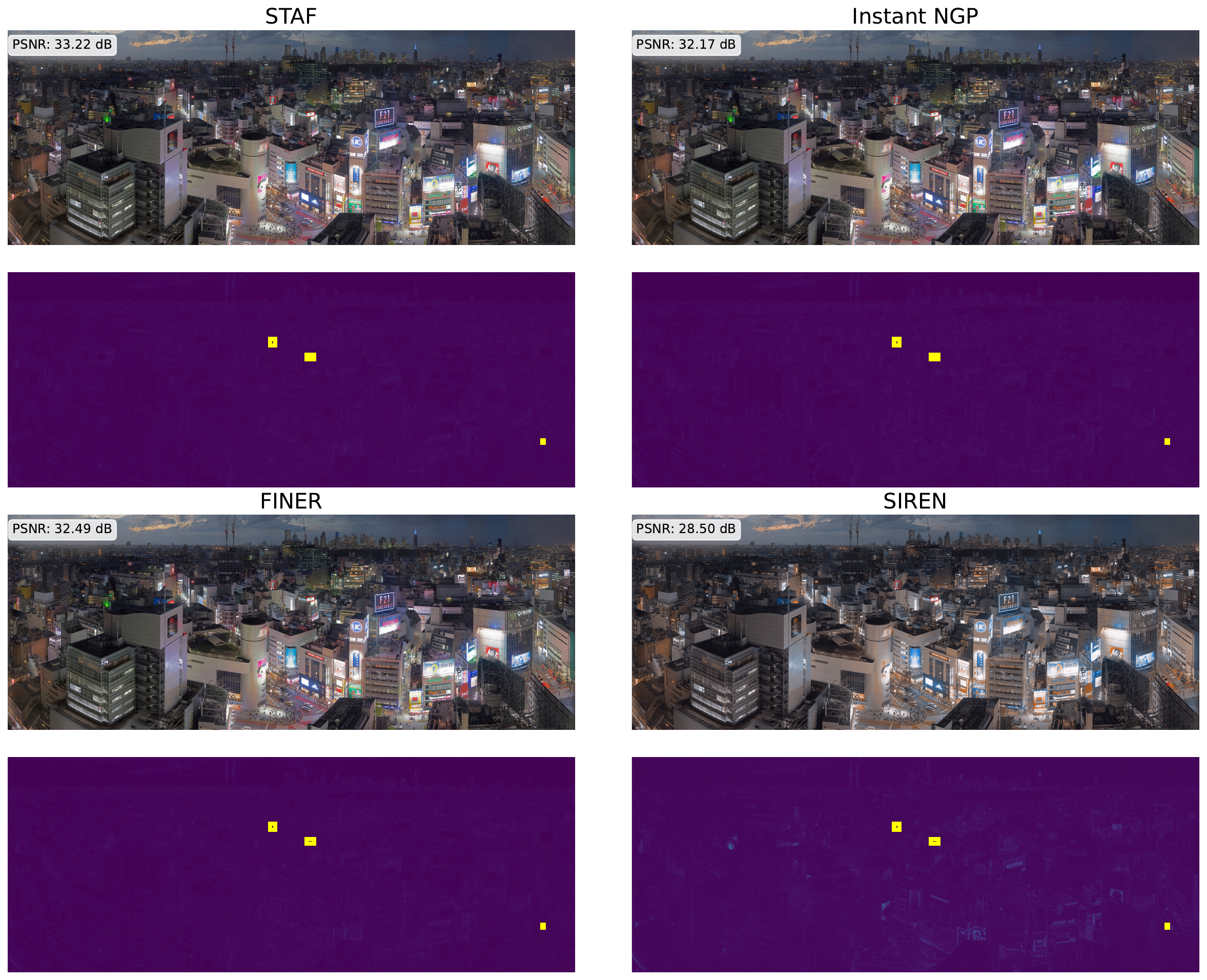}
    \includegraphics[width=0.8\textwidth]{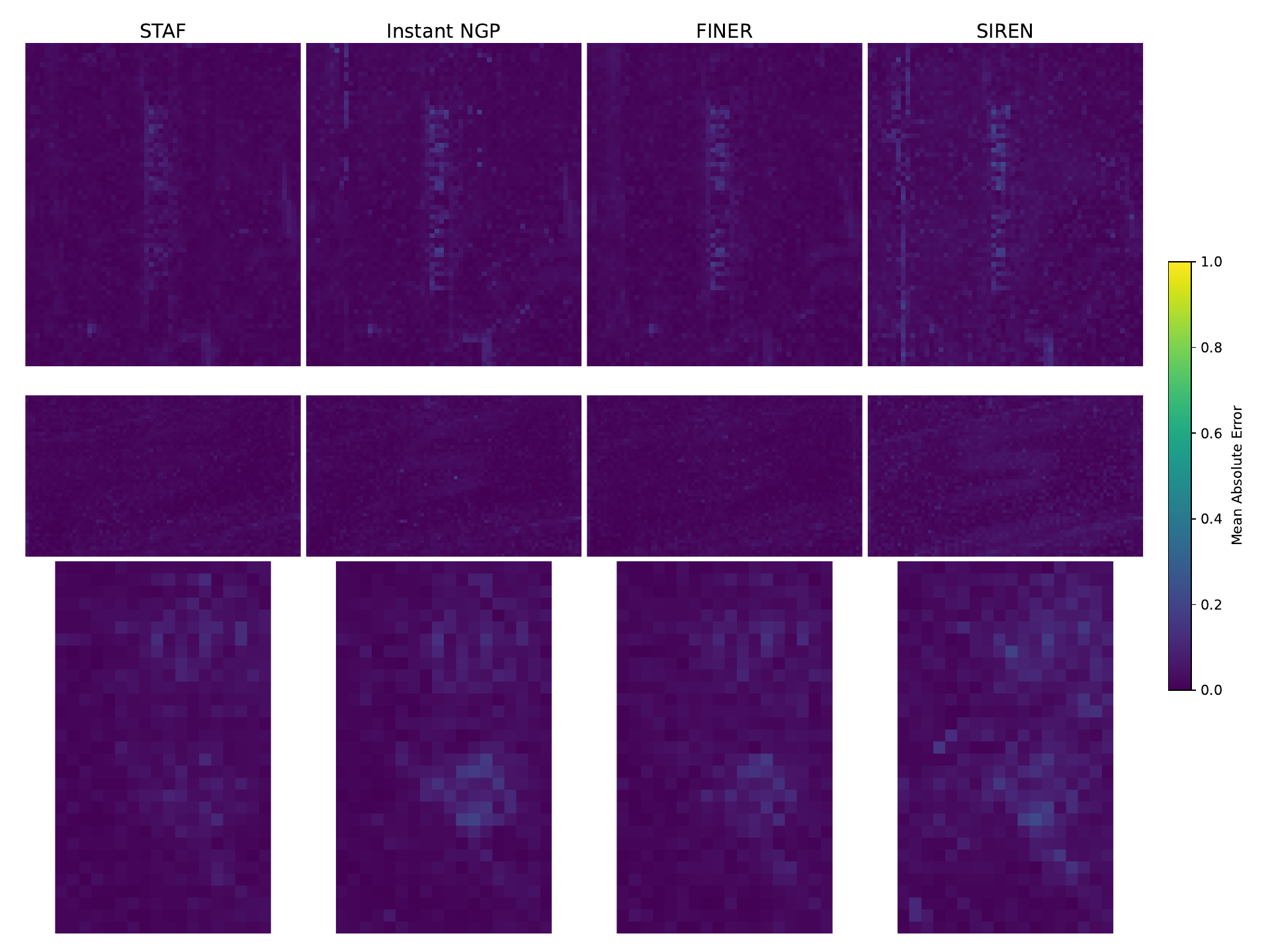}
    \caption{Comparative visualization of Tokyo image reconstruction and their error map using \textbf{STAF} and other activation functions. Yellow zoom-in regions are displayed in the last three rows.}
    \label{fig:tokyo}
\end{figure}

\end{document}